\DeclareMathOperator{\tr}{tr}
\newtheorem{prop}{Proposition}
\newtheorem{Remark}{Remark}
\newtheorem{theorem}{Theorem}
\newtheorem{lemma}{Lemma}
\newtheorem{assumption}{Assumption}
 \newcounter{experiment}[section]
\begin{document}

\title{Learning Latent Features with Pairwise Penalties \\in Low-Rank Matrix Completion}

\author{Kaiyi~Ji, Jian~Tan, Jinfeng~Xu and Yuejie~Chi,~\IEEEmembership{Senior Member,~IEEE} 
\thanks{K. Ji and J. Tan are with Department of Electrical and Computer Engineering, The Ohio State University; e-mail: {\tt\{ji.367, tan.252\}@osu.edu}.
This work is supported in part by the National Science Foundation under Grant No. 1717060.}
\thanks{J. Xu is with Department of Statistics and Actuarial Science, The University of Hong Kong; e-mail: {\tt xujf@hku.hk}.}
\thanks{Y. Chi is with Department of Electrical and Computer Engineering, Carnegie Mellon University; e-mail: {\tt yuejiechi@cmu.edu}. The work of Y. Chi is supported in part by ONR under the grant N00014-18-1-2142, by ARO under the grant ARO W911NF-18-1-0303, and by NSF under CIF-1806154, CIF-1826519 and ECCS-1818571.}
}

\maketitle

\begin{abstract} 

Low-rank matrix completion has achieved great success in many real-world data applications.
A matrix factorization model that learns latent features is usually employed and, to improve prediction performance, the 
similarities between latent variables can be exploited by pairwise learning using the graph regularized 
matrix factorization (GRMF) method. However, existing GRMF approaches often use the squared loss to measure the pairwise differences, which may be overly influenced by dissimilar pairs and lead 
to inferior prediction. To fully empower pairwise learning for matrix completion, we propose a general 
optimization framework that allows a rich class of (non-)convex pairwise penalty functions. A new 
and efficient algorithm is developed to solve the proposed optimization problem, with a theoretical 
convergence guarantee under mild assumptions. In an important situation where the latent variables form a small 
number of subgroups, its statistical guarantee is also fully considered. In particular, we theoretically 
characterize the performance of the complexity-regularized maximum likelihood estimator, as a special case of our framework, which is shown to have smaller errors when compared to the standard matrix completion framework without pairwise penalties. We conduct extensive experiments on both synthetic and real datasets to demonstrate the superior
performance of this general framework.

\end{abstract} 

\begin{IEEEkeywords}
matrix factorization, pairwise learning, non-convex pairwise penalty.
\end{IEEEkeywords}

\section{Introduction}
 
Low-rank matrix completion \cite{chi2018nonconvex,chen2018harnessing} has been widely used in many real-world data applications, e.g., image restoration and collaborative filtering. 
A typical optimization~problem follows the form~\cite{Koren:2009,gopalan2014content} 
\begin{align}\label{eq:genform1}
 \min_{\mathbf{X}, \mathbf{Y}}  \; \frac{1}{2}\left\|\Psi_{\Omega}(\mathbf{M}-\mathbf{X}\mathbf{Y}^T)\right\|_F^2+ \frac{\alpha}{2}\left( \|\mathbf{X}\|^2_F+\|\mathbf{Y}\|^2_F\right),
\end{align}
where $\mathbf{M}\in\mathbb{R}^{n\times m}$ is the data matrix, $\mathbf{X}\in\mathbb{R}^{n\times d}$, $\mathbf{Y}\in\mathbb{R}^{m\times d}$ are the low-rank factors, with typically $d\ll \min\{n,m\}$. The projection operator
$\Psi_{\Omega}(\mathbf{M})$ retains the entries of the matrix $\mathbf{M}$ in the index set $\Omega$ that denotes the observed indices, and $\alpha>0$ is some regularization parameter.

The row vectors $\{\mathbf{x}_i\in\mathbb{R}^{d}\}_{i=1}^n$ of $\mathbf{X}=[\mathbf{x}_1,\cdots, \mathbf{x}_n]^T$ and $\{\mathbf{y}_j\in\mathbb{R}^d\}_{j=1}^m$ of $\mathbf{Y}=[\mathbf{y}_1,\cdots, \mathbf{y}_m]^T$, known as {\em latent variables}, usually represent the features of two classes of interdependent objects, e.g., user features and movie features in recommender systems~\cite{Koren:2009,Bennett07thenetflix}, respectively. 
Beyond this basic model, many variants have been considered for different application scenarios to further improve the estimation accuracy. 
For example, the similarities between latent variables can be exploited by pairwise learning. Define a graph $\mathcal{G}_X=({V}_X,{E}_X,\mathbf{W})$, where the feature vectors $\{\mathbf{x}_i\}_{i=1}^n$ correspond to the vertices $V_X = \{ 1, 2, \cdots, n \} $ of the graph, and the edges $E_X = V_X \times V_X$ are weighted by entries in a non-negative matrix $\mathbf{W}=[w_{ij}]\in\mathbb{R}^{n\times n}$. Similarly, define the graph $\mathcal{G}_Y = (V_Y,E_Y,\mathbf{U})$ for the feature vectors $\{\mathbf{y}_j\}_{j=1}^m$. The graph regularized matrix factorization (GRMF) method \cite{rao2015collaborative,zhao2015expert,monti2017geometric} aims to solve the following problem:
\begin{align}\label{eq:genform3}
 \min_{\mathbf{X}, \mathbf{Y}} \;  \frac{1}{2}\left\|\Psi_{\Omega}(\mathbf{M}-\mathbf{X}\mathbf{Y}^T)\right\|_F^2+ \frac{\alpha}{2}\left( \|\mathbf{X}\|^2_F+\|\mathbf{Y}\|^2_F\right)\nonumber
 \\+\gamma_X \sum\limits_{i<j} w_{ij} \|\mathbf{x}_{i} -\mathbf{x}_{j}\|_2^2+\gamma_Y \sum\limits_{s<t} u_{st}\|\mathbf{y}_{s} -\mathbf{y}_{t}\|_2^2,
\end{align}
which adds a smoothing graph regularizer to~(\ref{eq:genform1}), a technique also known as  Laplacian smoothing \cite{smola2003kernels}. This encourages the feature vectors $\mathbf{x}_{i} \approx  \mathbf{x}_{j}$ (respectively $\mathbf{y}_{s} \approx \mathbf{y}_{t}$) if the weight $w_{ij}$ (respectively $u_{st}$) is large. It has been demonstrated that GRMF has the potential to reduce the estimation error and improve prediction performance~\cite{rao2015collaborative,zhao2015expert,monti2017geometric}. 

\subsection{Our contributions}

It is commonly known that Laplacian smoothing using the squared loss as in \eqref{eq:genform3} tends to enforce smoothness property in a global manner, and does not adapt well to inhomogeneity across different nodes \cite{wang2015trend}. Indeed, as illustrated in Fig.~\ref{fig:concave}, the squared loss in \eqref{eq:genform3} tends to impose a large penalty when $\mathbf{x}_i$ is highly different from $\mathbf{x}_j$; correspondingly, if $w_{ij}$ is not so small, GRMF tends to push the solution $\mathbf{x}_{i} $ to be close to $\mathbf{x}_{j}$ which can lead to biased estimates as shown in~\cite{ma2017concave}, and dramatically affect the recovery performance. This is exacerbated further when the weight matrices $\mathbf{W}$ and $\mathbf{U}$ are constructed using side information of the feature vectors, e.g., the friendships of users and the attributes of movies in recommender systems, which could be very noisy \cite{chiang2015matrix} and inappropriately selected.

\begin{figure*}[t]
\centering
\includegraphics[height=1.5in,width=0.65\textwidth]{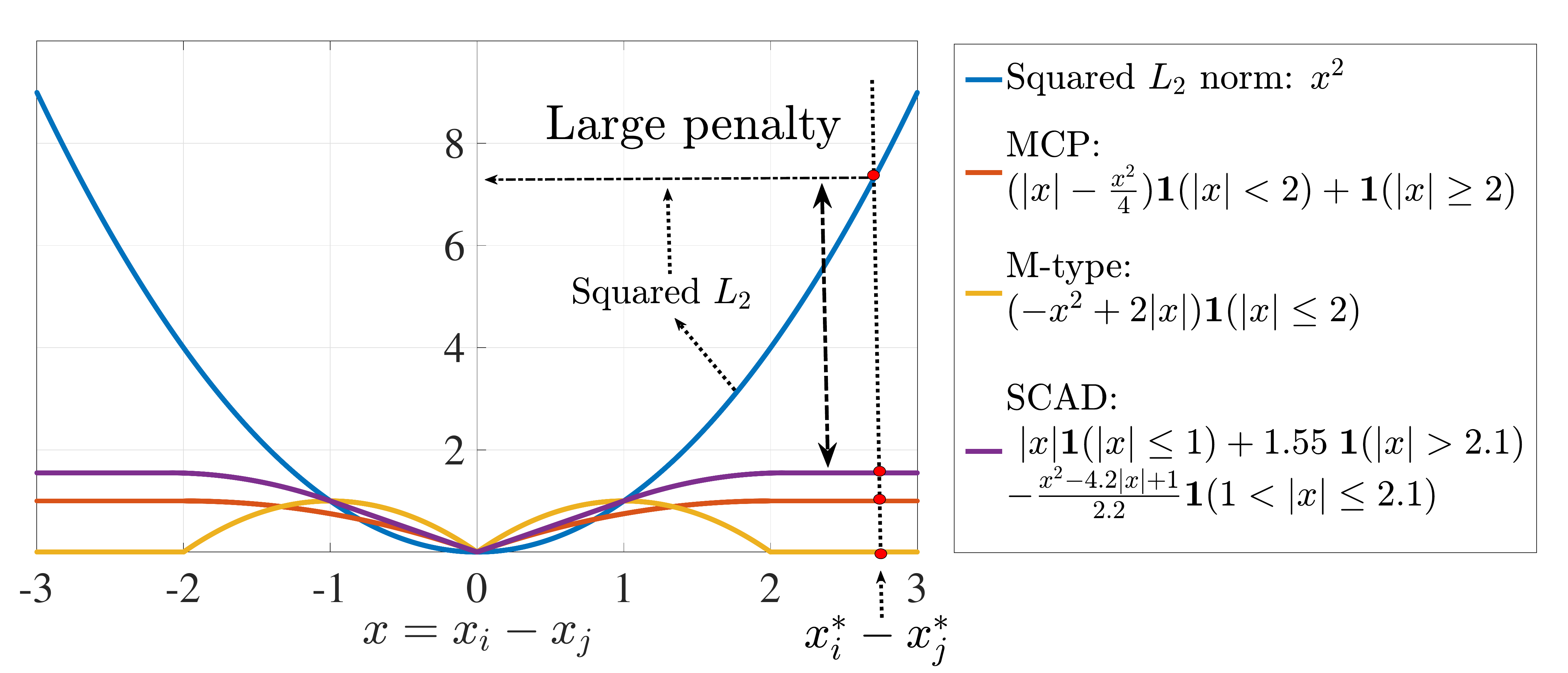} 
\caption{Illustration of representative pairwise penalty functions $p(x, \gamma)$ \text{w.r.t.} a scalar $x $.  }\label{fig:concave}
\vspace{-0.3cm}
\end{figure*}

In this work, we introduce a large family of (non-)convex pairwise penalty functions in the low-rank matrix factorization framework, by proposing a general optimization framework
\begin{align}\label{eq:genform4}
  &\min_{\mathbf{X}, \mathbf{Y}} \; \frac{1}{2}\left\|\Psi_{\Omega}(\mathbf{M}-\mathbf{X}\mathbf{Y}^T)\right\|_F^2+ \frac{\alpha}{2}\left( \|\mathbf{X}\|^2_F+\|\mathbf{Y}\|^2_F\right) 
  \\ &+\sum\limits_{i<j} w_{ij} p\left(\|\mathbf{x}_{i} -\mathbf{x}_{j}\|_2,\gamma_{X}\right)+ \sum\limits_{s<t}u_{st}  p\left(\|\mathbf{y}_{s}-\mathbf{y}_{t}\|_2,\gamma_{Y} \right),\nonumber
\end{align}
where
$p(\cdot, \gamma)$ is a (non-)convex pairwise penalty function
with a tuning parameter $\gamma\geq 0$. The pairwise penalty function takes Laplacian smoothing as a special case, where $p(x,\gamma)=\gamma x^2$. More importantly, it offers the opportunity to incorporate other important regularizers such as $\ell_1$ penalty by setting $p(x,\gamma) = \gamma |x|$, which has proven its value in a different context for graph trend filtering \cite{wang2015trend,Varma2019vector}. Furthermore, non-convex pairwise penalties can be used, such as the minimax concave penalty (MCP)~\cite{zhang2010nearly}, the smoothly clipped absolute deviation (SCAD) penalty \cite{fan2001variable} and the M-type function, as illustrated in Fig.~\ref{fig:concave} (the details will be given later). To the best of our knowledge, the power of general families of pairwise penalties in the context of low-rank matrix factorization has not been studied before and is the focus of this work.

With the optimization problem \eqref{eq:genform4} in mind, our contributions in this paper are summarized below. First, we design 
a \textit{novel and scalable} algorithm based on 
a modified alternating direction method of multipliers (ADMM)~\cite{Boyd:2011} to efficiently solve the proposed problem \eqref{eq:genform4}, by introducing Bregman divergence to certain subproblems.
The allowed non-convex and non-smooth pairwise penalty functions complicate the optimization, which, if not handled carefully, can even result in divergent iterations. 
Theoretically, we
characterize the convergence of our algorithm for a large family of 
pairwise penalty functions satisfying some mild properties. This requires new analysis since recent works on the convergence of ADMM for non-convex functions~\cite{wang2018convergence,hong2016convergence,wang2015global} cannot be directly applied to our problem. Moreover, 
the modified ADMM algorithm needs to solve an expensive linear system of equations that typically requires  
$O((m^2+n^2)\times d^2)$ run time
to invert two large matrices in each iteration.
We provide a conjugate gradient (CG) based approach to obtain an \textit{inexact} solution to this linear system of equations, using only $O(|\Omega|\times d)$ run time.
Notably, we still guarantee the convergence using the inexact solutions during iterations.

Second, we study the statistical benefits of incorporating pairwise penalties. Fully characterizing the statistical guarantees for the proposed class of penalty functions is challenging. Instead, we restrict ourselves to an important
situation where the latent variables form a small number of {\em subgroups}. 
Specifically, we investigate a subgroup-based model, where the feature vectors are considered in the same group if they are identical. For a partially observed matrix corrupted with Gaussian noise, we prove that the complexity-regularized maximum likelihood estimator, as a special case of the framework~(\ref{eq:genform4}),
can achieve a smaller error bound than the one without considering the pairwise penalties,
especially when the numbers of the subgroups are small. The class of sparsity-promoting pairwise penalty functions (possibly with a finite support) can be regarded as a computationally-efficient surrogate of the complexity-regularized maximum likelihood estimator. Interestingly, not only can we identify the subgroups {automatically} but also we
 significantly reduce the recovery error, as verified in Section~\ref{reducedError}.
  
 We propose a few heuristic rules to adaptively construct the weight matrices of the graph based on the partially observed matrix. As a result,  our framework also applies to datasets that do not provide any side information. 
Our extensive experiments on both synthetic and real data  demonstrate the superior performance of the proposed  framework.

\subsection{Organization of the paper and notations}

The rest of the paper is organized as follows. Section~\ref{OF} introduces the optimization framework. Section~\ref{compute} develops the modified ADMM algorithm and states its convergence. Section~\ref{model_PF} describes the statistical properties of incorporating pairwise penalties in a scenario where latent variables form a small number of subgroups. Section~\ref{ex} contains extensive experiments on both synthetic and real datasets to demonstrate the superior performance of this general framework. Finally, we conclude in Section~\ref{sec:conclusions}. All proofs are delegated to the supplemental material.

\noindent {\bf Notations:} Let $\mathbf{A}=[A_{ij}]\in\mathbb{R}^{n\times m}$ be a $n\times m$ matrix and $\mathbf{x}=[x_1,...,x_n]^T\in\mathbb{R}^d$ be a $d$-dimensional vector. Recall the Frobenius norm $\|\mathbf{A}\|_F=(\sum_{ij}A_{ij}^2)^{1/2}$
 and the infinity norm $\|\mathbf{A}\|_{\infty}=\text{max}_{i,j}|A_{ij}|$. 
Define $\ell_1$ and $\ell_2$ norms as $\|\mathbf{x}\|_1=\sum_{i=1}^{d}|x_i|$ and $\|\mathbf{x}\|_2=(\sum_{i=1}^{d}x_i^2)^{1/2}$, respectively. The trace of a square matrix $\mathbf{A}$ is denoted as $\mathrm{tr}(\mathbf{A})$. Denote $|\mathcal{S}|$ as the cardinality of a set $\mathcal{S}$, and $\otimes$ as the Kronecker product. 
 
\section{Optimization Framework}\label{OF}

We develop a general optimization framework for incorporating pairwise penalties in low-rank matrix factorization, given in~(\ref{eq:genform4}), that allows a wide class of (non-)convex pairwise 
penalty functions $p\left( z, \gamma\right)$, characterized by the following assumption.
\begin{assumption}\label{c:1}
The penalty function $p(z,\gamma)$ satisfies
\begin{enumerate}
\item $p\left(z, \gamma\right)\geq 0$ for $z, \gamma \geq 0$,  
\item For any given vector $\mathbf{v}\in\mathbb{R}^{d}$, there exists a constant $\varsigma_0\geq 0$ independent of $\mathbf{v}$ such that for $\forall\,\varsigma>\varsigma_0$, $f(\mathbf{u})= \varsigma\|\mathbf{u}-\mathbf{v}\|_2^2+p\left(\|\mathbf{u}\|_2, \gamma\right)$ is strongly convex with respect to $\mathbf{u}\in\mathbb{R}^d$.
\end{enumerate} 
\end{assumption}
 
This assumption holds for a variety of (non-)convex pairwise penalty functions, including MCP~\cite{zhang2010nearly} and SCAD~\cite{fan2001variable}, both of which are highly popular as sparsity-promoting non-convex regularizers in the statistics literature. For example, Assumption~\ref{c:1} holds for the MCP regularizer
\begin{align}\label{mcp}
p_{\mathrm{MCP}}\left( z ,\gamma\right)=\gamma  |z| &-\frac{z^2}{2t}{\mathbf 1}\left(|z| \leq \gamma t\right)\nonumber
\\&+\left(\frac{t\gamma^2}{2}-\gamma |z|\right){\mathbf 1}\left( |z|> \gamma t\right), 
\end{align}
for $t>0$ by setting $\varsigma_0=1/(2t)$, where $\mathbf{1}(\cdot)$ denotes the indicator function. Moreover, we introduce the following M-type function:
\begin{align}\label{eq:Mtype}
 p_{\mathrm{M}}\left(z ,\gamma\right)= -\gamma\left( z^2 - 2 b|z|\right) \mathbf{1}\left( |z|<2b\right), \quad b>0,
\end{align}
which satisfies Assumption~\ref{c:1}  by letting $\varsigma_0=\gamma$. In addition, Assumption~\ref{c:1} also holds for all convex functions, e.g., the $\ell_1$ and the squared $\ell_2$ norms. Thus, our framework unifies and generalizes existing GRMF approaches.

\section{Algorithm and Convergence}\label{compute}
 For the class of optimization problems introduced in Section~\ref{OF}, we develop an efficient and general algorithm, dubbed $\bf{LLFMC}$, with a strong theoretical convergence guarantee. We begin with the following equivalent constrained form of the main problem~(\ref{eq:genform4}): 
\begin{align} \label{model} 
\min_{\mathbf{X,Y}} \,\, &\frac{1}{2}\|\Psi_{\Omega}(\mathbf{M}-\mathbf{X}\mathbf{Y}^T)\|^{2}_{F}+\frac{\alpha}{2}\left( \|\mathbf{X}\|^2_F+\|\mathbf{Y}\|^2_F\right)
\nonumber
\\& +\sum\limits_{l \in \varepsilon_{X}}w_{l}\hspace{0.04cm}p\left(\|\mathbf{p}_{l}\|_2,\gamma_{X}\right)+\sum\limits_{j \in \varepsilon_{Y}}u_{j}\hspace{0.04cm}p\left(\|\mathbf{q}_{j}\|_2,\gamma_{Y}\right)\nonumber \\ 
   \textrm{subject to}\,\, & \mathbf{p}_{l}=\mathbf{x}_{l_{1}}-\mathbf{x}_{l_{2}},\; \mathbf{q}_{j}=\mathbf{y}_{j_{1}}-\mathbf{y}_{j_{2}},
\end{align}
where the index set $ \varepsilon_{X} =\{l=\left(l_{1},l_{2}\right): w_{l_{1}l_{2}}>0,l_1<l_2\}$ and $\varepsilon_{Y}=\{ j=\left(j_{1}, j_{2}\right): u_{j_{1}j_{2}}>0, j_1<j_2\}$. It can be seen that for every $l=(l_1,l_2)\in  \varepsilon_{X}$, we can rewrite the first constraint in \eqref{model} as $\mathbf{p}_{l} = \mathbf{X}^T (\mathbf{e}_{l_1} - \mathbf{e}_{l_2})$, where $\mathbf{e}_k$ is the $k^{th}$ standard basis vector in $\mathbb{R}^n$. Let $ \mathbf{P}\in\mathbb{R}^{d\times\left|\varepsilon_{X}\right| }$ be the matrix whose columns are given by $\mathbf{p}_{l}^T$ with $l \in \varepsilon_{X}$, and let $\mathbf{E}_x\in\mathbb{R}^{n\times\left|\varepsilon_{X}\right|}$ be the matrix whose corresponding columns are given by $(\mathbf{e}_{l_1} - \mathbf{e}_{l_2})$. Symmetrically, we can define matrices $\mathbf{Q}\in\mathbb{R}^{d\times\left|\varepsilon_{Y}\right|}$ and $\mathbf{E}_y\in\mathbb{R}^{m\times\left|\varepsilon_{Y}\right|}$. The constraint in \eqref{model} can be rewritten in a matrix form as
\begin{align}\label{eq:constraint2}
\mathbf{P}-\mathbf{X}^T{\mathbf{E}}_x=\mathbf{0},\,\mathbf{ Q-Y}^T\mathbf{E}_y=\mathbf{0}.
\end{align}  
Let $\mathcal{L}_{\eta}\left( \mathbf{P, Q,X,Y,\Lambda,V}\right)$ be the augmented Lagrangian function of the problem~(\ref{model}), which is given as
\begin{align} 
 \mathcal{L_{\eta}}&=\frac{1}{2}\|\Psi_{\Omega}(\mathbf{M-XY}^T)\|^{2}_{F}+\frac{\alpha}{2}(\|\mathbf{X}\|_{F}^{2}+\|\mathbf{Y}\|_{F}^{2}) \nonumber
 \\&+\tr\big(\mathbf{\Lambda}^T\big(\mathbf{ P - X}^T\mathbf{E}_x\big) \big)+\tr\big(\mathbf{V}^T\big(\mathbf{ Q - Y}^T\mathbf{E}_y\big) \big)\nonumber
\\&+\frac{\eta}{2}\big(\|\mathbf{ P-X}^T\mathbf{E}_x\|_F^2+\|\mathbf{ Q-Y}^T\mathbf{E}_y\|_F^2\big)\nonumber
\\&+\sum\limits_{l \in \varepsilon_{X} }w_{l}\hspace{0.04cm}p\left(\|\mathbf{p}_{i}\|_2,\gamma_{X}\right)+ \sum\limits_{j \in \varepsilon_{Y}} u_{j}\hspace{0.04cm}p\left(\|\mathbf{q}_{j}\|_2,\gamma_{Y}\right), \label{lagrangian}
\end{align}
where $\mathbf{\Lambda} \in \mathbb{R}^{d\times \left|\varepsilon_{X}\right|}$ and $ \mathbf{V}\in \mathbb{R}^{d\times \left|\varepsilon_{Y}\right|}$ are the dual variables.
 
We now describe our 2-step algorithm, LLFMC, based on a modified ADMM.
\begin{enumerate}
\item {\bf Step 1.} Define an undirected graph $\mathcal{G}_X=(V_X, \varepsilon_{X})$ with $V_X=\{1,2,...,n\}$. We first use the standard depth-first-search algorithm to find cycles. Then, for each cycle of $\mathcal{G}_X$, we randomly cut one edge $l$ off by letting $w_l=0$. As a result, the graph $\mathcal{G}_X$ becomes acyclic. 
Similar operations can be applied for $\mathcal{G}_Y$.

\item{\bf Step 2.}  
In each iteration $k$, we do the following updates:\\
\noindent a) \text{Primal updates:} 
\begin{subequations}\label{modified_ADMM}
\begin{align}
\mathbf{P}^{k+1} &=\arg\min\limits_{\mathbf{ P}}{\mathcal{L}_{\eta}\big(\mathbf{ P, Q}^k,\mathbf{X}^k,\mathbf{Y}^k,\mathbf{\Lambda}^k,\mathbf{V}^k\big)},\label{modified_ADMM_p}
\\ \mathbf{ Q}^{k+1}&=\arg\min\limits_{\mathbf{Q}}{\mathcal{L}_{\eta}\big(\mathbf{ P}^{k+1},\mathbf{ Q},\mathbf{X}^k,\mathbf{Y}^k,\mathbf{\Lambda}^k,\mathbf{V}^k\big)},\label{modified_ADMM_q}
\\ \mathbf{X}^{k+1}&=\arg\min\limits_{\mathbf{X}}\Big(\mathcal{L}_{\eta}\big(\mathbf{ P}^{k+1},\mathbf{ Q}^{k+1},\mathbf{X,Y}^k,\mathbf{\Lambda}^k,\mathbf{V}^k\big)\nonumber
\\&\hspace{2.5cm}+\frac{\alpha_x}{2}\|\mathbf{X-X}^k\|_F^2\Big),\label{modified_ADMM_x}\\
\mathbf{Y}^{k+1}&=\arg\min\limits_{\mathbf{Y}}\Big(\mathcal{L}_{\eta}\big(\mathbf{P}^{k+1},\mathbf{ Q}^{k+1},\mathbf{X}^{k+1},\mathbf{Y,\Lambda}^k,\mathbf{V}^k\big)\nonumber
\\&\hspace{2.5cm}+\frac{\alpha_y}{2}\|\mathbf{Y-Y}^k\|_F^2\Big),\label{modified_ADMM_y}
\end{align}
\end{subequations}
\noindent b) \text{Dual updates:}
\begin{align}\label{d_upp}
&\mathbf{\Lambda}^{k+1}=\;\mathbf{\Lambda}^{k}+\eta\big(\mathbf{ P}^{k+1}-(\mathbf{X}^{k+1})^T\mathbf{E}_x\big),\nonumber
\\&\mathbf{V}^{k+1}=\;\mathbf{V}^{k}+\eta\big(\mathbf{ Q}^{k+1}-(\mathbf{Y}^{k+1})^T\mathbf{E}_y\big). 
\end{align}

\end{enumerate}

Some discussions are in order. Step 1 is necessary to address the row-rank deficiency issue of the matrices $\mathbf{E}_x$ and $\mathbf{E}_y$ \footnote{Since the summation of the entries in $\mathbf{E}_x$ is equal to $0$, the row rank of $\mathbf{E}_x$ in the constraint~(\ref{eq:constraint2}) is at most $n-1$. A similar argument holds for $\mathbf{E}_y$.} that prevents the application of convergence guarantees of standard ADMM, e.g. \cite{wang2018convergence,hong2016convergence,wang2015global}. As shall be seen later, this step verifies Property~\ref{graph} in Section~\ref{conver}, which is needed to obtain the convergence guarantee of our algorithm.

 Compared to standard ADMM, our algorithm further introduces Bregman divergences in Step 2 to the update of $\mathbf{X}$ and $\mathbf{Y}$ in \eqref{modified_ADMM_x} and \eqref{modified_ADMM_y} to improve the convergence performance. As shall be seen later, these modifications guarantee 
sufficient descents of $\mathcal{L}_{\eta}$  during the subproblems \eqref{modified_ADMM_x} and \eqref{modified_ADMM_y} in each iteration.
 
 In Fig.~\ref{fig:convergence}, we empirically investigate the impacts of  Step 1 and the inclusion of Bregman divergence in Step 2 on the convergence performance of our algorithm. 
We can observe that without Bregman divergence in Step 2, ADMM will diverge after around 600 iterations, whereas ADMM with Bregman divergence in Step 2 has a better convergence performance as well as a lower relative error. In addition, without Step 1, ADMM still has a good convergence performance but achieves a larger relative error than that with Step 1. In sum, the Bregman divergence term in Step 2 indeed contributes to the better convergence performance of our modified ADMM algorithm, while Step 1 helps to attain a lower relative error but does not affect the convergence behavior much. 
	
\begin{figure}[t]
\centering
\includegraphics[width=0.5\textwidth]{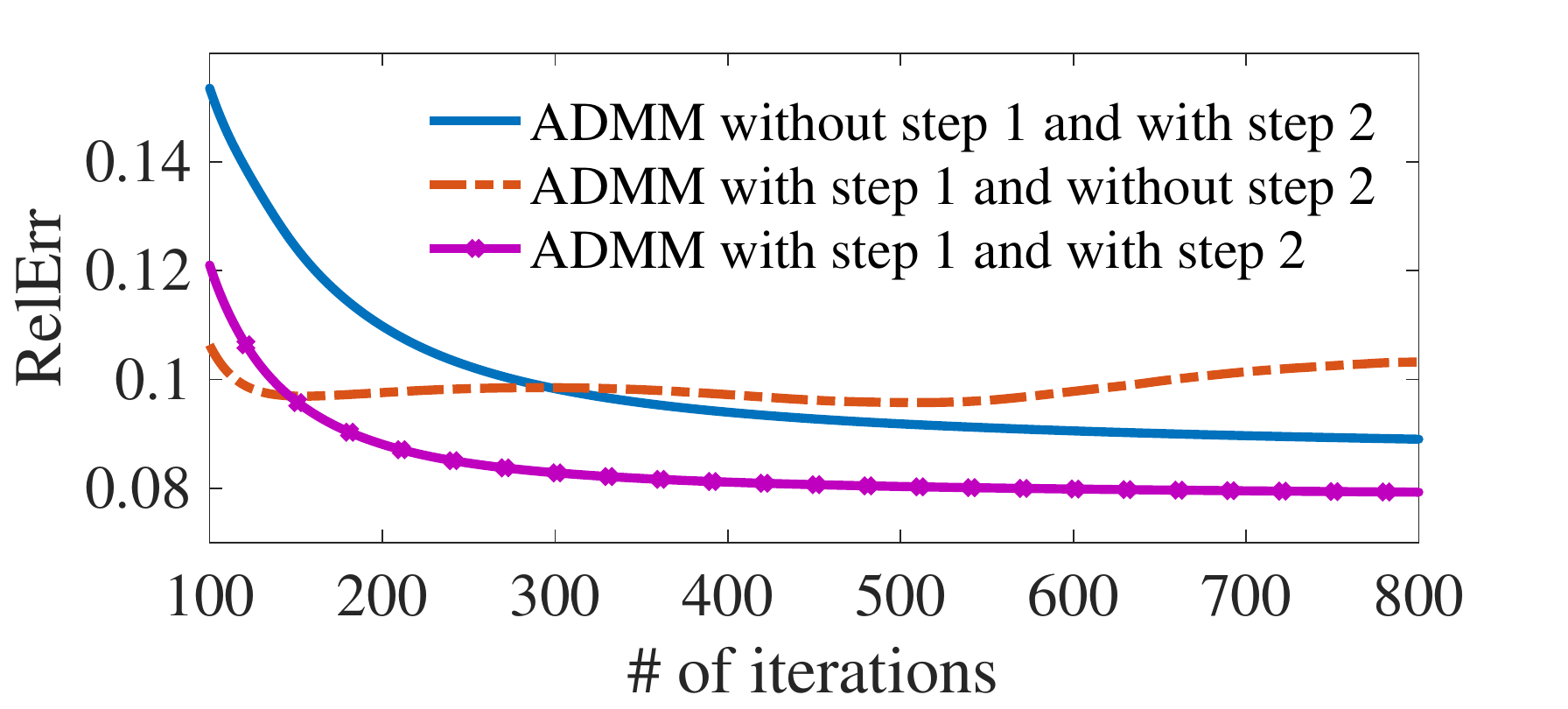}
\caption{Illustration of the impact of Step 1 and the inclusion of Bregman divergence Step 2 on the convergence performance. We use MCP as the pairwise penalty function, and randomly pick $30\%$ of the pixels of a $512\times512$ image \textit{Lena} to generate the partially observed matrix $\Psi_{\Omega}(\mathbf{M})$. The relative error is defined by ${\rm RelErr}=\|\mathbf{{\widehat M}}-\mathbf{M}\|_{F}/\|\mathbf{M}\|_{F}$, where $\mathbf{\widehat M}$ is the estimate.}\label{fig:convergence}
\end{figure}

\begin{Remark}
Compared to the Bregman ADMM~\cite{wang2014bregman}, we do not apply Bregman divergence to $\mathbf{P}$ and $\mathbf{Q}$ in order to save $O\left((\left|\varepsilon_{X}\right|+\left|\varepsilon_{Y}\right|)\times d\right)$ space for tracking $\mathbf{P}^k$ and $\mathbf{Q}^k$.  
 \end{Remark}

\subsection{A Fast Inexact Solver via CG}
We now discuss how to solve the subproblems in Step 2 efficiently. Based on~\eqref{lagrangian}, updating $\mathbf {P}$ in~(\ref{modified_ADMM_p}) is to solve the following proximal map for each of its column $\mathbf{p}_{l}^T$ in parallel:
\begin{align}
\mathbf{p}_{l}^{k+1}=\arg\min\limits_{\mathbf{p}_{l}}&\Big(\frac{1}{2}\|\mathbf{p}_{l}^T -\left((\mathbf{X}^{k})^T\mathbf{E}_x[l]-{\eta}^{-1}\mathbf{\Lambda}^{k}[l]\right)\|_{2}^{2}\nonumber
\\&+ w_{l}\eta^{-1}p\left(\|\mathbf{p}_{l}\|_2,\gamma_{X}\right)\Big), \label{pl}
\end{align}
where $\mathbf{E}_x[l]$ and $\mathbf{\Lambda}^{k}[l]$ are the column of $\mathbf{E}_x$ and $\mathbf{\Lambda}^{k}$ corresponding to  $\mathbf{p}_{l}$, respectively. 
In practice, we choose $\eta>2\varsigma_0 \max_{l \in \varepsilon_X } w_{l}$, which, based on  Assumption~\ref{c:1}, implies that the problem in~(\ref{pl}) has a unique solution. Notably, this solution has  simple and explicit analytical forms for the penalty functions such as SCAD, MCP and the M-type function. We update $\mathbf{Q}$ in a similar way by solving the proximal map for each column $ \mathbf{q}_{j}^T$, given as
 \begin{align}\label{qj}  
 \mathbf{q}_{j}^{k+1}=\arg\min_{\mathbf{q}_{j}} &\Big(\frac{1}{2}\|\mathbf{q}_{j}^T-\big((\mathbf{Y}^{k})^T\mathbf{E}_y[j]-\eta^{-1}\mathbf{V}^{k}[j] \big)\|_{2}^{2} \nonumber \\
          &  +u_{j}\eta^{-1}\hspace{0.04cm}p\left(\|\mathbf{q}_{j}\|_2,\gamma_{Y}\right) \Big),
 \end{align}
where $\mathbf{E}_y[j]$ and $\mathbf{V}^{k}[j]$ are the column of $\mathbf{E}_y$ and $\mathbf{V}^{k}$ corresponding to $\mathbf{q}_{j}$, respectively.
                 
Based on~(\ref{modified_ADMM_x}), updating $\mathbf{X}$ is to minimize 
\begin{align}
&F(\mathbf{X}) =\frac{1}{2}\sum_{(i,j)\in\Omega} \left(M_{ij}-\mathbf{x}_i(\mathbf{y}^k_j)^T\right)^2+\frac{\eta}{2}\|\mathbf{P}^{k+1}-\mathbf{X}^T\mathbf{E}_x\|_F^2\nonumber
 \\&+\tr\left( (\mathbf{\Lambda}^k )^T\left(\mathbf{ P}^{k+1} - \mathbf{X}^T\mathbf{E}_x\right) \right)+\frac{\alpha}{2}\|\mathbf{X}\|_{F}^{2}+\frac{1}{2}\|\mathbf{X}-\mathbf{X}^k\|_F^2. \nonumber
\end{align} 
Recognizing that $F(\mathbf{X})$ is quadratic in $\mathbf{X}$, the optimizer can be found by solving a linear system of equations, where $\mbox{vec}(\mathbf{X}^T)$ satisfies
\begin{align}\label{inverse}
\left[\mathbf{G}^k_y+(\eta\mathbf{E}_x\mathbf{E}_x^T+(\alpha+1)\mathbf{I}_n)\otimes \mathbf{I}_d \right]\text{vec}(\mathbf{X}^T)=\mathbf{c},
\end{align}
where 
\begin{align}\label{cg}
\mathbf{c}=(\mathbf{b}^k_y)^T+{\normalfont \text{vec}}((\mathbf{X}^k)^T+\eta\mathbf{P}^{k+1}\mathbf{E}_x^T+\mathbf{\Lambda}^k\mathbf{E}_x^T).
\end{align}
Here, $\mathbf{G}_y^k=\text{diag}\left(\mathbf{G}_1,\cdots,\mathbf{G}_n\right) $ is a block diagonal matrix with the $i^{th}$ block given as $\mathbf{G}_i=\sum_{j\in\Omega_i}(\mathbf{y}^k_j)^T(\mathbf{y}_j^k)\in\mathbb{R}^{d\times d}$, where $\Omega_i=\{ j:(i,j)\in\Omega\}$ is the index set of the observed entries in the $i^{th}$ row. The row vector $\mathbf{b}^k_y=[\mathbf{b}_1,\cdots, \mathbf{b}_n]$ concatenates the vectors $\mathbf{b}_i=\sum_{j\in\Omega_i}M_{ij}\mathbf{y}^k_j$,  
with $\mathbf{y}_j^k$ being the $j^{th}$ row of the matrix $\mathbf{Y}^{k}$.

 \begin{algorithm}[t]
   \caption{Learning latent features with pairwise penalties in low-rank matrix factorization (LLFMC)}
   \label{alg:ours}
\begin{algorithmic}[1]
   \STATE {\bfseries Input:} $\Psi_{\Omega}\left(\mathbf{M}\right)\in\mathbb{R}^{n\times m}$, rank $d\in\mathbb{N}^{+}$, $\rm MaxIter>0$, $\rm tol_1,tol_2>0.$ 
   \STATE {\bfseries Initialize:} $\gamma_{X}, \gamma_{Y}, \alpha,\eta$. Set $\mathbf{X}^1$, $\mathbf{Y}^1$, $\mathbf{\Lambda}^1$, $\mathbf{V}^1$ as random matrices.
   \FOR{$k=1$ {\bfseries to} $\rm MaxIter$}
    \FOR{$l \in \varepsilon_{X}$ }
   \STATE{Compute $\mathbf{p}_{l}^{k+1}$  by solving the proximal map~\eqref{pl};}
   \ENDFOR
   \FOR{$j\in \varepsilon_{Y}$}
   \STATE {Compute $\mathbf{q}_{j}^{k+1}$ by solving the proximal map~\eqref{qj};} 
   \ENDFOR 
   \STATE {Update $\mathbf{X}^{k+1},\mathbf{Y}^{k+1}$ by solving~(\ref{inverse}) through CG};
   \STATE {$\mathbf{\Lambda}^{k+1}=\mathbf{\Lambda}^{k}+\eta\left(\mathbf{ P}^{k+1}-(\mathbf{X}^{k+1})^T\mathbf{E}_x\right)$;}
   \STATE {$\mathbf{V}^{k+1}=\mathbf{V}^{k}+\eta\left(\mathbf{ Q}^{k+1}-(\mathbf{Y}^{k+1})^T\mathbf{E}_y\right)$;}
   \IF{$D_k<\text{tol}_1$ or $\left|D_k-D_{k+1}\right|<\text{tol}_2$} \STATE break and output $\mathbf{X}^{k+1}$ and $\mathbf{Y}^{k+1}$;
   \ENDIF
   \ENDFOR
   \STATE {\bfseries Output:} $\mathbf{\widehat M}=\mathbf{X}^{k+1}(\mathbf{Y}^{k+1})^T$.
\end{algorithmic}
\end{algorithm}

Unfortunately, solving~(\ref{inverse}) requires inverting an $nd\times nd$ matrix, which is computationally demanding when dealing with large datasets. To address this issue, we apply the standard CG to directly minimize $f(\text{vec}(\mathbf{X}^T))$, where 
the function $f({\normalfont \text{vec}}(\mathbf{X}^T))$ is a re-write of $F(\mathbf{X})$ in terms of its vectorized argument $ \text{vec}(\mathbf{X}^T)$, defined as 
\begin{align*}
f(&\mathbf{s})= \frac{1}{2}\mathbf{s}^T\left[\mathbf{G}^k_y+(\eta\mathbf{E}_x\mathbf{E}_x^T+(\alpha+1)\mathbf{I}_n)\otimes \mathbf{I}_d\right] \mathbf{s}- \mathbf{c}^T\mathbf{s}.
\end{align*}

The most expensive part in each CG iteration is the Hessian-vector multiplication $\nabla^2f(\mathbf{s})\mathbf{s}$. Using the identity $(\mathbf{B}^T\otimes \mathbf{I})\text{vec}(\mathbf{S})=\text{vec}(\mathbf{SB})$, we have 
\begin{align}\label{multi}
\nabla^2f(\mathbf{s})\mathbf{s}=\mathbf{G}_y^k\mathbf{s}+\text{vec}(\eta\mathbf{S} \mathbf{E}_x\mathbf{E}_x^T+(\alpha+1)\mathbf{S}),
\end{align}
with $\mathbf{S}=[\mathbf{s}_1,\cdots,\mathbf{s}_n] \in\mathbb{R}^{d\times n}$ and $\mathbf{s}=\text{vec}(\mathbf{S})$. 
Since $\mathbf{G}_y^k$ is a block diagonal matrix, to compute $\mathbf{G}_y^k\mathbf{s}$ it suffices to obtain $\mathbf{G}_i\mathbf{s}_i$ for $1\leq i\leq n$,  
which can be  computed in $O(|\Omega_i|d)$ time by using   $\mathbf{G}_i\mathbf{s}_i=\sum_{j\in\Omega_i}(\mathbf{y}_j^k\mathbf{s})(\mathbf{y}^k_j)^T$.
 
The time complexity for computing a single CG iteration is $O((|\Omega|+nnz(\mathbf{E}_x\mathbf{E}_x^T))\times d)$, where $nnz(\cdot)$ is the number of nonzeros. For our algorithm, we use a $k_w$-nearest neighbor method to select the weights $w_{ij}$, as introduced in section~\ref{ss:weight}. Thus, 
using the structure of the matrix $\mathbf{E}_x$, we have an upper bound for $nnz(\mathbf{E}_x\mathbf{E}_x^T)$, as shown in the following proposition.
\begin{prop}\label{nnzE}
$nnz(\mathbf{E}_x\mathbf{E}_x^T)\leq n (k_w+1)$. 
\end{prop}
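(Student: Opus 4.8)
The plan is to read off the sparsity pattern of $\mathbf{E}_x\mathbf{E}_x^T$ directly from the graph $\mathcal{G}_X=(V_X,\varepsilon_X)$, whose signed incidence matrix is precisely $\mathbf{E}_x$ (each column $\mathbf{e}_{l_1}-\mathbf{e}_{l_2}$ encodes the edge $l=(l_1,l_2)$). First I would compute $\mathbf{L}:=\mathbf{E}_x\mathbf{E}_x^T$ entrywise. Since each column of $\mathbf{E}_x$ has exactly two nonzeros, equal to $+1$ and $-1$, a short computation gives $L_{ii}=\deg(i)$, the number of edges in $\varepsilon_X$ incident to vertex $i$, while for $i\neq j$ one gets $L_{ij}=-1$ if $\{i,j\}\in\varepsilon_X$ and $L_{ij}=0$ otherwise; that is, $\mathbf{E}_x\mathbf{E}_x^T$ is the unnormalized graph Laplacian of $\mathcal{G}_X$. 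This identifies the support of $\mathbf{E}_x\mathbf{E}_x^T$ with the diagonal together with the adjacency pattern of the graph.

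Next I would count the nonzeros row by row. For any vertex $i$ with $\deg(i)\ge 1$, row $i$ of $\mathbf{E}_x\mathbf{E}_x^T$ is supported exactly on $\{i\}\cup N(i)$, where $N(i)$ is the neighbor set of $i$; hence it contains $1+\deg(i)$ nonzeros, while a vertex of degree $0$ contributes an all-zero row. Summing over all rows yields the exact identity $nnz(\mathbf{E}_x\mathbf{E}_x^T)=\sum_{i:\deg(i)\ge 1}\big(1+\deg(i)\big)\le \sum_{i=1}^n\big(1+\deg(i)\big)$.

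The remaining and crucial step is to bound the degrees using the $k_w$-nearest-neighbor construction of Section~\ref{ss:weight}. Because an edge is placed at vertex $i$ only when the corresponding vertex lies among the $k_w$ nearest neighbors of $i$, each vertex has at most $k_w$ incident edges, so $\deg(i)\le k_w$ for every $i$. Here I would be careful that the symmetrization used to form the undirected graph $\mathcal{G}_X$ does not inflate the degree past $k_w$, which holds for the mutual $k_w$-NN rule since then $N(i)$ is contained in the $k_w$-nearest-neighbor set of $i$; I would also note that the cycle-cutting Step~1 only deletes edges and thus cannot increase any degree. Plugging $\deg(i)\le k_w$ into the row count gives $nnz(\mathbf{E}_x\mathbf{E}_x^T)\le\sum_{i=1}^n(1+k_w)=n(k_w+1)$, as claimed.

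I expect the only genuine subtlety to be this last degree bound: the combinatorial count in the first two paragraphs is exact and routine, whereas asserting $\deg(i)\le k_w$ requires invoking the precise $k_w$-NN weighting scheme and checking that forming the undirected edge set $\varepsilon_X$ keeps every vertex's degree at or below $k_w$. Everything else is a direct reading of the Laplacian structure of $\mathbf{E}_x\mathbf{E}_x^T$.
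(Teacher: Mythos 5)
Your proof is correct, and it takes a somewhat different---and cleaner---route than the paper's. The paper never writes down the Laplacian: it lists three combinatorial facts about $\mathbf{E}_x$ (each row has exactly $k_w$ nonzeros, two distinct rows share at most one common nonzero coordinate, each column has exactly two nonzeros) and then argues by contradiction, via pigeonhole, that a row of $\mathbf{E}_x\mathbf{E}_x^T$ with $k_w+2$ nonzeros would force some column of $\mathbf{E}_x$ to have three nonzeros. You instead identify $\mathbf{E}_x\mathbf{E}_x^T$ as the unnormalized graph Laplacian of $\mathcal{G}_X$, which makes the support of row $i$ exactly $\{i\}\cup N(i)$ and yields the exact identity $nnz(\mathbf{E}_x\mathbf{E}_x^T)=\sum_{i:\deg(i)\geq 1}\left(1+\deg(i)\right)$, where the paper only obtains a per-row upper bound. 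The counting is the same at heart, but your formulation makes transparent that the entire proposition reduces to the single claim $\deg(i)\le k_w$, and you treat that claim more carefully than the paper does: the paper's first observation (every row of $\mathbf{E}_x$ has exactly $k_w$ nonzeros) is precisely this degree claim in disguise, asserted without justification, and under the asymmetric rule~(\ref{wij}) --- where the edge $(i,j)$, $i<j$, is present whenever $\mathbf{x}_j$ lies among the $k_w$ nearest neighbors of $\mathbf{x}_i$ --- it can fail for a ``hub'' vertex that appears in many other vertices' neighbor lists. Your caveats that a mutual $k_w$-NN rule guarantees $\deg(i)\le k_w$, and that the cycle-cutting Step 1 only deletes edges and hence cannot increase any degree, address exactly the point the paper's proof glosses over.
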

The proof can be found in Appendix~\ref{apen:A}.
In most real-world applications, $k_w$ is small and satisfies $n(k_w+1)<|\Omega|$.  
Thus, 
each CG iteration can be computed in $O(|\Omega|\times d)$ time. 
We stop CG if the approximation error of \eqref{inverse} is small enough: \begin{align}\label{stop:ADM}
& \left\|\left[\mathbf{G}^k_y+(\eta\mathbf{E}_x\mathbf{E}_x^T+(\alpha+1)\mathbf{I}_n)\otimes \mathbf{I}_d \right]\text{vec}((\mathbf{X}^t)^T)-\mathbf{c} \right\|_2  \nonumber \\
 & \leq 10^3\sqrt{dn}/(k+1)^{1.2},      
 \end{align}
using the output of the $t^{th}$ CG iteration $\mathbf{X}^t$. Note that we do not require CG  to fully converge. Instead, we only need a very small number (empirically $\leq 5$) of CG iterations to obtain an \textit{inexact} solution $\mathbf{X}^{k+1}$ that satisfies the stopping criteria \eqref{stop:ADM}, which still guarantees the convergence for our algorithm, as shown in Theorem~\ref{th:inexact} below. This step further speeds up our algorithm. 

The main steps are summarized in Algorithm~\ref{alg:ours}, 
where $D_k=\|\mathbf{X}^k-\mathbf{X}^{k+1}\|_F/(2\sqrt{dn})+\|\mathbf{Y}^k-\mathbf{Y}^{k+1}\|_F/(2\sqrt{dm})$.
In our implementation,  we choose $\textrm{tol}_1=10^{-1}$ and $\textrm{tol}_2=10^{-4}$.

\begin{Remark} The Graph Regularized
Alternating Least Squares (GRALS)~\cite{rao2015collaborative} algorithm is a GRMF method that is most relevant to ours, and can be regarded as solving a special case of our framework \eqref{eq:genform3} when the objective function is convex and \textit{differentiable} with respect  to $\mathbf{X}$ or $\mathbf{Y}$ when fixing the other, and therefore  much easier to optimize. In contrast, we propose an algorithm to solve the much more general optimization problem that is non-convex and non-smooth.
\end{Remark}

\subsection{Convergence Guarantees}\label{conver}   
This section proves the convergence of our algorithm.
Let $\mathbf{U}^k=\left(\mathbf{ P}^{k},\mathbf{ Q}^{k},\mathbf{X}^{k},\mathbf{Y}^{k},\mathbf{\Lambda}^k,\mathbf{V}^k\right)$ be the exact solutions of the subproblems in~(\ref{modified_ADMM}). 
We first prove the convergence of the sequence $\left\{\mathbf{U}^k\right\}$. Then, we extend this convergence result to Algorithm~\ref{alg:ours} that solves~(\ref{inverse}) (or equivalently, \eqref{modified_ADMM_x} and \eqref{modified_ADMM_y}) inexactly. 

Our convergence analysis adopts the following assumption on the boundedness of the generated sequence $\left(\mathbf{X}^k,\mathbf{Y}^k\right)$.
\begin{assumption}\label{bound}
The sequence $\left(\mathbf{X}^k,\mathbf{Y}^k\right)$ generated by~(\ref{modified_ADMM}) is bounded, which means that there exist constants $M_x,M_y$  such that $\|\mathbf{X}^k\|^2_F\leq M_x$, $\|\mathbf{Y}^k\|^2_F\leq M_y$ for all $k$. 
\end{assumption}
 
We note that directly proving the finiteness of the iterates is very challenging. 
Compared with previous analysis \cite{wang2015global,wang2014bregman}, where the boundedness of the iterates is established typically based on stronger assumptions, e.g. the Lipschitz property of the gradient of the augmented Lagrangian function, which do not hold in our problem. Nonetheless, such an assumption does not seem uncommon in the optimization literature, particularly when dealing with functions that are non-smooth and non-convex, e.g. \cite{davis2018stochastic,duchi2018stochastic}. In practice, we find that the generated sequence by our algorithm is well-bounded, and hence this assumption is rather mild.


Following~\cite{he2015full,deng2017parallel}, we use the quantity $\|\mathbf{U}^{k+1}-\mathbf{U}^k\|_F^2$ as a measure of the convergence rate for the sequence $\{\mathbf{U}^{k}\}$. We have the following theorem, whose proof can be found in Appendix~\ref{apen:C}.
\begin{theorem}\label{th:exact}
	Suppose Assumptions~\ref{c:1} and~\ref{bound} are satisfied. Choosing a sufficiently large $\eta$, we have $\sum_{k=1}^\infty \| \mathbf{U}^{k+1} -\mathbf{U}^{k} \|_F^2<\infty$. Furthermore, if the penalty function $p(\cdot, \gamma)$ is semi-algebraic,  then $\sum_{k=1}^\infty \|\mathbf{U}^{k+1}-\mathbf{U}^k\|_F<\infty$ and 
the sequence $\left\{\mathbf{U}^k\right\}$ converges to a stationary point of $\mathcal{L}_{\eta}$.
 
\end{theorem}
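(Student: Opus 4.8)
The plan is to follow the now-standard three-ingredient recipe for convergence of nonconvex ADMM-type schemes within the Kurdyka--{\L}ojasiewicz (KL) framework, adapted to the block structure and the Bregman modification used here. The three ingredients are a \emph{sufficient-decrease} estimate for the augmented Lagrangian $\mathcal{L}_\eta$ along the iterates $\{\mathbf{U}^k\}$, a \emph{relative-error} (subgradient) bound, and \emph{lower-boundedness} of $\mathcal{L}_\eta$. The first and third already deliver the square-summable conclusion $\sum_k\|\mathbf{U}^{k+1}-\mathbf{U}^k\|_F^2<\infty$, and together with the KL inequality they upgrade it to finite trajectory length and convergence to a stationary point.

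First I would establish a per-block descent. The $\mathbf{P}$- and $\mathbf{Q}$-subproblems \eqref{modified_ADMM_p}--\eqref{modified_ADMM_q} reduce column-wise to the proximal maps \eqref{pl}--\eqref{qj}, which by Assumption~\ref{c:1}(2) with $\eta>2\varsigma_0\max_l w_l$ are strongly convex; this gives decreases of order $\|\mathbf{P}^{k+1}-\mathbf{P}^k\|_F^2$ and $\|\mathbf{Q}^{k+1}-\mathbf{Q}^k\|_F^2$ despite the nonconvexity of $p$. The $\mathbf{X}$- and $\mathbf{Y}$-subproblems \eqref{modified_ADMM_x}--\eqref{modified_ADMM_y} are strongly convex quadratics: their objectives are convex in each block, and the added Bregman terms $\tfrac{\alpha_x}{2}\|\mathbf{X}-\mathbf{X}^k\|_F^2$ and $\tfrac{\alpha_y}{2}\|\mathbf{Y}-\mathbf{Y}^k\|_F^2$ make them \emph{uniformly} strongly convex, yielding decreases of order $\|\mathbf{X}^{k+1}-\mathbf{X}^k\|_F^2$ and $\|\mathbf{Y}^{k+1}-\mathbf{Y}^k\|_F^2$ whose moduli stay bounded away from zero as $\eta\to\infty$.

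The crux is controlling the dual ascent, and this is where I expect the main obstacle. The dual step \eqref{d_upp} raises $\mathcal{L}_\eta$ by exactly $\eta^{-1}(\|\mathbf{\Lambda}^{k+1}-\mathbf{\Lambda}^k\|_F^2+\|\mathbf{V}^{k+1}-\mathbf{V}^k\|_F^2)$, so these dual increments must be bounded by primal increments. Writing the optimality condition of \eqref{modified_ADMM_x} and substituting the dual update cancels the residual term and yields an identity of the form $\mathbf{E}_x(\mathbf{\Lambda}^{k+1})^T=\nabla_{\mathbf{X}} g_k(\mathbf{X}^{k+1})+\alpha_x(\mathbf{X}^{k+1}-\mathbf{X}^k)$, where $g_k$ is the smooth part of the objective with $\mathbf{Y}^k$ frozen. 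Two facts then enter: since Step~1 makes $\mathcal{G}_X$ acyclic, $\mathbf{E}_x$ has full column rank (Property~\ref{graph}), so $\sigma_{\min}(\mathbf{E}_x)>0$ and the left-hand side inverts to bound $\|\mathbf{\Lambda}^{k+1}-\mathbf{\Lambda}^k\|_F$; and since Assumption~\ref{bound} confines $(\mathbf{X}^k,\mathbf{Y}^k)$ to a compact set, the bilinear gradient $\nabla_{\mathbf{X}} g_k$ is Lipschitz there, so $\nabla_{\mathbf{X}} g_k(\mathbf{X}^{k+1})-\nabla_{\mathbf{X}} g_{k-1}(\mathbf{X}^k)$ is controlled by $\|\mathbf{X}^{k+1}-\mathbf{X}^k\|_F+\|\mathbf{Y}^k-\mathbf{Y}^{k-1}\|_F$. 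The upshot is that the dual ascent is $O(\eta^{-1})$ times a combination of current and previous primal increments. The previous-step terms force a Lyapunov function $\Phi_k=\mathcal{L}_\eta(\mathbf{U}^k)$ plus small multiples of $\|\mathbf{X}^k-\mathbf{X}^{k-1}\|_F^2$ and $\|\mathbf{Y}^k-\mathbf{Y}^{k-1}\|_F^2$, and choosing $\eta$ large makes the $O(\eta^{-1})$ ascent dominated by the order-one primal descent, giving $\Phi_k-\Phi_{k+1}\ge c\,\|\mathbf{U}^{k+1}-\mathbf{U}^k\|_F^2$. All variables are bounded (the duals again via the full-rank identity), so $\mathcal{L}_\eta$ and hence $\Phi$ is bounded below; telescoping the descent yields $\sum_k\|\mathbf{U}^{k+1}-\mathbf{U}^k\|_F^2<\infty$, the first claim.

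For the second claim I would invoke the KL machinery. When $p(\cdot,\gamma)$ is semi-algebraic, $\mathcal{L}_\eta$ is semi-algebraic (sums, products, and norms preserve this) and therefore satisfies the KL inequality at every point. I complement the descent estimate with a relative-error bound $\mathrm{dist}(0,\partial\mathcal{L}_\eta(\mathbf{U}^{k+1}))\le b\,(\|\mathbf{U}^{k+1}-\mathbf{U}^k\|_F+\|\mathbf{U}^k-\mathbf{U}^{k-1}\|_F)$, read off the subproblem optimality conditions together with the dual-bound identity above. The abstract convergence theorem for descent schemes with the KL property then gives $\sum_k\|\mathbf{U}^{k+1}-\mathbf{U}^k\|_F<\infty$, so $\{\mathbf{U}^k\}$ is Cauchy and converges to some $\mathbf{U}^\star$; passing to the limit in the optimality conditions (using $\|\mathbf{U}^{k+1}-\mathbf{U}^k\|_F\to 0$ and closedness of the subdifferential) shows $0\in\partial\mathcal{L}_\eta(\mathbf{U}^\star)$. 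I stress that existing nonconvex-ADMM results cannot simply be quoted because the coupling $\mathbf{X}\mathbf{Y}^T$ lacks a global Lipschitz gradient and $\mathbf{E}_x$ is rank-deficient before Step~1; the interplay of Step~1 (full column rank), the Bregman terms (uniform strong convexity), and Assumption~\ref{bound} (local Lipschitzness) is precisely what repairs these gaps, with the delicate part being the Lyapunov bookkeeping and the threshold on $\eta$.
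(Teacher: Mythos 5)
Your proposal is correct and follows essentially the same route as the paper's own proof: the same per-block descent lemmas, the same control of the dual increments via the $\mathbf{X}$-subproblem optimality identity combined with the full column rank of $\mathbf{E}_x$ (Step 1) and the boundedness in Assumption~\ref{bound}, the same Lyapunov function $\mathcal{L}_\eta$ augmented by previous-step primal increments, and the same Kurdyka--{\L}ojasiewicz argument (with a two-step relative-error bound) for the semi-algebraic case. The only cosmetic difference is that you invoke the abstract KL descent theorem where the paper carries out the telescoping with the concave desingularizing function explicitly, and you state the relative-error bound for $\mathcal{L}_\eta$ rather than for the augmented Lyapunov function to which the KL property is actually applied.
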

 
The first result $\sum_{k=1}^\infty \| \mathbf{U}^{k+1} -\mathbf{U}^{k} \|_F^2<\infty$ guarantees a subsequential convergence for our proposed algorithm under general penalty functions including MCP and SCAD. Furthermore, for all semi-algebraic penalty functions, which include all real polynomial functions and $\ell_p$-norm with any $p>0$ (see Examples 2 and 4 in~\cite{bolte2014proximal}), we establish the convergence of the whole sequence $\{\mathbf{U}^k\}$ by exploring the Kurdyka-{\L}ojasiewicz (K\L) property of the augmented Lagrangian function in~\eqref{lagrangian}. 

We further extend the convergence guarantee in Theorem~\ref{th:exact} to the scenario  when \eqref{modified_ADMM_x} and \eqref{modified_ADMM_y}) are solved inexactly. Let $\mathbf{\widehat U}^k=({\mathbf{P}}^{k},{\mathbf{Q}}^{k},\mathbf{\widehat X}^{k},\mathbf{\widehat Y}^{k},{\mathbf{\Lambda}}^k,{\mathbf{V}}^k)$ be the sequence generated by Algorithm~\ref{alg:ours} using inexact solutions $\mathbf{\widehat X}^{k}$ and $\mathbf{\widehat Y}^{k}$ to \eqref{modified_ADMM_x} and \eqref{modified_ADMM_y}. Define 
$$\mathbf{t}_x^k=\left[\mathbf{G}^k_y+(\eta\mathbf{E}_x\mathbf{E}_x^T+(\alpha+1)\mathbf{I}_n)\otimes \mathbf{I}_d \right]\text{vec}((\mathbf{\widehat X}^{k})^T)-\mathbf{c}$$ for $k\geq 1$, which measures the approximation error of solving \eqref{inverse}. In a symmetric way,  we can define $\mathbf{t}_y^k$. Based on (\ref{stop:ADM}), we have $\sum_{k=1}^{\infty} \|\mathbf{t}_x^k\|_2<\infty$ and $\sum_{k=1}^{\infty} \|\mathbf{t}_y^k \|_2<\infty$. We have the following theorem, whose proof can be found in Appendix~\ref{apen:D}.
\begin{theorem}\label{th:inexact}
Suppose Assumption~\ref{c:1} and Assumption~\ref{bound} hold with $(\mathbf{\widehat X}^{k},\mathbf{\widehat Y}^{k})$. Choosing a sufficiently large $\eta$, we have $\sum_{k=1}^\infty \|\mathbf{\widehat U}^{k+1}-\mathbf{\widehat U}^k\|_F^2 < \infty$. Furthermore, if the penalty function $p(\cdot, \gamma)$ is semi-algebraic, then $\sum_{k=0}^\infty \|\mathbf{\widehat U}^{k+1}-\mathbf{\widehat U}^k\|_F<\infty$ and the sequence $ \{\mathbf{\widehat U}^k \}$  
converges to a stationary point of $\mathcal{L}_{\eta}$. Furthermore, the best running convergence rate of Algorithm~\ref{alg:ours} is $o(1/k)$.
\end{theorem}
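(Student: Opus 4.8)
The plan is to retrace the proof of Theorem~\ref{th:exact} while carrying the inexactness as a controlled, summable perturbation. The key observation is that the CG output of \eqref{inverse} is the \emph{exact} minimizer of a residual-perturbed subproblem: since \eqref{inverse} is precisely the stationarity system of the strongly convex quadratic $F(\mathbf{X})$, the inexact iterate satisfies $\nabla_{\mathbf{X}} F = \mathbf{t}_x^k$ (read as a $d\times n$ matrix) at the produced point, and symmetrically for $\mathbf{Y}$. The stopping rule \eqref{stop:ADM} forces $\|\mathbf{t}_x^k\|_2,\|\mathbf{t}_y^k\|_2 \le 10^3\sqrt{dn}/(k+1)^{1.2}$, so both residual sequences are summable and therefore also square-summable; this summability is the single quantitative fact that drives the whole argument.

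First I would re-derive the sufficient-decrease inequality in the inexact regime. The $\mathbf{P}$-, $\mathbf{Q}$- and dual updates are identical to the exact scheme, so their descent contributions carry over verbatim from Theorem~\ref{th:exact}; only the $\mathbf{X}$- and $\mathbf{Y}$-blocks must be reworked. Let $\mu_x>0$ be the strong-convexity modulus of the $\mathbf{X}$-subproblem objective $F$ at iteration $k$ (guaranteed by the Bregman/proximal term $\tfrac12\|\mathbf{X}-\mathbf{X}^k\|_F^2$ together with $\tfrac{\alpha}{2}\|\mathbf{X}\|_F^2$). Applying strong convexity at the inexact minimizer yields, for the $\mathbf{X}$-step, a decrease of $F$ bounded below by $\langle \mathbf{t}_x^k, \Delta_x^k\rangle + \tfrac{\mu_x}{2}\|\Delta_x^k\|_F^2$ where $\Delta_x^k$ is the $\mathbf{X}$-increment, and a Young splitting $|\langle \mathbf{t}_x^k,\Delta_x^k\rangle| \le \tfrac{\mu_x}{4}\|\Delta_x^k\|_F^2 + \tfrac{1}{\mu_x}\|\mathbf{t}_x^k\|_2^2$ recovers the clean quadratic descent of the exact case up to the additive error $\tfrac{1}{\mu_x}\|\mathbf{t}_x^k\|_2^2$. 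Summing the resulting per-iteration bound $\mathcal{L}_\eta(\widehat{\mathbf{U}}^{k+1}) \le \mathcal{L}_\eta(\widehat{\mathbf{U}}^{k}) - c\|\widehat{\mathbf{U}}^{k+1}-\widehat{\mathbf{U}}^{k}\|_F^2 + \tfrac{1}{\mu_x}\|\mathbf{t}_x^k\|_2^2 + \tfrac{1}{\mu_y}\|\mathbf{t}_y^k\|_2^2$, using that $\mathcal{L}_\eta$ is bounded below on the region of Assumption~\ref{bound} (as in Theorem~\ref{th:exact}) and that the error terms are summable, gives $\sum_k\|\widehat{\mathbf{U}}^{k+1}-\widehat{\mathbf{U}}^{k}\|_F^2 < \infty$, which is the first claim.

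For the second claim I would invoke the inexact Kurdyka--{\L}ojasiewicz framework. The augmented Lagrangian \eqref{lagrangian} is semi-algebraic whenever $p(\cdot,\gamma)$ is, being a finite sum of polynomials and semi-algebraic compositions, and hence satisfies the KL inequality on the bounded set provided by Assumption~\ref{bound}. The remaining ingredient is a relative-error bound $\mathrm{dist}(0,\partial\mathcal{L}_\eta(\widehat{\mathbf{U}}^{k+1})) \le b\big(\|\widehat{\mathbf{U}}^{k+1}-\widehat{\mathbf{U}}^{k}\|_F + \|\mathbf{t}_x^k\|_2 + \|\mathbf{t}_y^k\|_2\big)$, obtained by assembling a subgradient of $\mathcal{L}_\eta$ at $\widehat{\mathbf{U}}^{k+1}$ from the per-block optimality residuals and the consecutive-iterate differences; the inexactness in the $\mathbf{X}$/$\mathbf{Y}$ stationarity conditions contributes precisely the extra $\mathbf{t}_x^k,\mathbf{t}_y^k$ terms. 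Feeding sufficient decrease, this relative-error bound, and the KL inequality into the standard summable-error argument (cf.~\cite{bolte2014proximal}) yields the finite-length property $\sum_k\|\widehat{\mathbf{U}}^{k+1}-\widehat{\mathbf{U}}^{k}\|_F < \infty$, where $\sum_k(\|\mathbf{t}_x^k\|_2+\|\mathbf{t}_y^k\|_2)<\infty$ is exactly what lets the perturbed recursion telescope. Finite length makes $\{\widehat{\mathbf{U}}^k\}$ Cauchy, hence convergent, and passing to the limit in the perturbed optimality conditions (with $\mathbf{t}_x^k,\mathbf{t}_y^k\to 0$) identifies the limit as a stationary point of $\mathcal{L}_\eta$.

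Finally, the $o(1/k)$ running rate is a consequence of square-summability alone: writing $a_k=\|\widehat{\mathbf{U}}^{k+1}-\widehat{\mathbf{U}}^{k}\|_F^2$, the tail $\sum_{i>\lfloor k/2\rfloor}a_i\to 0$ while $\tfrac{k}{2}\min_{1\le i\le k}a_i \le \sum_{i=\lfloor k/2\rfloor+1}^{k}a_i$, so $\min_{1\le i\le k}\|\widehat{\mathbf{U}}^{i+1}-\widehat{\mathbf{U}}^{i}\|_F^2 = o(1/k)$. I expect the main obstacle to be the inexact relative-error bound of the third step: one must verify that substituting CG outputs for the exact $\mathbf{X}$/$\mathbf{Y}$ minimizers perturbs $\mathrm{dist}(0,\partial\mathcal{L}_\eta(\widehat{\mathbf{U}}^{k+1}))$ by no more than the controllable quantities $\|\mathbf{t}_x^k\|_2,\|\mathbf{t}_y^k\|_2$ without disturbing the Bregman-induced descent that was the crux of the exact proof, and that the specific decay $(k+1)^{-1.2}$ mandated by \eqref{stop:ADM} is simultaneously summable (for finite length) and square-summable (for the descent bound) — the delicate part being this joint bookkeeping of the two perturbed recursions.
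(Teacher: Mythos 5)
Your proposal follows essentially the same route as the paper's own proof: it views the CG output as the \emph{exact} minimizer of a linearly perturbed subproblem (so the stationarity residual is exactly $\mathbf{t}_x^k$), absorbs the perturbation into the sufficient-decrease inequality via a Young-type splitting, uses the summability of $\|\mathbf{t}_x^k\|_2,\|\mathbf{t}_y^k\|_2$ guaranteed by \eqref{stop:ADM} to obtain $\sum_k\|\mathbf{\widehat U}^{k+1}-\mathbf{\widehat U}^k\|_F^2<\infty$, and then runs the Kurdyka--{\L}ojasiewicz finite-length argument with the residuals entering both the descent and the relative-error (subgradient distance) bounds, exactly as in the paper's Appendix~D. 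The only minor divergences are that your unconditional tail-sum argument for the $o(1/k)$ best running rate is actually cleaner than the paper's (which conditions on the increments being non-increasing), while your remark that the dual blocks ``carry over verbatim'' is slightly imprecise --- the bound linking dual to primal increments is derived from the perturbed stationarity relation and therefore also picks up extra $\|\mathbf{t}_x^k\|_2^2$, $\|\mathbf{t}_y^k\|_2^2$ terms --- but these are summable perturbations of the same type, so your bookkeeping absorbs them without harm.
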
 
Different from using the exact solution in Theorem~\ref{th:exact},  
Theorem~\ref{th:inexact} 
only requires $\mathbf{\widehat X}^k$ to satisfy $\sum_{k=1}^{\infty}\|\mathbf{t}_x^k\|_2<\infty$ (similarly for $\mathbf{\widehat Y}^k$). This extension allows more efficient methods to find inexact solutions of~(\ref{inverse}), e.g., the CG approach used in our algorithm.


\section{Statistical Properties}\label{model_PF}
 
Fully characterizing the statistical properties of the proposed
class of penalty functions in Section~\ref{OF} is challenging.
Instead, we restrict to an important class, where the latent vectors can be divided into subgroups, and rigorously derive the estimation error bound for this class. 
More importantly, our theoretical analysis motivates two interesting directions for our framework, i.e., subgroup identification and adaptive weights.  
We first introduce the subgroup-based model in Section~\ref{ss:subgroup}. Then, the estimation error is characterized in Section~\ref{com_bound}.
This characterization directly motivates us the use of adaptive weights in Section~\ref{ss:weight}.    

\subsection{Subgroup-based Model}\label{ss:subgroup}
Assume that there exists a ground truth low-rank matrix $\mathbf{M}^*\in\mathbb{R}^{n\times m}$, which can be factored as $\mathbf{M^*}=\mathbf{X^*Y^*}^T$, where $\mathbf{X^*} \in\mathbb{R}^{n\times d}$, and $\mathbf{Y^*} \in\mathbb{R}^{m\times d}$. The observations are given as $\Psi_{\Omega}(\mathbf{M})=\Psi_{\Omega}(\mathbf{M^*}+\mathbf{N})$, which are corrupted by the additive noise $\mathbf{N}$ composed of i.i.d. Gaussian entries $\mathcal{N}(0,\sigma^2)$ with zero mean and variance $\sigma^2$. For an integer $s$ with $s<nm$, assume that each index pair $(i,j)\in[n]\times[m]$ is included in the observed index set $\Omega$ independently with probability $s/(nm)$. The elements of the matrix $\Psi_{\Omega}(\mathbf{M})$ are independent conditioned on the set $\Omega$. Assume $\|\mathbf{X}^*\|_{\infty}\leq C_x$, $\|\mathbf{Y}^*\|_{\infty}\leq C_y$, and $\|\mathbf{M^*}\|_{\infty}\leq C_m$, respectively.

 The feature vectors $\{\mathbf{x}^*_i\}$ and $\{\mathbf{y}^*_j\}$ form some latent subgroups, where two features $\mathbf{x}^*_i$ and $\mathbf{x}^*_j$ (respectively $\mathbf{y}^*_s$ and $\mathbf{y}^*_t$) are considered in the same subgroup if  $\mathbf{x}^*_i=\mathbf{x}^*_j$ (respectively $\mathbf{y}^*_s=\mathbf{y}^*_t$). One key difference with the existing group based models, e.g., the group sparsity model~\cite{kim2012group}, is that
the subgroup structure in our model is not assumed to be known a priori. Studies~\cite{delporte2013socially} have shown that the users with similar types in recommender systems often have the same features, implying a natural subgroup structure. 
Nevertheless, our subgroup-based model can be fairly general.
For example, the number of subgroups can even be as large as the matrix dimension.  

To facilitate analysis, we introduce additional useful notations. 
Let $\mathcal{G}(\mathbf{X})=(\mathcal{G}^x_1, \cdots, \mathcal{G}^x_{k_x})$ and $\mathcal{G}(\mathbf{Y})=(\mathcal{G}^y_1, \cdots, \mathcal{G}^y_{k_y})$ be two sets of 
mutually exclusive partitions of the indices $\{1,..., n\}$ and $\{1,..., m\}$, which satisfy $\mathbf{x}_i=\mathbf{x}_j$ for $\forall\; i, j \in \mathcal{G}^x_u$, $1\leq u \leq k_x$ and  $\mathbf{y}_s=\mathbf{y}_t$ for $\forall\; s, t \in \mathcal{G}^y_v$, $1\leq v \leq k_y$.  
We  use $|\mathcal{G}(\mathbf{X})|=k_x$ and $|\mathcal{G}(\mathbf{Y})|=k_y$ to denote the number of 
subgroups of $\{\mathbf{x}_i\}$ and $\{\mathbf{y}_j\}$, respectively. 

\subsection{Estimation Error Bound}\label{com_bound} 

We first introduce the complexity-regularized maximum likelihood estimator \cite{li2000mixture,kolaczyk2005multiscale}, and show it is a special case of our framework. Then,  we prove that this estimator achieves a lower error bound
compared to the standard trace-norm regularized matrix completion without pairwise penalties.
 
Consider the following complexity-regularized maximum likelihood estimator
\begin{align}\label{esjky}
\mathbf{\widehat M} & =\arg\min\limits_{\mathbf{H} \in\mathcal{H}} \|\Phi_{\Omega}(\mathbf{M-H})\|_F^2  +\lambda(|\mathcal{G}(\mathbf{X})|+|\mathcal{G}(\mathbf{Y})|) ,
\end{align}
where the finite set $\mathcal{H}$ is given as
\begin{align}\label{mathM}
\mathcal{H}\overset{\Delta}{=}\big\{\mathbf{H=XY}^T:\mathbf{X}\in\mathcal{X},\mathbf{Y}\in{\mathcal{Y}}, \|\mathbf{H} \|_{\infty}\leq C_m\big\},
\end{align} 
with the set $\mathcal{X}$ in~(\ref{mathM}) defined by
\begin{align}\label{mathX}
\mathcal{X}\overset{\Delta}{=}\big\{\mathbf{X}\in\mathbb{R}^{n\times d}: &X_{ij}\in\{-C_x+\frac{2C_x} {K}t, t=0,...,K-1\} \big\},
\end{align}
where $K=2^{\lceil \mu\log_2\,(n \vee m)\rceil}$, $n\vee m = \max(n,m)$ and $\mu>1$. 
In a symmetrical way, we can also define the set $\mathcal{Y}$.

Using the following proposition, proved in Appendix~\ref{apen:E}, we show that the estimator~(\ref{esjky}) is a special case of our framework~(\ref{eq:genform4}).
 \begin{prop}\label{cl1}
 Let $w_{ij}=1/\left(K_i(n-K_i)\right)$, where  $K_i$ is the total number of feature vectors $\mathbf{x}_s$ that satisfy $\|\mathbf{x}_s-\mathbf{x}_i \|_2<2C_x/K$. Then, we have for all  $\mathbf{X}\in\mathcal{X}$, 
\begin{align}\label{gxgy}
|\mathcal{G}(\mathbf{X})|=\sum_{i,j}w_{ij} \Upsilon(\|\mathbf{x}_i-\mathbf{x}_j\|_2),
\end{align}
where the indicator function $\Upsilon(z)=0$ if $z<2C_x/K$ and $1$ otherwise.  A symmetrical result holds for $|\mathcal{G}(\mathbf{Y})|$. 
\end{prop}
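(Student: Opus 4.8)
The plan is to exploit the discrete structure of $\mathcal{X}$ to convert the threshold indicator $\Upsilon$ into an exact subgroup-membership test, and then collapse the double sum through a short counting argument. First I would establish the key discretization fact: since every entry $X_{ij}$ lies on the uniform grid $\{-C_x + (2C_x/K)t : t = 0, \ldots, K-1\}$ with spacing exactly $2C_x/K$, any two rows $\mathbf{x}_i$ and $\mathbf{x}_s$ either coincide or differ in at least one coordinate by a nonzero multiple of $2C_x/K$, hence by at least $2C_x/K$ in absolute value. Consequently $\|\mathbf{x}_s - \mathbf{x}_i\|_2 < 2C_x/K$ holds \emph{if and only if} $\mathbf{x}_s = \mathbf{x}_i$, i.e. if and only if $s$ and $i$ lie in the same subgroup. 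This single fact pins down both quantities in the statement: $\Upsilon(\|\mathbf{x}_i - \mathbf{x}_j\|_2)$ equals $1$ exactly when $i$ and $j$ are in different subgroups, and $K_i$ equals the size of the subgroup containing $i$.

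Next I would fix notation for the counting step. Writing $\mathcal{G}(\mathbf{X}) = (\mathcal{G}^x_1, \ldots, \mathcal{G}^x_{k_x})$ with subgroup sizes $n_u = |\mathcal{G}^x_u|$ (so that $\sum_u n_u = n$), the discretization fact gives $K_i = n_{u(i)}$ whenever $i \in \mathcal{G}^x_{u(i)}$, and therefore $w_{ij} = 1/\bigl(n_{u(i)}(n - n_{u(i)})\bigr)$. For a fixed $i$ in subgroup $u$, the indicator $\Upsilon(\|\mathbf{x}_i - \mathbf{x}_j\|_2)$ is nonzero for precisely the $n - n_u$ indices $j$ lying outside $\mathcal{G}^x_u$, so the inner sum over $j$ collapses to $(n - n_u)\cdot \tfrac{1}{n_u(n - n_u)} = 1/n_u$.

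Finally I would sum over $i$ and regroup by subgroup:
\begin{align*}
\sum_{i,j} w_{ij}\,\Upsilon(\|\mathbf{x}_i - \mathbf{x}_j\|_2) = \sum_{u=1}^{k_x} \sum_{i \in \mathcal{G}^x_u} \frac{1}{n_u} = \sum_{u=1}^{k_x} n_u \cdot \frac{1}{n_u} = k_x = |\mathcal{G}(\mathbf{X})|,
\end{align*}
and the argument for $|\mathcal{G}(\mathbf{Y})|$ is entirely symmetric. The one genuinely load-bearing step is the discretization fact in the first paragraph; everything downstream is bookkeeping. The subtlety I would check most carefully is the interplay between the strict inequality defining $\Upsilon$ (and $K_i$) and the grid spacing: the threshold $2C_x/K$ equals the minimum nonzero coordinate gap, so a single differing coordinate already forces $\|\mathbf{x}_s-\mathbf{x}_i\|_2 \ge 2C_x/K$ and leaves the ``$<2C_x/K$'' regime, while identical rows give distance $0$; any mismatch here would break the exact identification. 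I would also remark that the diagonal terms $i=j$ contribute nothing since $\Upsilon(0)=0$, so it is immaterial whether $\sum_{i,j}$ is read over ordered pairs or over all index pairs, and that the asymmetry of $w_{ij}$ (which depends only on $i$) is exactly what makes the ordered-pair sum telescope to the integer $k_x$.
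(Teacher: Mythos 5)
Your proof is correct and follows essentially the same route as the paper's: both use the grid spacing $2C_x/K$ of $\mathcal{X}$ to show that $\|\mathbf{x}_i-\mathbf{x}_j\|_2 < 2C_x/K$ holds exactly when $i$ and $j$ lie in the same subgroup (so $K_i$ is the subgroup size), collapse the inner sum over $j$ to $1/K_i$, and then sum over $i$ grouped by subgroup to obtain $k_x$. If anything, you spell out the discretization step more explicitly than the paper, which simply invokes the definition of $\mathcal{X}$; the substance is identical.
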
  
Moreover, the constraints $\|\mathbf{X}\|_{\normalfont{\infty}}\leq C_x$ and $\|\mathbf{Y} \|_{\normalfont{\infty}}\leq C_y$ in~(\ref{mathX}) play a similar role as the regularizer $\alpha(\mathbf{\|X\|}_F^2+\mathbf{\|Y\|}_F^2)/2$ in our framework \eqref{eq:genform4}.
 In the following theorem, we provide an error bound for the estimator~(\ref{esjky}).
\begin{theorem}\label{th3}
Set $\lambda\geq 8\mu d  \left(\sigma^2+4C_m^2/3\right) \log \left(n \vee m\right)$, then the estimator~(\ref{esjky}) satisfies the error bound
\begin{align}\label{bbound}
&\frac{1}{nm}\mathbb{E}\left[\|\mathbf{\widehat M}-\mathbf{M}^*\|_F^2\right] \leq \frac{C_1\log s}{s}+ \frac{dC_2}{(\mu\log(n \vee m))^2}+\nonumber
\\&\left(\frac{6\lambda}{\log(n \vee m)}+\mu C_3\right)
\bigg(\frac{(m+n+2)}{s}\log(n \vee m)+\nonumber
\\&\;\frac{(|\mathcal{G}(\mathbf{X}^*)|+|\mathcal{G}(\mathbf{Y}^*)|)d}{s}\log(n \vee m)\bigg),
\end{align}
where the expectation is with respect to the joint distribution of $\left( \Omega, \Phi_{\Omega}(\mathbf{M}) \right)$ and $C_1,C_2,C_3$ are positive constants related to $C_x,C_y$ and $C_m$.
\end{theorem}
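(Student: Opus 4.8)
The plan is to recognize \eqref{esjky} as a complexity-regularized maximum likelihood estimator over the finite candidate class $\mathcal{H}$ and to follow the information-theoretic oracle-inequality machinery of \cite{li2000mixture,kolaczyk2005multiscale}. Since the noise $\mathbf{N}$ is i.i.d.\ Gaussian, the negative log-likelihood of the observed entries is proportional to $\|\Phi_{\Omega}(\mathbf{M}-\mathbf{H})\|_F^2$ up to additive constants, so \eqref{esjky} is exactly a penalized maximum likelihood estimator with penalty $\lambda(|\mathcal{G}(\mathbf{X})|+|\mathcal{G}(\mathbf{Y})|)$. The target is an oracle inequality of the form
\begin{align*}
\frac{1}{nm}\mathbb{E}\big[\|\mathbf{\widehat M}-\mathbf{M}^*\|_F^2\big]\lesssim \inf_{\mathbf{H}\in\mathcal{H}}\Big\{\tfrac{1}{nm}\|\mathbf{H}-\mathbf{M}^*\|_F^2+\tfrac{\mathrm{pen}(\mathbf{H})+L(\mathbf{H})}{s}\Big\}+\frac{C_1\log s}{s},
\end{align*}
after which the three terms in \eqref{bbound} are produced by bounding this index of resolvability at a well-chosen $\mathbf{H}$.

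First I would construct a prefix code for $\mathcal{H}$ and verify a Kraft-type summability bound $\sum_{\mathbf{H}\in\mathcal{H}}e^{-L(\mathbf{H})}\le 1$, which is the hypothesis powering the deviation step. Any $\mathbf{X}\in\mathcal{X}$ with $k_x=|\mathcal{G}(\mathbf{X})|$ subgroups is encoded by (i) the assignment of the $n$ rows to $k_x$ groups, costing at most $n\log(n\vee m)$ nats, and (ii) the $k_x$ distinct grid vectors, each of the $d$ coordinates taking one of $K=2^{\lceil\mu\log_2(n\vee m)\rceil}$ values, costing $k_x d\log K\approx k_x d\,\mu\log(n\vee m)$; a symmetric encoding applies to $\mathbf{Y}$. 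Counting the number of such matrices and matching codelengths to the penalty is where the choice $\lambda\ge 8\mu d(\sigma^2+4C_m^2/3)\log(n\vee m)$ enters: it guarantees that $\lambda(k_x+k_y)$ dominates the per-subgroup grid-coding cost $(k_x+k_y)d\log K$, up to the effective variance factor $\sigma^2+4C_m^2/3$ demanded by the Gaussian deviation bound, while the cheaper partition-encoding cost is absorbed into the fixed $(m+n+2)\log(n\vee m)/s$ term of \eqref{bbound}.

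Next I would establish the deviation inequality converting the penalized-likelihood optimality of $\mathbf{\widehat M}$ into a risk bound. Conditioning on the sampling pattern $\Omega$, the observed residual $\|\Phi_{\Omega}(\mathbf{M}-\mathbf{H})\|_F^2$ is a sum of independent sub-Gaussian terms, and a Chernoff-type tail bound combined with the Kraft inequality yields control uniformly over $\mathbf{H}\in\mathcal{H}$. Because $\Omega$ is itself random, with each index retained independently with probability $s/(nm)$, I would then pass from the conditional (observed-entry) loss to the full Frobenius loss using $\mathbb{E}|\Omega|=s$ together with a concentration argument for $|\Omega|$; the mismatch between $|\Omega|$ and its mean is what produces the $C_1\log s/s$ term. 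To bound the infimum I would take $\mathbf{H}=\mathbf{X}^Q(\mathbf{Y}^Q)^T$, where $\mathbf{X}^Q,\mathbf{Y}^Q$ are the entrywise nearest-grid quantizations of $\mathbf{X}^*,\mathbf{Y}^*$ in \eqref{mathX}. Quantization is a deterministic map, so it preserves equalities of rows and hence $|\mathcal{G}(\mathbf{X}^Q)|\le|\mathcal{G}(\mathbf{X}^*)|$ and $|\mathcal{G}(\mathbf{Y}^Q)|\le|\mathcal{G}(\mathbf{Y}^*)|$; this is precisely what turns the oracle complexity into the data-generating complexities $|\mathcal{G}(\mathbf{X}^*)|,|\mathcal{G}(\mathbf{Y}^*)|$ appearing in \eqref{bbound}. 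The entrywise quantization errors $\|\mathbf{X}^Q-\mathbf{X}^*\|_\infty\le C_x/K$ and $\|\mathbf{Y}^Q-\mathbf{Y}^*\|_\infty\le C_y/K$, together with the boundedness constraints, control $\frac{1}{nm}\|\mathbf{H}-\mathbf{M}^*\|_F^2$ and yield the bias term $dC_2/(\mu\log(n\vee m))^2$; here I would carefully track the precise dependence on the grid resolution $K$, since that resolution is what sets this term.

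The main obstacle I anticipate is the coupling of two sources of randomness: making the Gaussian deviation bound uniform over the exponentially large (but finite) class $\mathcal{H}$ via the Kraft inequality, while simultaneously controlling the random sampling operator $\Phi_{\Omega}$ so that the observed-entry empirical risk faithfully reflects the full Frobenius risk. Calibrating the constant in $\lambda$ so that the subgroup penalty exactly absorbs the grid-coding cost, while keeping the partition-encoding and sampling-fluctuation contributions as the additive $(m+n+2)\log(n\vee m)/s$ and $C_1\log s/s$ terms, is the delicate part of the accounting, and it is also where the specific form of the required lower bound on $\lambda$ is pinned down.
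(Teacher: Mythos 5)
Your proposal is correct and takes essentially the same approach as the paper: recognize \eqref{esjky} as a complexity-regularized MLE over the finite class $\mathcal{H}$, build a prefix code (row-to-subgroup assignments plus the distinct grid vectors, with the constant partition cost surfacing as the $(m+n+2)\log(n\vee m)/s$ term and $\lambda$ calibrated against the grid-coding cost times the effective variance $\sigma^2+4C_m^2/3$) satisfying a Kraft inequality, apply the resulting oracle inequality, and evaluate it at the entrywise-quantized candidate $\mathbf{H}_0=\mathbf{X}_0\mathbf{Y}_0^T$ to get the bias term and inherit the subgroup counts of $\mathbf{X}^*,\mathbf{Y}^*$. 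The only material difference is that the paper imports the deviation step wholesale as a quoted lemma (Lemma A.1 of Soni et al.) and then specializes to Gaussian noise via explicit KL-divergence and Hellinger-affinity computations, whereas you propose to re-derive that step from Chernoff bounds, the Kraft inequality, and concentration of $|\Omega|$ — the same machinery, just reproved rather than cited.
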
 
We refer to  Appendix~\ref{apen:F} for its proof. We compare the error bound~(\ref{bbound}) with the one for the trace-norm regularized matrix completion obtained by~\cite{koltchinskii2011nuclear}, which assumes the observed entries are corrupted by additive Gaussian noise.  
Instantiating \cite[Corollary~2]{koltchinskii2011nuclear} using our notations, we obtain an error bound
\begin{align}\label{bound2}
\frac{\|\mathbf{\widehat M} - \mathbf{M}^*\|_F^2}{nm}\leq c(\sigma^2+t)\frac{(n+m)d}{s}\log(n\vee m)
\end{align}
with high probability, where $c$ and $t$ are two constants related to $C_x$, $C_y$ and $C_m$. Note that 
our  error bound 
\begin{align}
O\Big(\big((|\mathcal{G}(\mathbf{X}^*)|+|\mathcal{G}(\mathbf{Y}^*)|)d+m+n\big)\log(n \vee m)/s\Big)\nonumber
\end{align}
can be much lower than~(\ref{bound2}) provided that  the total number of subgroups $|\mathcal{G}(\mathbf{X}^*)|+|\mathcal{G}(\mathbf{Y}^*)| \ll n+m$. However, to directly optimize~(\ref{esjky}) with the regularizer~(\ref{gxgy}) is undesirable in practice since the indicator function complicates the optimization and 
 the sets $\mathcal{X,Y}$ are discretized. Nonetheless, Theorem~\ref{th3} provides some theoretical justifications on the benefits of incorporating nonconvex pairwise penalties in reducing the estimation error.

\section{Numerical Experiments} \label{ex}
In this section, 
we first provide some heuristic rules to adaptively set the graph weights based on the partially observed data matrix in Section~\ref{ss:weight}. We then compare our algorithm $\bf{LLFMC}$ to the trace-norm regularized {\bf MC} without pairwise penalties~\cite{candes2009exact} and  the state-of-the-art GRMF algorithm, i.e., the graph regularized alternating Least Squares ({\bf GRALS})~\cite{rao2015collaborative} on different graphs and evaluate the effectiveness of the proposed adaptive weights in Section~\ref{ss:weight}. We use both synthetic and real datasets to verify the performance of the proposed framework.  
All experiments are performed using Matlab/Octave on a PC with macOS system, Intel 5 core 2.9 GHz CPU and 16G RAM. 

\subsection{Adaptive Weights Selection }\label{ss:weight}

Existing works~\cite{ma2011recommender,kalofolias2014matrix,rao2015collaborative,zhao2015expert} often construct the weight matrices $\mathbf{W}$ and $\mathbf{U}$ in \eqref{eq:genform4}
by incorporating side information, which may not always be available. Motivated by the analysis in Section~\ref{com_bound}, we introduce adaptive weights that are computed based on the partially observed matrix itself without any additional side information. Due to symmetry, we only explain how to adaptively select $w_{ij}$, and $u_{st}$ follows similarly. We consider the following weighted nearest neighbor rule for generating the weights $w_{ij}$, where
\begin{align}\label{wij}
w_{ij}=\mathbf{1}_{ij}^{k_w}/{\rm d}(\mathbf{x}_i,\mathbf{x}_j ),\quad i \neq j,
\end{align}
where ${\rm d}\left(\mathbf{x}_i,\mathbf{x}_j\right)$ approximates the similarity between $\mathbf{x}_i$ and $\mathbf{x}_j$,  and the indicator function $\mathbf{1}_{ij}^{k_w}$ is $1$ if $\mathbf{x}_j$ belongs to one of the $k_w$-nearest neighbors of $\mathbf{x}_i$ and $0$ otherwise. 
The rule~(\ref{wij}) makes $w_{ij}$ large during the computation to enforce $\mathbf{x}_i \approx \mathbf{x}_j$, similar to the adaptive weight chosen in~\cite{zou2006adaptive} for adaptive LASSO. 


Below we provide two heuristic ways to approximate ${\rm d}\left(\mathbf{x}_i,\mathbf{x}_j\right)$ using $\Psi_{\Omega}(\mathbf{M})$. Recall $\Omega_i=\{ j:(i,j)\in\Omega\}$ is the index set of the observed entries in the $i^{th}$ row of $\Psi_{\Omega}(\mathbf{M})$. The first rule is based on the observation that the $i^{th}$ and $j^{th}$ rows of $\mathbf{M}$ should be close if $\mathbf{x}_i\approx \mathbf{x}_j$ when the noise is not too large. Therefore, we define
 \begin{align}\label{d1}
{\rm d_1}\left(\mathbf{x}_i,\mathbf{x}_j\right)=\left\{ \begin{array}{ll}
\left(\frac{\sum_{t\in \Omega_{i}\cap\Omega_{j}}\left(M_{it}- M_{jt}\right)^2}{\left|\Omega_{i}\cap\Omega_{j}\right|}\right)^{\frac{1}{2}} , &  \mathcal{S}_{i}\cap\mathcal{S}_{j}\neq \emptyset \\
0 & \mbox{otherwise}
\end{array}\right. .
\end{align}
 The key idea of ${\rm d_1}\left(\mathbf{x}_i,\mathbf{x}_j\right)$ in~\eqref{d1} is to use the ratings of user $\mathbf{x}_i$ and user $\mathbf{x}_j$ on the same items to measure their similarity. However, the above rule does not work very well when the observations are very sparse, since the number of  items simultaneously rated by both user $\mathbf{x}_i$ and  user $\mathbf{x}_j$ can be quite small. In that case, we use
\begin{align}\label{d2}
{\rm d_2}\left(\mathbf{x}_i,\mathbf{x}_j\right)=\left(\frac{\sum_{t\in\Omega_i \cup \Omega_j}\left( \bar{M}_{it} - \bar{M}_{jt} \right)^2}{\left|\Omega_i \cup \Omega_j\right|}\right)^{\frac{1}{2}},
\end{align}
where
\begin{equation}
\bar{M}_{it}  = \begin{cases}
 M_{it}  & \text{ if }  t\in\Omega_i,\\
\sum\limits_{r\in \mathcal{R}_t}M_{rt}/|\mathcal{R}_t|,\, &{\text{ if }}t\in\Omega_j \setminus \Omega_i,\hspace{1cm} \nonumber
\end{cases}
\end{equation}
where  $\mathcal{R}_t$ is the set of indices of the observed entries for the $t^{th}$ column of $\mathbf{M}$. In \eqref{d2}, for the items rated only by user $\mathbf{x}_i$, we estimate the ratings of user $\mathbf{x}_j$ on these items by taking the average of the ratings on these items. This step is motivated by the observation that the ratings of each user are not very far away from their average. 
Using ${\rm d_2}\left(\mathbf{x}_i,\mathbf{x}_j\right)$, we obtain a solution $\mathbf{X}^{(1)}$ and $\mathbf{Y}^{(1)}$ of the optimization problem~(\ref{eq:genform4}) with weights computed by~(\ref{wij}). Next, we use $\mathbf{X}^{(1)}$ and $\mathbf{Y}^{(1)}$ to compute the distance by ${\rm d}\left(\mathbf{x}_i,\mathbf{x}_j\right)= \|\mathbf{x}^{(1)}_i-\mathbf{x}^{(1)}_j \|_2$. The performance of these heuristic rules will be examined in the experiments below.

\subsection{Synthetic Data with Subgroups}\label{reducedError} 
To verify Theorem~\ref{th3}, we generate a ground truth matrix $\mathbf{M}^*=\mathbf{X}^*\mathbf{Y}^{*T}\in\mathbb{R}^{n\times n}$, $\mathbf{X}^*\in\mathbb{R}^{n\times d}$, $\mathbf{Y}^*\in\mathbb{R}^{n\times d}$ in the following way. Let  $\mathbf{U}_{1}$ be composed of entries drawn uniformly at random from $[0,1]^d$ and $\{\mathbf{U}_{i+1}-\mathbf{U}_i,1\leq i<k_x\}$ be $\text{i.i.d.}$ and uniformly distributed on $[0,10]^d$. Similarly we generate $\mathbf{V}_{1}, \cdots,\mathbf{V}_{k_y}$. We evenly divide the feature vectors $\{\mathbf{x}^*_i\}$ into $k_x$  subgroups and set the features in the $i^{th}$ subgroup to be $\mathbf{U}_{i}$. We divide $\{\mathbf{y}^*_j\}$ in a similar way. When $k_x=k_y=n$, there exist no subgroups. Normalizing by a constant, we scale $\mathbf{M}^*$ to satisfy $\|\mathbf{M}^*\|_F=10^6$.
The observed entries of $\Psi_{\Omega}(\mathbf{M})$ are sampled from $\mathbf{M=M^*+N}$ uniformly at random with a sample rate $0<\rho<1$, with the noise $\mathbf{N}$ containing $\text{i.i.d.}$ entries following $\mathcal{N}(0,\sigma^2)$.  
Moreover, we use the rule~(\ref{d1}) to compute the graph weights by setting $k_w=0.15n$. We fix $n=200$ and $d=5$.
Let $\mathbf{\widehat{M}}$ be the estimate of $\mathbf{M}^*$. 
 
 \smallskip
\noindent{\bf Relative error.} We study three cases where the number of subgroups is small ($k_x=k_y=20$), medium ($k_x=k_y=50$) and large ($k_x=k_y=200$).  
 Fig.~\ref{errorcurve} shows that our method LLMFC with the MCP significantly reduces the recovery error compared to standard MC, even when no real subgroups exist. 
 \begin{figure}[ht]
\centering
\includegraphics[width=0.5\textwidth]{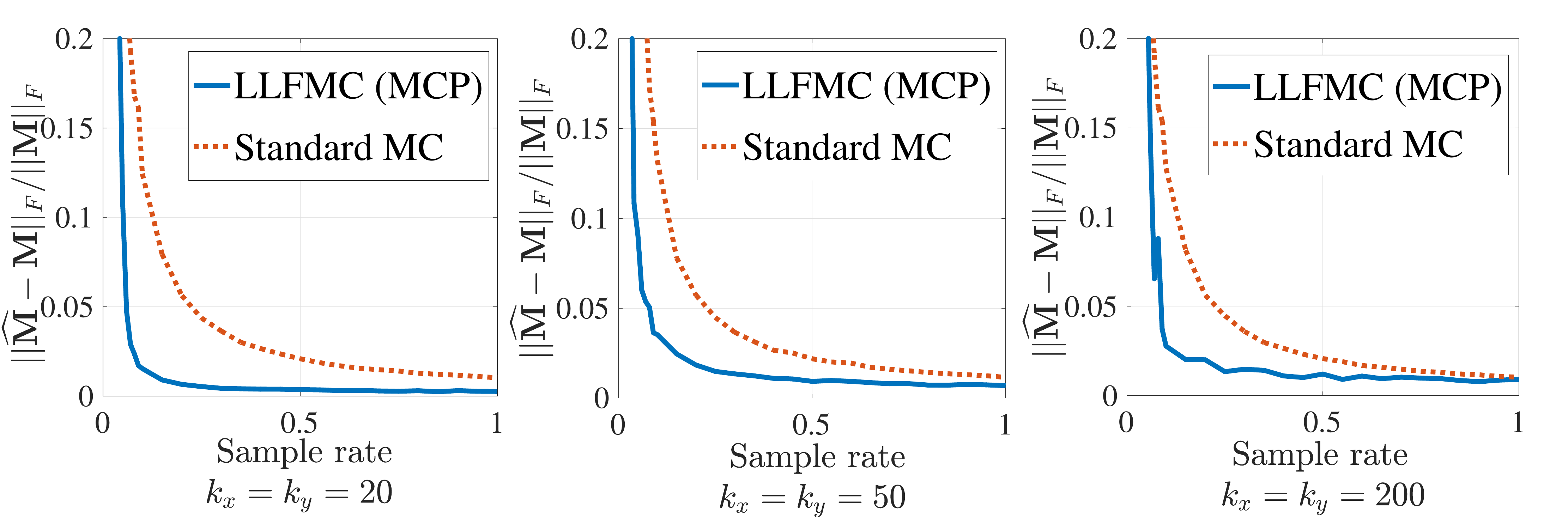} 
\caption{The relative error {w.r.t.}  sample rate: $n=200$, $ \sigma=10^{2}.$ }\label{errorcurve}
\end{figure} 

\noindent{\bf Subgroup identification.} Similar to~\cite{han2015learning}, we identify the subgroups for the recovered features $\{\mathbf{\widehat x}_i\}$ and $\{\mathbf{\widehat y}_j\}$ 
through the matrices $\mathbf{S}^x$, $\mathbf{S}^y$, where the features $\mathbf{\widehat x}_{u}$ and $\mathbf{\widehat x}_{v}$ (respectively $\mathbf{\widehat y}_{s}$ and $\mathbf{\widehat y}_{t}$) are in the same subgroup if and only if $S_{uv}^x=1$ (respectively $S_{st}^y=1$). We set $S_{uv}^x=1$ if $\|\mathbf{\widehat x}_{u}-\mathbf{\widehat x}_{v}\|_2<0.01\min\{\|\mathbf{\widehat x}_{u}\|_2,\|\mathbf{\widehat x}_{v}\|_2\}$ and~$0$ otherwise. Similarly, we define the matrix $\mathbf{S}^y$. Due to the limited space, we only visualize the matrix $\mathbf{S}^x$ in Fig.~\ref{subgroup}. The results clearly show that our method recovers the subgroup structure quite well. In contrast,  the standard MC cannot identify this latent structure.    
 
\begin{figure}[h]
\centering
\includegraphics[width=0.5\textwidth]{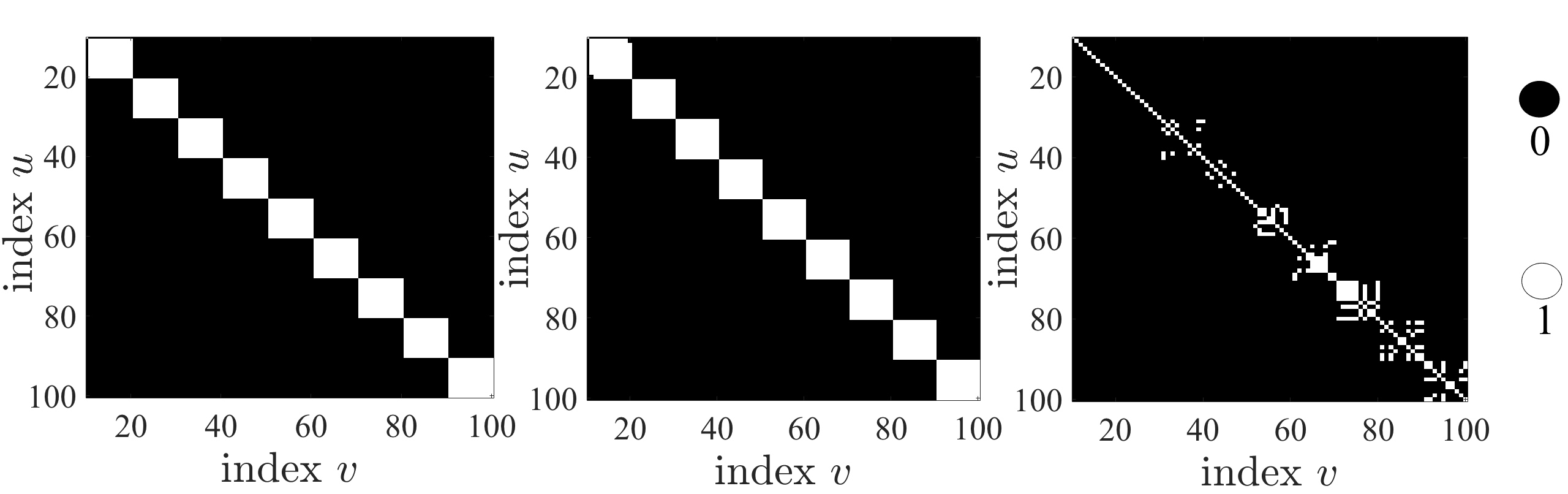} 
\caption{Illustration of $\mathbf{S}^x$ obtained by the ground truth (left), LLFMC (MCP) (middle) and the standard MC (right): $n=100$, $\sigma=100$, $k_x=k_y=10$ and the sample rate $\rho=0.3$. }\label{subgroup}
\end{figure}

\subsection{Comparison to GRALS on Real Data} \label{noncon}

We conduct the experiments on two groups of real collaborate filtering datasets: Jester\footnote{\url{http://goldberg.berkeley.edu/jester-data}}~\cite{goldberg2001eigentaste} and MovieLens\footnote{\url{http://grouplens.org/datasets/movielens}}~\cite{harper2016movielens}. Their statistics are presented in Table~\ref{t102}. The Jester datasets contain anonymous ratings of 100 jokes by users, where the ratings are real values ranging from $-10.00$ to $10.00$. 
For MovieLen100k and MotiveLens1M datasets, the ratings of the movies have 5 scores ($1$-$5$). 
 
\begin{table}[ht]
\begin{center}
\caption{Summary statistics of datasets} 
\scalebox{0.97}{
\begin{tabular}{| l |c|c |c|}
\hline
{Datasets} & \# of users ($n$) & \# of items ($m$)& Density\\ \hline \hline 
 Jester1 & 24983 & 100 & 0.7247 \\ 
 Jester2 & 23500 & 100 & 0.7272 \\ 
 Jester3 & 24938 & 100 & 0.2474 \\ 
 MovieLens100K& 943 & 1682 & 0.0630 \\ 
 MovieLens1M & 6040 & 3706 & 0.0447 \\ 
 \hline 
\end{tabular}\label{t102}
}
\end{center}
\end{table}

Following~\cite{rao2015collaborative}, we evaluate MovieLens100K using the five provided data splits. For MovieLens1M, we randomly split the observed ratings into a training set ($90\%$) and a test set ($10\%$).
For Jester datasets, we randomly choose ${10\%}$ of the observed ratings as the training set and the remaining ${90\%}$ as the test set. We use the root mean squared error ($\rm{RMSE}$) to measure the prediction performance, given by
${\rm RMSE}=\big(\sum_{i=1}^N\left(\widehat r_{i}-r_{i}\right)^2/N\big)^{1/2}$,
where $r_i$ is the observed rating in the test set, and $\widehat r_i$ is the predicted rating and $N$ is the total number of ratings in the test set. When reporting the performance, the best performance in each scenario is bolded.

For our algorithm LLFMC, we choose the regularization parameters $\gamma_X$ and $\gamma_Y$ from $2^{\{-2,-1,...,10\}}$, fix $\alpha=1$ and set $\eta=10^4$ to guarantee the convergence.
In addition, we fix $d=4$ for MovieLens datasets and $d=100$ for Jester datasets. 
For MCP~(\ref{mcp}), we select $t$ from $\{0.5,2,20\}$, and for the M-type penalty function, we fix $b=3$. 
We downloaded the  code of {\bf GRALS} from the author's website\footnote{\url{http://nikrao.github.io/Code.html}}, and changed the number of iterations from $10$ (default) to $100$ to ensure its convergence. 

\smallskip
\noindent{\bf Comparison with GRALS with side information.} We first use the same graph constructed in~\cite{rao2015collaborative} based on side information obtained therein. Table~\ref{t3} shows that our method still makes improvement over GRALS using the same graphs as in~\cite{rao2015collaborative}, which might be due to the bias reduction effect of using non-convex penalties. 
\begin{table}[h]

\begin{center} 
\caption{RMSE on MovieLens100K, using the same graphs as in~\cite{rao2015collaborative} that were constructed by side information.}
\scalebox{1}{
\begin{tabular}{ c|c|c |c|c}
\hline
{\bf Methods}  &MC &GRALS  & LLFMC (M-type) & LLFMC (MCP) \\ \hline\hline
 RMSE &$0.973$&$0.945$ & $0.930$& $ {\bf 0.927}$\\ 
 \hline
\end{tabular}\label{t3}
}
\end{center} 
\end{table}
 
\smallskip
\noindent{\bf Adaptive weights.} We test the performance of different algorithms under different adaptive schemes of weight selection. The first one is an unweighted 100-nearest neighbor graph, where the nearest neighbors are determined using the distance~(\ref{d2}), and the edges all weighted by $1$. Table~\ref{t2} summarizes the performance of different methods using an unweighted 100-nearest neighbor graph for both users and movies. It shows that with MCP and M-type penalties, our method outperforms GRALS in all datasets. 
\begin{table}[h] 
\small
\begin{center}
\captionsetup{justification=centering}
\caption{RMSE on real datasets using unweighted 100-nearest neighbor graphs.} 
\scalebox{0.95}{
\begin{tabular}{ l |c|c |c}
\hline
Datasets  & GRALS & LLFMC (MCP) & LLFMC (M-type) \\ \hline\hline
 Jester1 &$4.801 $ &$ 4.712 $& ${\bf 4.710}$\\ 
 Jester2 & $4.880$ & ${\bf 4.831}$& $4.838$\\ 
 Jester3 &$6.701$ & $6.651$ & ${\bf 6.634}$\\ 
 MovieLens100K & $0.950$ &${\bf 0.934}$& $0.936$ \\ 
 MovieLens1M  & $0.873$ &  ${\bf 0.859}$ &$0.861$\\ 
 \hline
\end{tabular}\label{t2}
}
\end{center}
\end{table}

We next examine the performance using the graph constructed based on our adaptive weights defined in~\eqref{wij} using the distance~\eqref{d2}.  
We set $k_w=k_u=10$ in our wights~(\ref{wij}) for MovieLens datasets and set $k_w=20$, $k_u=2$ for Jester datasets. Table~\ref{t4} summarizes the performance of different methods using the adaptive weights. Compared with Table~\ref{t2} and  Table~\ref{t3}, it can be seen that the proposed adaptive weights significantly improve both the performance of GRALS and our methods on all datasets.
In addition, our method still outperforms the competitors in almost all experiments. 
\begin{table}[!t]
\begin{center} 
\caption{RMSE on real datasets using the graph based on the adaptive weights~(\ref{wij}) using the distance~(\ref{d2}).}
\scalebox{0.95}{
\begin{tabular}{ l |c|c |c}
\hline
Datasets  & GRALS & LLFMC (MCP) & LLFMC (M-type) \\ \hline \hline
 Jester1 &$4.713$  & ${\bf 4.651}$ & $4.672$\\ 
 Jester2 & ${\bf 4.789}$ & $4.801$ & $4.792$\\ 
 Jester3 &${\bf 6.461}$ & $6.551$ &${\bf 6.461}$\\ 
 MovieLens100K & $0.919$ & ${\bf 0.909}$ & $0.914$ \\ 
 MovieLens1M  & $0.851$ &  ${\bf 0.847}$ &$0.850$\\ 
 \hline
\end{tabular}\label{t4}
}
\end{center} 
\end{table}

\smallskip
\noindent{\bf Subgroup identification.} 
We illustrate the subgroups of users and movies from the MovieLens100K dataset computed by LLFMC (MCP) following the similar manner of Fig.~\ref{subgroup}. Fig.~\ref{sub1} clearly shows 
that the users and movies exhibit subgroup structures; only $200$ users and $200$ movies are plotted for a better presentation. 
To get more useful information, we investigate the $2^{th}$ largest user subgroup in Fig.~\ref{sub3}. Interestingly, we find out that most users in this subgroup have ages between $25$ and $35$, consisting of mainly engineers and students.  
\begin{figure}[ht]
\centering
\vspace{-0.1in}
\includegraphics[width=0.5\textwidth]{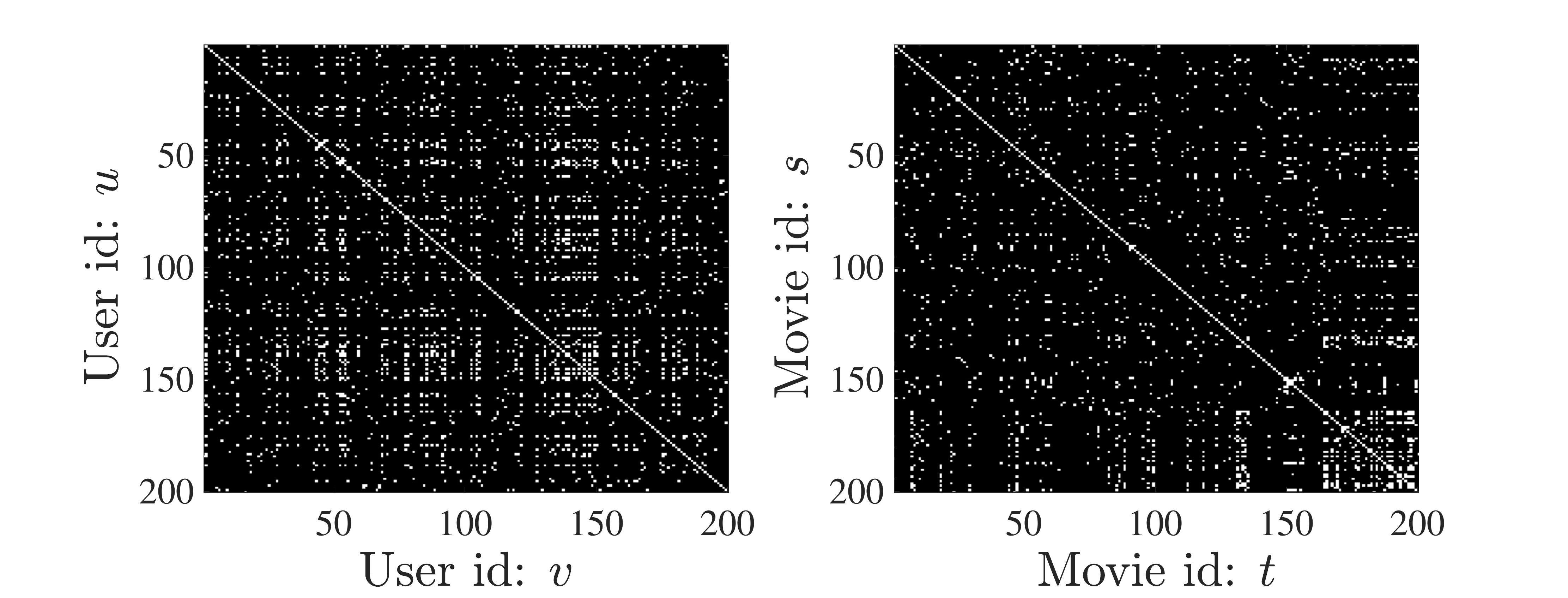} 
\caption{Illustration of $\mathbf{S}^x$ (left) and $\mathbf{S}^y$ (right): white spots imply that the users/items are in the same subgroup. } \label{sub1}
\end{figure}

\begin{figure}[ht!]
\centering
\includegraphics[width=0.5\textwidth]{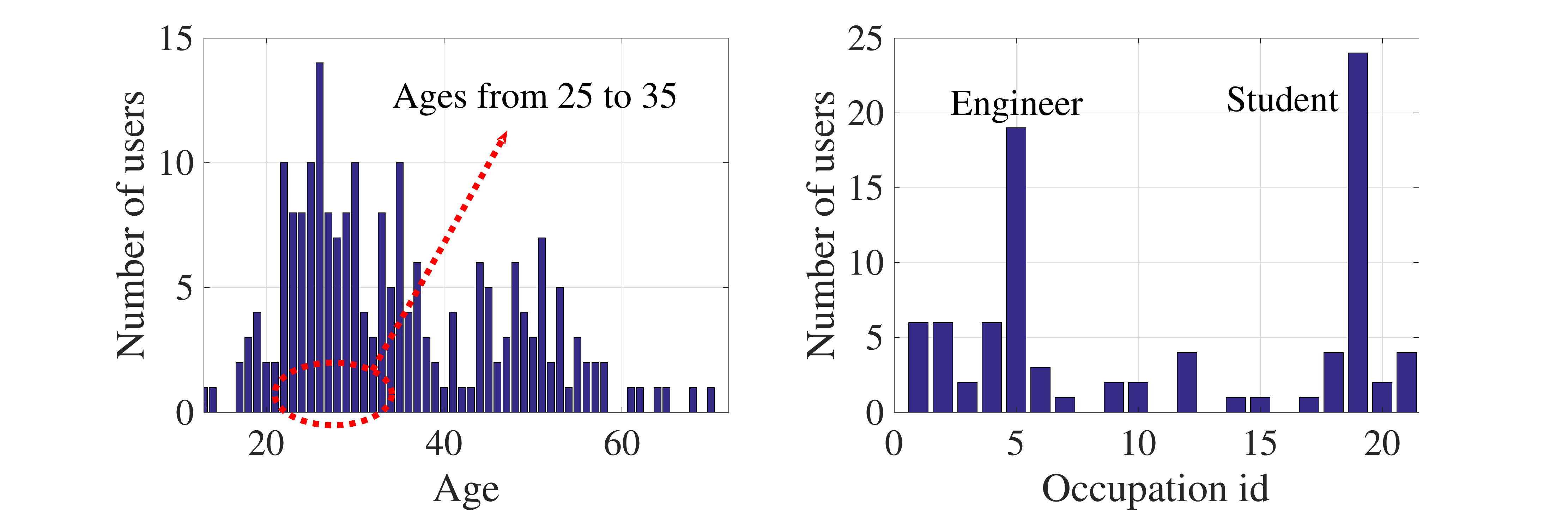} 
\caption{Further insights into the $2^{th}$ largest user subgroup. Left: the age distribution.  Right: the occupation distribution for the users with age between $25$ and $35$ in the same subgroup.} \label{sub3}
\end{figure}

\section{Conclusions} \label{sec:conclusions}
 
We propose a new optimization framework to learn latent features in low-rank matrix completion by a wide class of (non-)convex pairwise penalty functions.
To efficiently solve this class of optimization problems, we develop an efficient algorithm and prove its convergence guarantee. 
On the statistical side, we characterize the complexity-regularized maximum likelihood estimator under the subgroup-based model, which sheds light to the statistical benefit of incorporating pairwise penalty functions. Extensive experiments on both synthetic and real datasets confirm the superior performance of our framework.

\bibliography{refs,ref2}
\bibliographystyle{IEEEtran}

 \newpage

\appendix \label{sec:appendix}
 
\subsection{Proof of Proposition~\ref{nnzE}}\label{apen:A} 
Let $\mathbf{e}_i$ be the $i^{th}$ row vector of the matrix $\mathbf{E}_x$.  Recalling the definition of the matrix $\mathbf{E}_x$ given in Section~\ref{compute}, we  make the following three key observations for $\mathbf{E}_x$.
\begin{enumerate}
\item Each $\mathbf{e}_i$ has exactly $k_w$ non zeros, $1\leq i \leq n$. \label{i1}
\item For  any two row vectors $\mathbf{e}_i$ and $\mathbf{e}_j$, there exists at most one coordinate  at which $\mathbf{e}_i$ and $\mathbf{e}_j$ are both non-zero.\label{i2}
\item Each column vector of the matrix $\mathbf{E}_x$ has exactly two non zeros. \label{i3}
\end{enumerate}
Based on these observations, we claim that each row vector of $\mathbf{E}_x\mathbf{E}_x^T$, i.e., $\mathbf{e}_i\mathbf{E}_x^T=[\mathbf{e}_i\mathbf{e}_1^T,\mathbf{e}_i\mathbf{e}_2^T,...,\mathbf{e}_i\mathbf{e}_n^T]$ has at most $k_w+1$ non zeros for any $1\leq i\leq n$. We prove this result by contradiction. 
Suppose there exists a  vector $\mathbf{e}_s\mathbf{E}_x^T=[\mathbf{e}_s\mathbf{e}_1^T,\ldots,\mathbf{e}_s\mathbf{e}_n^T]$ that  has  $k_w+2$ nonzeros. Without loss of generality, we assume  that  $s=1$ and  $\mathbf{e}_1\mathbf{e}_t^T\neq 0$ for all $1\leq t\leq k_w+2$. 
Let $\mathcal{C}_1=\{c_u: \mathbf{e}_1(c_u)\neq 0, 1\leq u\leq k_w\}$ be a set of coordinates at which $\mathbf{e}_1$ is non-zero. 
Based on the observation~\ref{i2}), we have, for all $\mathbf{e}_t, 2\leq t\leq k_w+2$,  there exists exactly one coordinate in $\{d_s,1\leq s\leq k_w\}$ at which $\mathbf{e}_t$ is non-zero. Thus, there must exist one coordinate $c_v$ in $\mathcal{C}_1$ such that  $\mathbf{e}_1(c_v)\neq 0$, $\mathbf{e}_p(c_v)\neq 0$ and $\mathbf{e}_q(c_v)\neq 0$, for certain $2\leq p,q\leq k_w+2$. However, this contradicts the observation~\ref{i3}). By contradiction, we can conclude that each vector $\mathbf{e}_i\mathbf{E}_x^T$ has at most $k_w+1$ non zeros. Then, the proof is completed by summing over all rows of $\mathbf{E}_x\mathbf{E}_x^T$. 

\subsection{Proof of Theorem~\ref{th:exact}}\label{apen:C}

We first provide a useful property of our algorithm.
\begin{prop}[{\bf Full column rank}]\label{graph}
After Step 1 of the LLFMC algorithm, the matrices $\mathbf{E}_x$ and $\mathbf{E}_y$ defined in~(\ref{eq:constraint2}) are both full column rank.  Thus, there exist positive constants $\sigma_x,\sigma_y$ such that $\mathbf{E}_x^T\mathbf{E}_x\succeq \sigma_x \mathbf{I}$ and $\mathbf{E}_y^T\mathbf{E}_y\succeq \sigma_y \mathbf{I}.$  
\end{prop}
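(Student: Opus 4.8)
The plan is to identify $\mathbf{E}_x$ as the (signed) vertex–edge incidence matrix of the graph $\mathcal{G}_X=(V_X,\varepsilon_X)$: each column $\mathbf{e}_{l_1}-\mathbf{e}_{l_2}$ encodes an edge $l=(l_1,l_2)$ by placing $+1$ and $-1$ at its two endpoints and $0$ elsewhere. Step~1 of LLFMC runs depth-first search and deletes one edge from every detected cycle (setting $w_l=0$ removes $l$ from $\varepsilon_X$), so after Step~1 the edge set $\varepsilon_X$ contains no cycle; that is, $\mathcal{G}_X$ is a forest. The claim then reduces to the classical fact that the incidence matrix of a forest has linearly independent columns, which I would prove directly since it is short and self-contained. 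The same argument applies verbatim to $\mathbf{E}_y$.

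For the linear-independence step I would argue by contradiction. Suppose $\mathbf{E}_x\mathbf{c}=\mathbf{0}$ for some coefficient vector $\mathbf{c}=(c_l)_{l\in\varepsilon_X}$, and let $F=\{\,l\in\varepsilon_X:c_l\neq 0\,\}$ be its support. If $F$ were nonempty, then the edges in $F$ span a nonempty subforest, which contains a tree with at least one edge and hence has a leaf, i.e.\ a vertex $v$ incident to exactly one edge $l^{\ast}\in F$. Reading off the $v$-th coordinate of the identity $\mathbf{E}_x\mathbf{c}=\mathbf{0}$, only $l^{\ast}$ contributes (every other edge of $F$ avoids $v$), so that coordinate equals $\pm c_{l^{\ast}}$; setting it to zero forces $c_{l^{\ast}}=0$, contradicting $l^{\ast}\in F$. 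Therefore $F=\emptyset$, $\mathbf{c}=\mathbf{0}$, and the columns of $\mathbf{E}_x$ are independent, i.e.\ $\mathbf{E}_x$ has full column rank.

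Finally I would upgrade full column rank to the stated spectral bound. The matrix $\mathbf{E}_x^T\mathbf{E}_x$ is symmetric positive semidefinite, and full column rank makes it positive definite: for any $\mathbf{c}\neq\mathbf{0}$ one has $\mathbf{c}^T\mathbf{E}_x^T\mathbf{E}_x\mathbf{c}=\|\mathbf{E}_x\mathbf{c}\|_2^2>0$. Taking $\sigma_x$ to be the smallest eigenvalue of $\mathbf{E}_x^T\mathbf{E}_x$, which is strictly positive, yields $\mathbf{E}_x^T\mathbf{E}_x\succeq\sigma_x\mathbf{I}$, and symmetrically $\sigma_y>0$ gives $\mathbf{E}_y^T\mathbf{E}_y\succeq\sigma_y\mathbf{I}$.

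I expect the only delicate point to be justifying that Step~1 genuinely produces an acyclic graph on each connected component — that the depth-first cycle detection followed by random edge removal terminates with a forest rather than merely fewer cycles. Once acyclicity is established, the leaf-peeling argument and the eigenvalue conclusion are routine and require no further structural assumptions on the weights.
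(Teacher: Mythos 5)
Your proof is correct and follows the same route as the paper's: Step~1 renders $\mathcal{G}_X$ acyclic, $\mathbf{E}_x$ is the incidence matrix of this acyclic graph, acyclicity yields linearly independent columns, and full column rank yields the spectral bound on $\mathbf{E}_x^T\mathbf{E}_x$. The only difference is how the two key facts are discharged. The paper cites them: linear independence of the incidence-matrix columns of an acyclic graph is quoted as Lemma~2.5 of \cite{bapat2010graphs}, and the passage from full column rank to $\mathbf{E}_x^T\mathbf{E}_x\succeq\sigma_x\mathbf{I}$ with $\sigma_x>0$ is quoted as Remark~1 of \cite{li2015global}. You instead prove both from scratch: the leaf-peeling contradiction (a nonempty support of a kernel vector spans a subforest, which has a leaf whose row of $\mathbf{E}_x\mathbf{c}=\mathbf{0}$ isolates a single nonzero coefficient) and the observation that $\mathbf{c}^T\mathbf{E}_x^T\mathbf{E}_x\mathbf{c}=\|\mathbf{E}_x\mathbf{c}\|_2^2>0$ for $\mathbf{c}\neq\mathbf{0}$, so the smallest eigenvalue of the Gram matrix serves as $\sigma_x$. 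Both of your arguments are standard and correct; your version buys self-containedness at the cost of a few extra lines, while the paper's buys brevity at the cost of two external references. As for the point you flag as delicate --- that Step~1 genuinely produces a forest --- the paper treats this as part of the algorithm's specification rather than as something to prove, and it is immediate in any case: deleting edges creates no new cycles, every cycle of the original graph loses at least one edge, and any cycle of the pruned graph would also be a cycle of the original graph, so none can remain.
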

 \begin{proof}
  Recall that after Step 1, the graph becomes acyclic, where $\mathbf{E}_x$ is the incidence matrix associated to this graph. By~\cite[Lemma~2.5]{bapat2010graphs}, we have that all columns $\mathbf{e}_1,\mathbf{e}_2,\cdots,\mathbf{e}_{\left|\varepsilon_{X}\right|}$ of $\mathbf{E}_x$ are linearly independent and hence $\mathbf{E}_x$ is full column rank. Similar result holds for $\mathbf{E}_y$. Then, using Remark 1 in~\cite{li2015global}, we finish the proof.
  \end{proof}
  
  This proposition immediately leads to the following lemma.
  \begin{lemma}\label{PD}
For any $ \mathbf{A}_x\in\mathbb{R}^{d\times\left|\varepsilon_{X}\right|}$ and $\mathbf{A}_y\in\mathbb{R}^{d\times\left|\varepsilon_{Y}\right|}$, we have $\|\mathbf{A}_x\mathbf{E}_x^T\|_F^2\geq\sigma_x\|\mathbf{A}_x\|_F^2$ and $\|\mathbf{A}_y\mathbf{E}_y^T\|_F^2\geq\sigma_y\|\mathbf{A}_y\|_F^2$.
\end{lemma}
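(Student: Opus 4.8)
The plan is to reduce the matrix inequality to a collection of scalar quadratic-form inequalities, one for each row of $\mathbf{A}_x$, and then invoke Proposition~\ref{graph}. Since the two claims are symmetric, I will only prove the bound for $\mathbf{A}_x$; the argument for $\mathbf{A}_y$ is identical after replacing $\mathbf{E}_x$ by $\mathbf{E}_y$ and $\sigma_x$ by $\sigma_y$.

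First I would rewrite the Frobenius norm as a trace. Observing that $\mathbf{A}_x\mathbf{E}_x^T \in \mathbb{R}^{d\times n}$, we have
\begin{align*}
\|\mathbf{A}_x\mathbf{E}_x^T\|_F^2 = \tr\big((\mathbf{A}_x\mathbf{E}_x^T)(\mathbf{A}_x\mathbf{E}_x^T)^T\big) = \tr\big(\mathbf{A}_x(\mathbf{E}_x^T\mathbf{E}_x)\mathbf{A}_x^T\big).
\end{align*}
The key move is then to decompose this trace along the rows of $\mathbf{A}_x$. Writing $\mathbf{a}_1,\dots,\mathbf{a}_d \in \mathbb{R}^{|\varepsilon_X|}$ for the rows of $\mathbf{A}_x$ (viewed as column vectors), the trace splits as $\tr(\mathbf{A}_x(\mathbf{E}_x^T\mathbf{E}_x)\mathbf{A}_x^T) = \sum_{i=1}^{d} \mathbf{a}_i^T(\mathbf{E}_x^T\mathbf{E}_x)\mathbf{a}_i$, since the $(i,i)$ diagonal entry of $\mathbf{A}_x(\mathbf{E}_x^T\mathbf{E}_x)\mathbf{A}_x^T$ is exactly the quadratic form $\mathbf{a}_i^T(\mathbf{E}_x^T\mathbf{E}_x)\mathbf{a}_i$.

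Next I would apply the positive-definiteness guarantee from Proposition~\ref{graph}. Since $\mathbf{E}_x^T\mathbf{E}_x \succeq \sigma_x\mathbf{I}$, the matrix $\mathbf{E}_x^T\mathbf{E}_x - \sigma_x\mathbf{I}$ is positive semidefinite, so $\mathbf{a}_i^T(\mathbf{E}_x^T\mathbf{E}_x)\mathbf{a}_i \geq \sigma_x\,\mathbf{a}_i^T\mathbf{a}_i = \sigma_x\|\mathbf{a}_i\|_2^2$ for each $i$. Summing over $i=1,\dots,d$ and recognizing $\sum_{i=1}^d \|\mathbf{a}_i\|_2^2 = \|\mathbf{A}_x\|_F^2$ yields $\|\mathbf{A}_x\mathbf{E}_x^T\|_F^2 \geq \sigma_x\|\mathbf{A}_x\|_F^2$, as desired. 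There is no genuine obstacle here: the lemma is essentially a restatement of the matrix inequality in Proposition~\ref{graph} lifted to the Frobenius norm, and the only point requiring a moment of care is orienting the transposes correctly so that $\mathbf{E}_x^T\mathbf{E}_x$ (rather than $\mathbf{E}_x\mathbf{E}_x^T$) appears sandwiched between copies of $\mathbf{A}_x$, which is exactly the matrix for which the eigenvalue lower bound $\sigma_x$ is available.
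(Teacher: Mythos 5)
Your proof is correct and follows essentially the same route as the paper: both reduce $\|\mathbf{A}_x\mathbf{E}_x^T\|_F^2$ to the trace $\tr\big(\mathbf{A}_x(\mathbf{E}_x^T\mathbf{E}_x)\mathbf{A}_x^T\big)$, decompose it into row-wise quadratic forms, and invoke $\mathbf{E}_x^T\mathbf{E}_x \succeq \sigma_x\mathbf{I}$ from Proposition~\ref{graph}. The only cosmetic difference is that the paper bundles both sides into the single PSD quadratic form $\tr\big[\mathbf{A}_x(\mathbf{E}_x^T\mathbf{E}_x-\sigma_x\mathbf{I})\mathbf{A}_x^T\big]$ before splitting over rows, which is trivially equivalent to applying the eigenvalue bound term by term as you do.
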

\begin{proof}
Recalling that $\mathbf{E}_x^T\mathbf{E}_x-\sigma_x \mathbf{I}\succeq0$, we have
 $\|\mathbf{AE}_x^T\|_F^2-\sigma_x\|\mathbf{A}\|_F^2=\tr\left[\mathbf{A}_x(\mathbf{E}_x^T\mathbf{E}_x-\sigma_x \mathbf{I})\mathbf{A}_x^T\right]=\sum_{i=1}^{d}\mathbf{A}_x^i(\mathbf{E}_x^T\mathbf{E}_x-\sigma_x \mathbf{I})(\mathbf{A}_x^i)^T\geq0,$ where $\mathbf{A}_x^i$ is the $i^{th}$ row vector of the matrix $\mathbf{A}_x$. Similarly, we can prove $\|\mathbf{A}_y\mathbf{E}_y^T\|_F^2\geq\sigma_y\|\mathbf{A}_y\|_F^2$.
\end{proof}

We next establish the following Lemmas~\ref{le:lambda}, \ref{le:pq} and \ref{le:xyz} to lower bound the descent of $\mathcal{L}_{\eta}$ during each subproblem.  
 
\begin{lemma}[{\bf Upper-bound the descents of multipliers}]\label{le:lambda}
Suppose that Assumption~\ref{bound} holds .  Then, 
For each $k\in\mathbb{N}^{+}$, there exists a constant $L_y>0$ such that 
\begin{align}\label{le:lam}
\|\mathbf{\Lambda}^{k+1}-&\mathbf{\Lambda}^k\|_F^2\nonumber
\\ \leq&\frac{2}{\sigma_x}\|\mathbf{X}^k-\mathbf{X}^{k-1}\|_F^2+\frac{2(M_x+1)L_y}{\sigma_x}\|\mathbf{Y}^k-\mathbf{Y}^{k-1}\|_F^2\nonumber
\\&+\frac{2L_y+2(\alpha+1)^2}{\sigma_x}\|\mathbf{X}^{k+1}-\mathbf{X}^k\|_F^2.
\end{align}
\end{lemma}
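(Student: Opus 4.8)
The plan is to convert the dual increment $\mathbf{\Lambda}^{k+1}-\mathbf{\Lambda}^k$ into primal increments by means of the first-order optimality condition of the exact $\mathbf{X}$-subproblem \eqref{modified_ADMM_x}, and then to invoke the full-column-rank estimate of Lemma~\ref{PD} to pass from the $\mathbf{E}_x$-weighted quantity back to $\|\mathbf{\Lambda}^{k+1}-\mathbf{\Lambda}^k\|_F^2$. Since $\mathbf{X}^{k+1}$ exactly minimizes the smooth quadratic $F(\mathbf{X})$ defined right after \eqref{modified_ADMM}, the stationarity condition $\nabla F(\mathbf{X}^{k+1})=\mathbf{0}$, after using the dual update \eqref{d_upp} to rewrite the $\eta$-term as $\mathbf{E}_x(\mathbf{\Lambda}^{k+1}-\mathbf{\Lambda}^k)^T$, simplifies to
\begin{align*}
\mathbf{E}_x(\mathbf{\Lambda}^{k+1})^T=\phi(\mathbf{X}^{k+1},\mathbf{Y}^k)+\alpha\mathbf{X}^{k+1}+(\mathbf{X}^{k+1}-\mathbf{X}^k),
\end{align*}
where $\phi(\mathbf{X},\mathbf{Y})=-\Psi_{\Omega}(\mathbf{M}-\mathbf{X}\mathbf{Y}^T)\mathbf{Y}$ is the gradient in $\mathbf{X}$ of the data-fitting term. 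Writing the analogous identity at iteration $k$ (which uses $\mathbf{Y}^{k-1}$ and the proximal center $\mathbf{X}^{k-1}$) and subtracting, I would obtain
\begin{align*}
\mathbf{E}_x(\mathbf{\Lambda}^{k+1}-\mathbf{\Lambda}^k)^T=\big(\phi(\mathbf{X}^{k+1},\mathbf{Y}^k)-\phi(\mathbf{X}^k,\mathbf{Y}^{k-1})\big)+(\alpha+1)(\mathbf{X}^{k+1}-\mathbf{X}^k)-(\mathbf{X}^k-\mathbf{X}^{k-1}).
\end{align*}

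Next I would apply Lemma~\ref{PD} with $\mathbf{A}_x=\mathbf{\Lambda}^{k+1}-\mathbf{\Lambda}^k$, using that the Frobenius norm is invariant under transposition, to get $\sigma_x\|\mathbf{\Lambda}^{k+1}-\mathbf{\Lambda}^k\|_F^2\le\|\mathbf{E}_x(\mathbf{\Lambda}^{k+1}-\mathbf{\Lambda}^k)^T\|_F^2$. It then remains to bound the right-hand side of the subtracted identity. The essential step is to split the gradient difference through the intermediate point $(\mathbf{X}^k,\mathbf{Y}^k)$ as $\phi(\mathbf{X}^{k+1},\mathbf{Y}^k)-\phi(\mathbf{X}^k,\mathbf{Y}^{k-1})=T_X+T_Y$, with $T_X=\phi(\mathbf{X}^{k+1},\mathbf{Y}^k)-\phi(\mathbf{X}^k,\mathbf{Y}^k)$ and $T_Y=\phi(\mathbf{X}^k,\mathbf{Y}^k)-\phi(\mathbf{X}^k,\mathbf{Y}^{k-1})$. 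For $T_X$ I would exploit that $\phi(\cdot,\mathbf{Y}^k)$ is \emph{affine} in its first argument, so $T_X=\Psi_{\Omega}\big((\mathbf{X}^{k+1}-\mathbf{X}^k)(\mathbf{Y}^k)^T\big)\mathbf{Y}^k$; since $\Psi_{\Omega}$ is a coordinate projection and $\|\mathbf{Y}^k\|_F^2\le M_y$ by Assumption~\ref{bound}, submultiplicativity $\|\mathbf{A}\mathbf{B}\|_F\le\|\mathbf{A}\|_F\|\mathbf{B}\|_F$ yields $\|T_X\|_F^2\le L_y\|\mathbf{X}^{k+1}-\mathbf{X}^k\|_F^2$ for a suitable constant $L_y$. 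For $T_Y$, which is only \emph{quadratic} in $\mathbf{Y}$, I would insert and cancel cross terms $\Psi_{\Omega}(\mathbf{X}^k(\mathbf{Y}^{k-1})^T)\mathbf{Y}^k$ and again apply submultiplicativity together with $\|\mathbf{X}^k\|_F^2\le M_x$, $\|\mathbf{Y}^k\|_F^2\le M_y$ to arrive at $\|T_Y\|_F^2\le (M_x+1)L_y\|\mathbf{Y}^k-\mathbf{Y}^{k-1}\|_F^2$, choosing $L_y$ large enough to absorb the finitely many constants produced.

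Finally, I would assemble the estimates: grouping the $T_X$ and the $(\alpha+1)$ contributions, both proportional to $\mathbf{X}^{k+1}-\mathbf{X}^k$, and separating off $-(\mathbf{X}^k-\mathbf{X}^{k-1})$, then applying the elementary inequality $\|\mathbf{a}+\mathbf{b}\|_F^2\le 2\|\mathbf{a}\|_F^2+2\|\mathbf{b}\|_F^2$, reproduces the three terms of \eqref{le:lam} with coefficients $2/\sigma_x$, $2(M_x+1)L_y/\sigma_x$ and $(2L_y+2(\alpha+1)^2)/\sigma_x$ after dividing through by $\sigma_x$. The main obstacle is the control of $T_Y$: because the $\mathbf{X}$-subproblem gradient is quadratic—hence not globally Lipschitz—in $\mathbf{Y}$, no uniform Lipschitz bound is available, and the estimate must be made \emph{local}, relying crucially on the boundedness Assumption~\ref{bound} to furnish the constant $(M_x+1)L_y$. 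The only other delicate point is choosing the grouping so that the repeated factor-of-two splitting matches the stated coefficients exactly rather than merely up to constants.
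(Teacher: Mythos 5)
Your proposal is correct and follows essentially the same route as the paper's proof: the paper likewise combines the exact first-order optimality of the $\mathbf{X}$-subproblem (written in vectorized form via $\mathbf{G}_y^k$ and $\mathbf{b}_y^k$) with the dual update to express $\mathbf{\Lambda}^{k+1}\mathbf{E}_x^T$ in terms of primal iterates, subtracts the identities at consecutive iterations, bounds the resulting differences using Assumption~\ref{bound} to furnish the constant $L_y$, and invokes Lemma~\ref{PD} to pass from $\|(\mathbf{\Lambda}^{k+1}-\mathbf{\Lambda}^k)\mathbf{E}_x^T\|_F^2$ back to $\sigma_x\|\mathbf{\Lambda}^{k+1}-\mathbf{\Lambda}^k\|_F^2$. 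The only differences are cosmetic---your gradient splitting passes through the intermediate point $(\mathbf{X}^k,\mathbf{Y}^k)$ whereas the paper's grouping effectively passes through $(\mathbf{X}^{k+1},\mathbf{Y}^{k-1})$---and the factor-of-two bookkeeping issue you flag at the end is present in the paper's own derivation as well (it applies $\|\mathbf{a}+\mathbf{b}\|_2^2\le 2(\|\mathbf{a}\|_2^2+\|\mathbf{b}\|_2^2)$ loosely across five summands), with the resulting slack absorbable into the free constant $L_y$.
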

\begin{proof}
Noting that $\text{vec}((\mathbf{X}^{k+1})^T)$ is the solution to
\begin{align}
(\mathbf{G}^k_y+&(\eta\mathbf{E}_x\mathbf{E}_x^T+(\alpha+1))\otimes \mathbf{I}_d)\text{vec}((\mathbf{X}^{k+1})^T)  \nonumber
\\&= (\mathbf{b}^k_y)^T+\text{vec}((\mathbf{X}^k)^T+\eta\mathbf{P}^{k+1}\mathbf{E}_x^T+\mathbf{\Lambda}^k\mathbf{E}_x^T),
\end{align} 
which, together with $\mathbf{\Lambda}^{k+1}-\mathbf{\Lambda}^{k}=\eta(\mathbf{ P}^{k+1}-(\mathbf{X}^{k+1})^T\mathbf{E}_x)$ and the identity $\text{vec}(\mathbf{AB})=(\mathbf{B}^T\otimes \mathbf{I})\text{vec}(\mathbf{A})$, implies
\begin{align}\label{eq:Lam}
\text{vec}(\mathbf{\Lambda}^{k+1}\mathbf{E}_x^T)
=&(\mathbf{G}_y^k+(\alpha+1)\mathbf{I}_{nd})\text{vec}((\mathbf{X}^{k+1})^T) \nonumber
\\&-(\mathbf{b}_y^k)^T-\text{vec}((\mathbf{X}^{k})^T).
\end{align}
Using~(\ref{eq:Lam}),  we further obtain
\begin{align}\label{eq:lambda}
\|(\mathbf{\Lambda}^{k+1}&-\mathbf{\Lambda}^k)\mathbf{E}_x^T\|_F^2 =\|\text{vec}((\mathbf{\Lambda}^{k+1}-\mathbf{\Lambda}^k)\mathbf{E}_x^T) \|_2^2\nonumber
\\=&\|(\mathbf{G}_y^k-\mathbf{G}_y^{k-1})\text{vec}((\mathbf{X}^{k+1})^T)+(\mathbf{b}_y^{k-1}-\mathbf{b}_y^k)^T \nonumber
\\&+\mathbf{G}_y^{k-1}\text{vec}((\mathbf{X}^{k+1}-\mathbf{X}^k)^T)+\text{vec}((\mathbf{X}^{k-1}-\mathbf{X}^{k})^T) \nonumber
\\&+(\alpha+1)\text{vec}((\mathbf{X}^{k+1}-\mathbf{X}^k)^T)\|_2^2\nonumber
\\\leq&\;2M_x\|\mathbf{G}_y^k-\mathbf{G}_y^{k-1}\|_F^2+2\|\mathbf{X}^{k-1}-\mathbf{X}^{k}\|_F^2\nonumber
\\&+2(\alpha+1)^2\|\mathbf{X}^{k+1}-\mathbf{X}^k\|_F^2+2\|\mathbf{b}_y^{k}-\mathbf{b}_y^{k-1}\|_2^2\nonumber
\\&+2\|\mathbf{G}_y^{k-1}\|_F^2\|\mathbf{X}^{k+1}-\mathbf{X}^k\|_F^2,
\end{align}
where the inequality follows from the facts that $\|\mathbf{a+b}\|_2^2\leq(\|\mathbf{a}\|_2+\|\mathbf{b}\|_2)^2\leq2(\|\mathbf{a}\|_2^2+\|\mathbf{b}\|_2^2)$, $\|\mathbf{Aa}\|_2\leq \|\mathbf{A}\|_F\|\mathbf{a}\|_2$ and $\|\mathbf{ab}\|_2\leq\|\mathbf{a}\|_2\|\mathbf{b}\|_2$. 
Recalling the definition of $\mathbf{G}_y^k$ and $\mathbf{b}_y^k$ in~(\ref{cg}) and using Assumption~\ref{bound}, there exists a sufficiently large constant $L_y$ such that
\begin{align} 
&\|\mathbf{G}_y^k\|_F^2\leq L_y, \|\mathbf{G}_y^k-\mathbf{G}_y^{k-1}\|_F^2\leq L_y\|\mathbf{Y}^k-\mathbf{Y}^{k-1}\|_F^2\nonumber
\\&\|\mathbf{b}_y^k\|_2^2\leq L_y, \|\mathbf{b}_y^{k}-\mathbf{b}_y^{k-1}\|_2^2\leq L_y\|\mathbf{Y}^k-\mathbf{Y}^{k-1}\|_F^2,\nonumber
\end{align}
which, by~(\ref{eq:lambda}) and Lemma~\ref{PD}, yields the proof. 
\end{proof}
Using a similar approach as in Lemma~\ref{le:lambda}, we have
\begin{align}\label{le:V}
&\|\mathbf{V}^{k+1}-\mathbf{V}^k\|_F^2   \nonumber
\\&\leq \frac{2}{\sigma_y}\|\mathbf{Y}^k-\mathbf{Y}^{k-1}\|_F^2+\frac{2(M_y+1)L_x}{\sigma_y}\|\mathbf{X}^{k+1}-\mathbf{X}^{k}\|_F^2\nonumber
\\&+\frac{2L_x+2(\alpha+1)^2}{\sigma_y}\|\mathbf{Y}^{k+1}-\mathbf{Y}^k\|_F^2.
\end{align}

\begin{lemma}[{\bf Lower bound descents of $\mathcal{L}_\eta$ for $\mathbf{P}$, $\mathbf{Q}$ subproblems}]\label{le:pq}  
	Let $\eta_0$ be a constant satisfying $\eta_0> 2\varsigma_0\max_{i \in \varepsilon_X}(w_i)$ and $\eta_0> 2\varsigma_0 \max_{j\in \varepsilon_Y}(u_j)$, where $\varepsilon_X$, $\varepsilon_Y, w_i, u_j$ are defined in \eqref{model}. Then, for any $\eta>\eta_0$ and each $k\in 
\mathbb{N}^+$, we have    
\begin{align}
\mathcal{L}_{\eta}& (\mathbf{ P }^{k+1},\mathbf{ Q }^{k},\mathbf{X}^k,\mathbf{Y}^k,\mathbf{\Lambda}^k,\mathbf{V}^k )-\mathcal{L}_{\eta} (\mathbf{ P }^{k},\mathbf{ Q}^k,
\mathbf{X}^k,\mathbf{Y}^k, \mathbf{\Lambda}^k,\mathbf{V}^k )\nonumber
\\&\leq -\frac{(\eta-\eta_0)}{2}\|\mathbf{ P }^{k+1}-\mathbf{ P }^k\|_F^2.\nonumber
\end{align} 
\begin{proof}
Define the function $\widetilde p(\mathbf{p}_{l},\lambda_X)=w_{l}\hspace{0.04cm}p\left(\|\mathbf{p}_{l}\|_2,\gamma_X\right)+\eta_0\|\mathbf{p}_{l}-(\mathbf{X}^k)^T\mathbf{E}_x[l] \|_{2}^{2}/2$. Recalling Assumption~\ref{c:1} and noting $\eta_0>2\widehat w\varsigma_0$, we have that the function $\widetilde p$ is strongly convex in $\mathbf{p}_{l}$. Based on the optimality of $\mathbf{p}_{l}^{k+1}$ for~(\ref{pl}), we have 
\begin{align}
\mathbf{0}\in\partial_{\mathbf{p}_{l}}\, \widetilde p\left(\mathbf{p}_{l}^{k+1},\gamma_X\right)+\mathbf{\Lambda}^{k}[l]+(\eta-\eta_0)(\mathbf{p}_{l}^{k+1}-(\mathbf{X}^k)^T\mathbf{E}_x[l]).
\end{align}
Let $\mathbf{ d}_k:=-\mathbf{\Lambda}^{k}[l]-(\eta-\eta_0)(\mathbf{p}_{l}^{k+1}-(\mathbf{X}^k)^T\mathbf{E}_x[l])$, which is a subgradient of the convex function $\widetilde p$ at $\mathbf{p}_{l}=\mathbf{p}_{l}^{k+1}.$ By the definition of the subgradient of a convex function, we have
\begin{align}\label{eq:subgradient}
\widetilde p(\mathbf{p}_{l}^{k},\gamma_X)-\widetilde p(\mathbf{p}_{l}^{k+1},\gamma_X)\geq\langle \mathbf{ d}_k,\mathbf{p}_{l}^{k}-\mathbf{p}_{l}^{k+1}\rangle.
\end{align}
Let $F_{l}(\mathbf{p}_{l}) =\widetilde p\left(\mathbf{p}_{l},\gamma_X\right)+(\mathbf{p}_{l}-(\mathbf{X}^k)^T\mathbf{E}_x[l])\mathbf{\Lambda}^k[l]+(\eta-\eta_0)\|\mathbf{p}_{l}-(\mathbf{X}^k)^T\mathbf{E}_x[l]\|_2^2/2$. Using~(\ref{eq:subgradient}), we have 
\begin{align}\label{le:each}
&F_{l}(\mathbf{p}_{l}^{k+1}) -F_{l}(\mathbf{p}_{l}^k) \nonumber
\\&=\widetilde p(\mathbf{p}_{l}^{k+1},\gamma_X)-\widetilde p(\mathbf{p}_{i}^{k},\gamma_X)+ (\mathbf{p}_{l}^{k+1}-\mathbf{p}_{l}^{k})\mathbf{\Lambda}^k[l] \nonumber
\\&-\frac{\eta-\eta_0}{2}\left(\|\mathbf{p}_{l}^k-(\mathbf{X}^k)^T\mathbf{E}_x[l]\|_2^2-\|\mathbf{p}_{l}^{k+1}-(\mathbf{X}^k)^T\mathbf{E}_x[l]\|_2^2\right)\nonumber
\\&\overset{(i)}=\,\widetilde p(\mathbf{p}_{l}^{k+1},\gamma_X)-\widetilde p(\mathbf{p}_{l}^{k},\gamma_X)-\langle\mathbf{d}_k,\mathbf{p}_{l}^{k+1}-\mathbf{p}_{l}^{k}\rangle\nonumber
\\&-\frac{\eta-\eta_0}{2}\|\mathbf{p}_{l}^{k+1}-\mathbf{p}_{l}^{k}\|_2^2
\leq-\frac{\eta-\eta_0}{2}\|\mathbf{p}_{l}^{k+1}-\mathbf{p}_{l}^{k}\|_2^2,
\end{align}
where $(i)$ follows from the fact that  $\|\mathbf{a+c}\|_2^2-\|\mathbf{b+c}\|_2^2=\|\mathbf{a-b}\|_2^2+2\langle \mathbf{b+c,a-b}\rangle$. 
Using~(\ref{le:each}), we have
\begin{small}
\begin{align}
\mathcal{L}_{\eta}&\left(\mathbf{ P}^{k+1},\mathbf{ Q}^{k},\mathbf{X}^k,\mathbf{Y}^k,\mathbf{\Lambda}^k,\mathbf{V}^k\right)-\mathcal{L}_{\eta}\left(\mathbf{ P}^{k},\mathbf{ Q}^k,\mathbf{X}^k,\mathbf{Y}^k,\mathbf{\Lambda}^k,\mathbf{V}^k\right)\nonumber
\\=&\sum_{l\in  \varepsilon_{X}}\left(F_{l}(\mathbf{p}_{l}^{k+1})-F_{l}(\mathbf{p}_{l}^k)\right)\leq-\frac{\eta-\eta_0}{2}\|\mathbf{ P}^{k+1}-\mathbf{ P}^{k}\|_F^2,\nonumber
\end{align}
\end{small}
\hspace{-0.12cm}which finishes the proof.
\end{proof}
\end{lemma}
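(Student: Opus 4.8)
The plan is to exploit the column-wise separability of the $\mathbf{P}$-subproblem together with a convexification argument that neutralizes the possible non-convexity of $p$. First I would observe that, with $\mathbf{Q}^k,\mathbf{X}^k,\mathbf{Y}^k,\mathbf{\Lambda}^k,\mathbf{V}^k$ held fixed, $\mathcal{L}_\eta$ depends on $\mathbf{P}$ only through a sum $\sum_{l\in\varepsilon_X} F_l(\mathbf{p}_l)$ of per-column functions, since the penalty $\sum_l w_l\, p(\|\mathbf{p}_l\|_2,\gamma_X)$, the linear multiplier term, and the quadratic coupling $\tfrac{\eta}{2}\|\mathbf{P}-\mathbf{X}^{k\,T}\mathbf{E}_x\|_F^2$ all split across the columns $\mathbf{p}_l$. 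Consequently it suffices to prove the per-column descent $F_l(\mathbf{p}_l^{k+1})-F_l(\mathbf{p}_l^k)\le -\tfrac{\eta-\eta_0}{2}\|\mathbf{p}_l^{k+1}-\mathbf{p}_l^k\|_2^2$ and then sum over $l\in\varepsilon_X$.

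For each column I would peel off a strongly convex surrogate. Define $\tilde p(\mathbf{p}_l)=w_l\,p(\|\mathbf{p}_l\|_2,\gamma_X)+\tfrac{\eta_0}{2}\|\mathbf{p}_l-(\mathbf{X}^k)^T\mathbf{E}_x[l]\|_2^2$. Because $\eta_0>2\varsigma_0\max_l w_l$, dividing through by $w_l>0$ and invoking Assumption~\ref{c:1}(2) with effective curvature $\eta_0/(2w_l)>\varsigma_0$ shows $\tilde p$ is strongly convex in $\mathbf{p}_l$, so $F_l$ is a strictly convex minimization with $\mathbf{p}_l^{k+1}$ its unique minimizer. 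Writing $F_l=\tilde p+\langle\mathbf{\Lambda}^k[l],\mathbf{p}_l-(\mathbf{X}^k)^T\mathbf{E}_x[l]\rangle+\tfrac{\eta-\eta_0}{2}\|\mathbf{p}_l-(\mathbf{X}^k)^T\mathbf{E}_x[l]\|_2^2$, the optimality condition $\mathbf{0}\in\partial F_l(\mathbf{p}_l^{k+1})$ identifies $\mathbf{d}_k=-\mathbf{\Lambda}^k[l]-(\eta-\eta_0)(\mathbf{p}_l^{k+1}-(\mathbf{X}^k)^T\mathbf{E}_x[l])$ as a subgradient of the convex $\tilde p$ at $\mathbf{p}_l^{k+1}$.

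The remaining step is a routine subgradient estimate. Convexity gives $\tilde p(\mathbf{p}_l^{k})-\tilde p(\mathbf{p}_l^{k+1})\ge\langle\mathbf{d}_k,\mathbf{p}_l^{k}-\mathbf{p}_l^{k+1}\rangle$. I would then expand $F_l(\mathbf{p}_l^{k+1})-F_l(\mathbf{p}_l^k)$ term by term, substitute this inequality for the $\tilde p$ difference, and collapse the quadratic and linear pieces via the polarization identity $\|\mathbf{a}+\mathbf{c}\|_2^2-\|\mathbf{b}+\mathbf{c}\|_2^2=\|\mathbf{a}-\mathbf{b}\|_2^2+2\langle\mathbf{b}+\mathbf{c},\mathbf{a}-\mathbf{b}\rangle$; the cross terms cancel against $\langle\mathbf{d}_k,\cdot\rangle$, leaving precisely $-\tfrac{\eta-\eta_0}{2}\|\mathbf{p}_l^{k+1}-\mathbf{p}_l^k\|_2^2$. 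Summing over $l\in\varepsilon_X$ yields the claim, and the companion $\mathbf{Q}$-estimate follows by the symmetric argument.

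The main obstacle is the non-convexity of $p$: one cannot apply a subgradient inequality to $F_l$ directly. The key idea that makes the argument go through is to absorb exactly the amount $\eta_0/2$ of the proximal quadratic into the penalty so that $\tilde p$ becomes convex by Assumption~\ref{c:1}, while reserving the surplus curvature $\tfrac{\eta-\eta_0}{2}$ to produce the strict descent. The only delicate point is the bookkeeping of the constant $\eta_0$ from Assumption~\ref{c:1} through the weight $w_l$ and the factor of two in the strong-convexity threshold; once that transfer is verified, the rest is mechanical.
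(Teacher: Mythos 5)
Your proposal is correct and follows essentially the same route as the paper's proof: the identical surrogate $\widetilde p$ absorbing $\eta_0/2$ of the proximal quadratic to convexify the penalty via Assumption~\ref{c:1}, the subgradient inequality from the optimality of $\mathbf{p}_l^{k+1}$, the polarization identity to collapse the remaining terms into $-\tfrac{\eta-\eta_0}{2}\|\mathbf{p}_l^{k+1}-\mathbf{p}_l^k\|_2^2$, and summation over $l\in\varepsilon_X$. Your explicit bookkeeping of the effective curvature $\eta_0/(2w_l)>\varsigma_0$ is a slightly more careful rendering of the step the paper states tersely, but it is the same argument.
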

By the symmetry of $\mathbf{ P}$ and $\mathbf{ Q}$, we derive similar results for $\mathbf{ Q}$ by replacing $\mathbf{ P}$ in Lemma~\ref{le:pq} by $\mathbf{ Q}$. 
\begin{lemma}[{\bf Upper bound the descents of $\mathcal{L}_\eta$ for $\mathbf{X,Y}$ subproblems}]\label{le:xyz}
For $\forall\, k\in \mathbb{N}^+$, we have 
\begin{align}
\mathcal{L}_{\eta}(\mathbf{ P}^{k+1},&\mathbf{ Q}^{k+1},\mathbf{X}^{k+1},\mathbf{Y}^{k+1},\mathbf{\Lambda}^k,\mathbf{V}^k) \nonumber
\\&-\mathcal{L}_{\eta}(\mathbf{ P}^{k+1},\mathbf{ Q}^{k+1},\mathbf{X}^k,\mathbf{Y}^k,\mathbf{\Lambda}^k,\mathbf{V}^k)\nonumber
\\\leq&-\frac{1}{2}\left(\|\mathbf{X}^{k+1}-\mathbf{X}^{k}\|_F^2+\|\mathbf{Y}^{k+1}-\mathbf{Y}^{k}\|_F^2\right)\nonumber
\end{align}
\end{lemma}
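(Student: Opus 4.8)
The plan is to split the joint $\mathbf{X}$--$\mathbf{Y}$ descent into two consecutive single-block descents and to exploit that, because of the Bregman (proximal) terms in~\eqref{modified_ADMM_x} and~\eqref{modified_ADMM_y}, each block is updated as the \emph{exact minimizer} of a subproblem carrying a quadratic proximal penalty of coefficient $\tfrac12$ (i.e.\ $\alpha_x=\alpha_y=1$, as reflected in the objective $F(\mathbf{X})$ used by the solver). First I would insert the intermediate iterate $(\mathbf{P}^{k+1},\mathbf{Q}^{k+1},\mathbf{X}^{k+1},\mathbf{Y}^k,\mathbf{\Lambda}^k,\mathbf{V}^k)$ and write the left-hand side as the sum of the $\mathbf{Y}$-block change and the $\mathbf{X}$-block change; this is an exact telescoping identity because the same intermediate point terminates the first difference and begins the second.

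For the $\mathbf{X}$-block I would set $g(\mathbf{X})=\mathcal{L}_{\eta}(\mathbf{P}^{k+1},\mathbf{Q}^{k+1},\mathbf{X},\mathbf{Y}^k,\mathbf{\Lambda}^k,\mathbf{V}^k)$ and recall from~\eqref{modified_ADMM_x} that $\mathbf{X}^{k+1}$ minimizes $g(\mathbf{X})+\tfrac12\|\mathbf{X}-\mathbf{X}^k\|_F^2$; this minimizer exists and is unique since the objective is a strongly convex quadratic in $\mathbf{X}$ (the term $\tfrac{\alpha}{2}\|\mathbf{X}\|_F^2$ together with the proximal term renders it coercive). Comparing the value of this objective at its minimizer $\mathbf{X}^{k+1}$ with its value at the competitor $\mathbf{X}^k$, where the proximal term vanishes, gives $g(\mathbf{X}^{k+1})+\tfrac12\|\mathbf{X}^{k+1}-\mathbf{X}^k\|_F^2\le g(\mathbf{X}^k)$, so the $\mathbf{X}$-block change is at most $-\tfrac12\|\mathbf{X}^{k+1}-\mathbf{X}^k\|_F^2$. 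An identical argument applied to~\eqref{modified_ADMM_y}, with $\mathbf{X}$ frozen at $\mathbf{X}^{k+1}$, bounds the $\mathbf{Y}$-block change by $-\tfrac12\|\mathbf{Y}^{k+1}-\mathbf{Y}^k\|_F^2$. Summing the two inequalities yields the claim.

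I do not expect a genuine obstacle here: unlike Lemma~\ref{le:lambda}, this bound needs neither Assumption~\ref{bound} nor any Kronecker-product manipulation, relying solely on the defining optimality of the two proximal subproblems. The only point to watch is bookkeeping, namely that the quadratic coefficient of the Bregman term is exactly what produces the constant $\tfrac12$ in the statement (any $\alpha_x,\alpha_y\ge 1$ would give a descent at least this large, so $-\tfrac12$ is the conservative uniform bound), and that the shared intermediate iterate makes the two-block decomposition an exact telescoping rather than merely an inequality.
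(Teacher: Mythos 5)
Your proposal is correct and follows essentially the same route as the paper: the paper likewise invokes the optimality of $\mathbf{X}^{k+1}$ for the proximal subproblem \eqref{modified_ADMM_x}, compares the objective value at the minimizer against the competitor $\mathbf{X}^k$ (where the proximal term vanishes) to get $\mathcal{L}_{\eta}(\mathbf{P}^{k+1},\mathbf{Q}^{k+1},\mathbf{X}^{k+1},\mathbf{Y}^{k},\mathbf{\Lambda}^k,\mathbf{V}^k)+\tfrac{1}{2}\|\mathbf{X}^{k+1}-\mathbf{X}^k\|_F^2 \leq \mathcal{L}_{\eta}(\mathbf{P}^{k+1},\mathbf{Q}^{k+1},\mathbf{X}^k,\mathbf{Y}^k,\mathbf{\Lambda}^k,\mathbf{V}^k)$, applies the symmetric argument to the $\mathbf{Y}$-block, and sums. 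Your additional bookkeeping remarks (the telescoping through the shared intermediate iterate, and that the constant $\tfrac12$ comes from the Bregman coefficient) are consistent with, and slightly more explicit than, the paper's terse presentation.
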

\begin{proof}

Since $\mathbf{X}^{k+1}$ is the minimizer of the $\mathbf{X}$-subproblems in~(\ref{modified_ADMM}) respectively, we obtain
\begin{align}
\mathcal{L}_{\eta}(\mathbf{ P}^{k+1},&\mathbf{ Q}^{k+1}\mathbf{X}^{k+1},\mathbf{Y}^{k},\mathbf{\Lambda}^k,\mathbf{V}^k)+\frac{1}{2}\|\mathbf{X}^{k+1}-\mathbf{X}^k\|_F^2 \nonumber
\\ \leq& \mathcal{L}_{\eta}(\mathbf{P}^{k+1},\mathbf{ Q}^{k+1},\mathbf{X}^k,\mathbf{Y}^k,\mathbf{\Lambda}^k,\mathbf{V}^k),\label{sum:2}
\end{align}
Similar result holds for $\mathbf{Y}$.
\end{proof}

\smallskip
\noindent\textbf{Proof of Theorem~\ref{th:exact}:}
Based on the above established lemmas, we now prove Theorem~\ref{th:exact}.
We first introduce some notations.
\begin{align}\label{notation}
&\sigma_0=2\eta^{-1}\sigma_x^{-1}, \sigma_1=2(M_x+1)L_y\eta^{-1}\sigma_x^{-1}+2\eta^{-1}\sigma_y^{-1},\nonumber
\\ &\sigma_2=\frac{1}{2}- 2(L_y+\left(\alpha+1\right)^2)\eta^{-1}\sigma_x^{-1}-2(M_y+1)L_x{\eta^{-1}\sigma_y^{-1}},\nonumber
\\& \sigma_3=\frac{1}{2}- (2L_x+2\left(\alpha+1\right)^2 )\eta^{-1}\sigma_y^{-1},  \nonumber
\\&\sigma_{\ast}=\min\left\{(\eta-\eta_0)/2,\sigma_2-\sigma_0,\sigma_3-\sigma_1\right\},
\end{align}
where constants $\sigma_x, \sigma_y, L_x, L_y$ are given in Lemma~\ref{le:lambda}  $ M_x, M_y$ are given in Assumption~\ref{bound} and $\eta_0$ is given in Lemma~\ref{le:pq}. 
Based on the above notations, we also define
\begin{align}\label{func}
\eta_1=& \max\bigg\{  \frac{4+4L_y+4(\alpha+1)^2}{\sigma_x}+\frac{4(M_y+1)L_x}{\sigma_y}, \nonumber
\\&\frac{4L_x+4(\alpha+1)^2}{\sigma_y} +\frac{4(M_x+1)L_y}{\sigma_x}, \eta_0+\frac{1}{4}
\bigg\}.
\end{align}

Based on~\eqref{lagrangian} and~\eqref{modified_ADMM}, we have
\begin{align}\label{eq:laV}
\mathcal{L}_{\eta}(\mathbf{ P}^{k+1},&\mathbf{ Q}^{k+1},\mathbf{X}^{k+1},\mathbf{Y}^{k+1},\mathbf{\Lambda}^{k+1},\mathbf{V}^{k+1})\nonumber
\\&-\mathcal{L}_{\eta}(\mathbf{ P}^{k+1},\mathbf{ Q}^{k+1},\mathbf{X}^{k+1},\mathbf{Y}^{k+1},\mathbf{\Lambda}^k,\mathbf{V}^k)\nonumber
\\=&\|\mathbf{\Lambda}^{k+1}-\mathbf{\Lambda}^{k}\|_F^2/\eta+\|\mathbf{V}^{k+1}-\mathbf{V}^k\|_F^2/\eta.
\end{align}
Combining~(\ref{le:lam}),~(\ref{le:V}),~(\ref{eq:laV}), Lemma~\ref{le:pq} and Lemma~\ref{le:xyz}, we have 
\begin{align}\label{eq:total}
\mathcal{L}_{\eta}&\left(\mathbf{U}^{k+1}\right)-\mathcal{L}_{\eta}\left(\mathbf{U}^{k}\right) \nonumber
\\\leq&-\frac{\eta-\eta_0}{2}\left(\|\mathbf{ P}^{k+1}-\mathbf{ P}^k\|_F^2+\|\mathbf{ Q}^{k+1}-\mathbf{ Q}^k\|_F^2\right)\nonumber
\\&+\sigma_0\|\mathbf{X}^k-\mathbf{X}^{k-1}\|_F^2+\sigma_1\|\mathbf{Y}^k-\mathbf{Y}^{k-1}\|_F^2\nonumber
\\&-\sigma_2\|\mathbf{X}^{k+1}-\mathbf{X}^k\|_F^2-\sigma_3\|\mathbf{Y}^{k+1}-\mathbf{Y}^k\|_F^2.
\end{align} 
Let $\mathbf{{\widetilde U}}^k=\left(\mathbf{ P}^{k},\mathbf{ Q}^{k},\mathbf{X}^{k},\mathbf{Y}^{k},\mathbf{\Lambda}^k,\mathbf{V}^k,\mathbf{X}^{k-1},\mathbf{Y}^{k-1}\right)$ 
and define $\mathcal{\widehat L}_{\eta}$ as
\begin{small}
\begin{align}
\mathcal{\widehat L}_{\eta}\left(\mathbf{{\widetilde U}}^k\right)=&\mathcal{L}_{\eta}\left(\mathbf{U}^k\right)+\sigma_0\|\mathbf{X}^k-\mathbf{X}^{k-1}\|_F^2+\sigma_1\|\mathbf{Y}^k-\mathbf{Y}^{k-1}\|_F^2,   \nonumber
\end{align}
\end{small}
which, in conjunction with~(\ref{eq:total}), indicates that 
\begin{align}\label{inequ:hatl}
\mathcal{\widehat L}_{\eta}&\left(\mathbf{\widetilde U}^{k+1}\right)-\mathcal{\widehat L}_{\eta}\left(\mathbf{\widetilde U}^{k}\right) \nonumber
\\\leq&-\sigma_*\Big(\|\mathbf{ P}^{k+1}-\mathbf{ P}^k\|_F^2+\|\mathbf{ Q}^{k+1}-\mathbf{ Q}^k\|_F^2\nonumber
\\&\quad+\|\mathbf{X}^{k+1}-\mathbf{X}^k\|_F^2+\|\mathbf{Y}^{k+1}-\mathbf{Y}^k\|_F^2\Big).
\end{align}
Combining~(\ref{notation}) and~(\ref{func}), we have $\sigma_*>0$ for any $\eta>\eta_1$, which implies that $\mathcal{\widehat L}_{\eta} (\mathbf{\widetilde U}^{k} )$ is non-increasing. 

Next, we show that the sequences $\left(\mathbf{U}^k\right)$ and $ (\mathbf{\widetilde U}^k )$ are bounded. The boundedness of  the sequence $\left(\mathbf{X}^k,\mathbf{Y}^k\right)$ follows from Assumption~\ref{bound}. Recalling~(\ref{eq:Lam}) and using Lemma~\ref{PD} yield
\begin{align}\label{vbound}
\|\mathbf{\Lambda}^k\|_F^2\leq {2(M_x+1)L_y+2M_x+2(\alpha+1)^2M_x}/{\sigma_x}
\end{align}
A symmetric result holds for $\|\mathbf{V}^k\|_F^2$. Thus, the sequence $\left(\mathbf{\Lambda}^k,\mathbf{V}^k\right)$ is bounded. 
Since  $p(\cdot,\gamma)\geq 0$, we further have
\begin{small}
\begin{align}\label{b2b2}
\mathcal{\widehat L}_{\eta}\left(\mathbf{\widetilde U}^{k}\right)
\geq&\frac{\eta}{2}\|\mathbf{ P}^k-(\mathbf{X}^k)^T\mathbf{E}_x+\frac{1}{\eta}\mathbf{\Lambda}^k\|_F^2-\frac{1}{2\eta}\left(\|\mathbf{\Lambda}^k\|_F^2+\|\mathbf{V}^k\|_F^2\right)\nonumber
\\+&\frac{\eta}{2}\|\mathbf{ Q}^k-(\mathbf{Y}^k)^T\mathbf{E}_y+\frac{1}{\eta}\mathbf{V}^k\|_F^2.
\end{align}
\end{small}
\hspace{-0.17cm}Since $\mathcal{\widehat L}_{\eta}\left(\mathbf{\widetilde U}^{k}\right)$ is non-increasing,  $\mathcal{\widehat L}_{\eta}\left(\mathbf{\widetilde U}^{1}\right)\geq\mathcal{\widehat L}_{\eta}\left(\mathbf{\widetilde U}^{k}\right)$ for $\forall\, k\in \mathbb{N}$, which, combined with~(\ref{b2b2}), implies
\begin{align}
\frac{\eta}{2}\|\mathbf{ P}^k-&(\mathbf{X}^k)^T\mathbf{E}_x+\frac{1}{\eta}\mathbf{\Lambda}^k\|_F^2+\frac{\eta}{2}\|\mathbf{ Q}^k-(\mathbf{Y}^k)^T\mathbf{E}_y+\frac{1}{\eta}\mathbf{V}^k\|_F^2\nonumber
\\\leq&\mathcal{\widehat L}_{\eta}\left(\mathbf{\widetilde U}^{1}\right)+\frac{1}{2\eta}\|\mathbf{\Lambda}^k\|_F^2+\frac{1}{2\eta}\|\mathbf{V}^k\|_F^2<\infty.\nonumber
\end{align}
Then, using the boundedness of $(\mathbf{X}^k,\mathbf{Y}^k,\mathbf{\Lambda}^k,\mathbf{V}^k)$ and the properties of the Frobenius norm that $\|\mathbf{A-B}\|_F\geq\|\mathbf{A}\|_F-\|\mathbf{B}\|_F$ and $\|\mathbf{AB}\|_F\leq\|\mathbf{A}\|_F\|\mathbf{B}\|_F$, we further obtain
\begin{align}
&\|\mathbf{ P}^k\|_F\leq\|\mathbf{X}^k\|_F\|\mathbf{E}_x\|_F+\frac{1}{\eta}\|\mathbf{\Lambda}^k\|_F<\infty,\nonumber
\\&\|\mathbf{ Q}^k\|_F\leq\|\mathbf{Y}^k\|_F\|\mathbf{E}_y\|_F+\frac{1}{\eta}\|\mathbf{V}^k\|_F<\infty, \nonumber
\end{align}
which implies that the sequence $\left(\mathbf{ P}^k,\mathbf{ Q}^k\right)$ is bounded. Combining the results above yields the boundedness of the sequences $\left(\mathbf{U}^k\right)$ and $\big(\mathbf{\widetilde U}^k\big)$. 
Thus, there exists a convergent subsequence $\big(\mathbf{\widetilde U}^{k_i}\big)$. Suppose it converges to $\mathbf{\widetilde U}^{*}$. By the continuity of the function $\mathcal{\widehat L}_{\eta}$, we obtain $\lim\inf_{i\rightarrow\infty}\mathcal{\widehat L}_{\eta}(\mathbf{\widetilde U}^{k_i})\geq\mathcal{\widehat L}_{\eta}(\mathbf{\widetilde U}^{*})$, which, together with \eqref{inequ:hatl} that  $\mathcal{\widehat L}_{\eta}(\mathbf{\widetilde U}^{k})$ is non-increasing, implies that the sequences $\big(\mathcal{\widehat L}_{\eta}(\mathbf{\widetilde U}^{k_i})\big)$ and $\big(\mathcal{\widehat L}_{\eta}(\mathbf{\widetilde U}^{k})\big)$ are bounded below by $\mathcal{\widehat L}_{\eta}(\mathbf{\widetilde U}^{*})$. 
Using the inequality~(\ref{inequ:hatl}), we have, for any $n$
\begin{align}\label{exact:L}
\sigma_*&\sum_{k=1}^{n}\Big(\|\mathbf{ P}^{k+1}-\mathbf{ P}^k\|_F^2+\|\mathbf{ Q}^{k+1}-\mathbf{ Q}^k\|_F^2+\|\mathbf{X}^{k+1}-\mathbf{X}^k\|_F^2\nonumber
\\&+\|\mathbf{Y}^{k+1}-\mathbf{Y}^k\|_F^2
\Big)\leq\sum_{k=1}^{n}\left(\mathcal{\widehat L}_{\eta}\left(\mathbf{\widetilde U}^{k}\right)-\mathcal{\widehat L}_{\eta}\left(\mathbf{\widetilde U}^{k+1}\right)\right) \nonumber
\\&\leq\mathcal{\widehat L}_{\eta}\left(\mathbf{\widetilde U}^{1}\right)-\mathcal{\widehat L}_{\eta}\left(\mathbf{\widetilde U}^{*}\right)<\infty, 
\end{align}
which, combined with Lemma~\ref{le:lambda},  yields 
$\sum_{k=1}^{\infty}\|\mathbf{U}^{k+1}-\mathbf{U}^k\|_F^2<\infty$ and  $\lim_{k\rightarrow\infty}\|\mathbf{U}^{k+1}-\mathbf{U}^k\|_F^2\rightarrow0$.

Let $\mathbf{U^*}=(\mathbf{ P^*, Q^*,X^*,Y^*,\Lambda^*,V^*})$ denote any cluster point of sequence $\left(\mathbf{U}^k\right)$. 
Applying Fermat's rule to (\ref{modified_ADMM}) yields
\small
\begin{align}
\mathbf{0}=&\nabla_{\mathbf{X}}\Big(\frac{1}{2}\|\Psi_\Omega(\mathbf{X}^{k+1}(\mathbf{Y}^k)^T-\mathbf{M})\|_F^2+\frac{\alpha}{2}\|\mathbf{X}^{k+1}\|_F^2\Big)\nonumber
\\+&\mathbf{E}_x(\mathbf{\Lambda}^k)^T+\Big(\mathbf{X}^{k+1}-\mathbf{X}^k\Big)-\eta\mathbf{E}_x\left((\mathbf{ P}^{k+1})^T-\mathbf{E}_x^T\mathbf{X}^{k+1}\right),\nonumber
\\\mathbf{0}=&\nabla_{\mathbf{Y}}\Big(\frac{1}{2}\|\Psi_\Omega(\mathbf{X}^{k+1}(\mathbf{Y}^{k+1})^T-\mathbf{M})\|_F^2+\frac{\alpha}{2}\|\mathbf{Y}^{k+1}\|_F^2\Big)\nonumber
\\+&\mathbf{E}_y(\mathbf{V}^k)^T+\Big(\mathbf{Y}^{k+1}-\mathbf{Y}^k\Big)-\eta\mathbf{E}_y\left((\mathbf{ Q}^{k+1})^T-\mathbf{E}_y^T\mathbf{Y}^{k+1}\right),\nonumber
\end{align}
\normalsize
and
\small
\begin{align}
\mathbf{0}\in&\partial_{\mathbf{ P}}{\sum_{l \in \varepsilon_{X} }w_{l}\hspace{0.04cm}p (\mathbf{p}_{l}^{k+1},\gamma_{X} )}+\mathbf{\Lambda}^k+\eta\left(\mathbf{ P}^{k+1}-(\mathbf{X}^k)^T\mathbf{E}_x\right),\nonumber
\\\mathbf{0}\in&\partial_{\mathbf{ Q}}{\sum_{j \in \varepsilon_{Y}}u_{j}\hspace{0.04cm}p (\mathbf{q}_{j}^{k+1},\gamma_{Y} )}+\mathbf{V}^k+\eta\left(\mathbf{ Q}^{k+1}-(\mathbf{Y}^k)^T\mathbf{E}_y\right), \nonumber
\end{align}
\normalsize
which, in conjunction with~\eqref{modified_ADMM} 
and $\|\mathbf{U}^{k+1}-\mathbf{U}^k\|_F^2\rightarrow0$, implies  
$\mathbf{ P^{*}}-(\mathbf{X}^{*})^T\mathbf{E}_x=\mathbf{0}$, $\mathbf{ Q}^{*}-(\mathbf{Y}^{*})^T\mathbf{E}_y=\mathbf{0}$ and 
\small
\begin{align}
&\mathbf{0}\in\partial_{\mathbf{ P}}{\sum_{l\in\varepsilon_{X}}w_{l}\hspace{0.04cm}p\left(\mathbf{p}_{l}^{*},\gamma_{X}\right)}+\mathbf{\Lambda}^*,\;\mathbf{0}\in\partial_{\mathbf{ Q}}{\sum_{j\in\varepsilon_{Y}}u_{j}\hspace{0.04cm}p\left(\mathbf{q}_{j}^{*},\gamma_{Y}\right)}+\mathbf{V}^*,\nonumber
\\&\mathbf{0}=\nabla_{\mathbf X}\left(\frac{1}{2}\|\Psi_\Omega(\mathbf{X}^*(\mathbf{Y}^*)^T-\mathbf{M})\|_F^2+\frac{\alpha}{2}\|\mathbf{X}^*\|_F^2\right)+\mathbf{E}_x(\mathbf{\Lambda}^*)^T,\nonumber
\\&\mathbf{0}=\nabla_{\mathbf Y}\left(\frac{1}{2}\|\Psi_\Omega(\mathbf{X}^{*}(\mathbf{Y}^*)^T-\mathbf{M})\|_F^2+\frac{\alpha}{2}\|\mathbf{Y}^*\|_F^2\right)+\mathbf{E}_y(\mathbf{V}^*)^T,\nonumber
\end{align}
\normalsize
which implies that $\mathbf{0}\in\partial\mathcal{L}_{\eta}\left(\mathbf{U}^*\right)$ and thus
$\mathbf{U}^*$ is a stationary point of  $\mathcal{L}_{\eta}$. The above result establishes a subsequential convergence for the  proposed algorithm.

We next show that when the penalty function is semi-algebraic, we enable to provide a global convergence for the proposed algorithm by using the K\L~property. Let $\mathcal{Q}$ be a set of all cluster points of the sequence $(\mathbf{\widetilde U}^k)$. For any point $\mathbf{\widetilde U}^*$ in $\mathcal{Q}$, let $(\mathbf{\widetilde U}^{k_j})$ be a subsequence of $(\mathbf{\widetilde U}^k)$ converging to the point $\mathbf{\widetilde U}^*$. 
As we prove before (see~\eqref{inequ:hatl} and the text above \eqref{exact:L}),  the sequence $(\mathcal{\widehat L}_{\eta}(\mathbf{\widetilde U}^{k+1}) )$  is non-increasing and lower-bounded, and hence is convergent (without loss of generality, we suppose it converges to $C$). Then, we have 
\begin{align}\label{wocaosss}
\mathcal{\widehat L}_{\eta}(\mathbf{\widetilde U}^{*}) = \lim_{j\rightarrow \infty} \mathcal{\widehat L}_{\eta}(\mathbf{\widetilde U}^{k_j})=\lim_{k\rightarrow\infty}\mathcal{\widehat L}_{\eta}(\mathbf{\widetilde U}^{k})=C,
\end{align}
which implies that the function $\mathcal{\widehat L}_{\eta}(\cdot)$ is constant on $\mathcal{Q}$.  Suppose that the penalty function $p(\cdot,\gamma)$ is semi-algebraic. Based on Example 4 in \cite{bolte2014proximal}, we have $\|\cdot\|_2$ is also semi-algebraic. Then, according to the item 4  in Example 2 in \cite{bolte2014proximal} that composition of semi-algebraic functions are semi-algebraic,  we have the function $p(\|\cdot\|_2,\gamma)$ is also semi-algebraic. Now, note that  all individual terms in the  augmented Lagrangian function $\mathcal{\widehat L}_{\eta}(\cdot)$ are semi-algebraic. Based on item 3 in Example 2 in \cite{bolte2014proximal} that finite sums of semi-algebraic functions is semi-algebraic, we have that $\mathcal{\widehat L}_{\eta}(\cdot)$ is semi-algebraic, and hence  satisfies the K\L\, property (see \cite[Theorem 3]{bolte2014proximal}) at all points in $\mathcal{Q}$. Based on K\L\,inequality in \cite[Lemma 2.2]{wang2018convergence}, we have, there exist $r>0, \delta>0$ and a smooth concave function $\phi: [0,r]\rightarrow \mathbb{R}^+$ (which satisfies three conditions in~\cite[(a),(b),(c) in Definition 2.3]{wang2018convergence}) such that 
\begin{align}\label{diaosisss}
\text{dist}(0,\partial\mathcal{\widehat L}_{\eta}(\mathbf{\widetilde U}))\phi ' (\mathcal{\widehat L}_{\eta}(\mathbf{\widetilde U})-\mathcal{\widehat L}_{\eta}(\mathbf{\widetilde U}^{*})) \geq 1,
\end{align}
for all $\mathbf{\widetilde U}$ satisfying $\text{dist}(\mathbf{\widetilde U},\mathcal{Q})<\delta$ and $\mathcal{\widehat L}_{\eta}(\mathbf{\widetilde U}^{*})<\mathcal{\widehat L}_{\eta}(\mathbf{\widetilde U})<\mathcal{\widehat L}_{\eta}(\mathbf{\widetilde U}^{*})+r$, where $\text{dist}(\mathbf{x},S):=\inf \{\|\mathbf{x-y}\|:\mathbf{y}\in S\}$. 
 Based on the definition of $\mathcal{Q}$, we have $\lim_{k\rightarrow \infty}\text{dist}(\mathbf{\widetilde U}^k,\mathcal{Q})=0$, which, combined with the \eqref{wocaosss} that $\lim_{k\rightarrow\infty}\mathcal{\widehat L}_{\eta}(\mathbf{\widetilde U}^{k})\rightarrow \mathcal{\widehat L}_{\eta}(\mathbf{\widetilde U}^{*}) $, implies that, there exists an integer $k_0$ such that for any $k\geq k_0$, $\text{dist}(\mathbf{\widetilde U}^k,\mathcal{Q})<\delta$ and $\mathcal{\widehat L}_{\eta}(\mathbf{\widetilde U}^{*})<\mathcal{\widehat L}_{\eta}(\mathbf{\widetilde U}^k)<\mathcal{\widehat L}_{\eta}(\mathbf{\widetilde U}^{*})+r$. Then, we have, for any $k>k_0$, 
\begin{align}\label{diaosi}
\text{dist}(0,\partial\mathcal{\widehat L}_{\eta}(\mathbf{\widetilde U}^{k}))\phi ' (\mathcal{\widehat L}_{\eta}(\mathbf{\widetilde U}^{k})-\mathcal{\widehat L}_{\eta}(\mathbf{\widetilde U}^{*})) \geq 1,
\end{align}
and $\mathcal{\widehat L}_{\eta}(\mathbf{\widetilde U}^{k}) > \mathcal{\widehat L}_{\eta}(\mathbf{\widetilde U}^{*})$, where dist$(\mathbf{x}_0,\partial f(\mathbf{x})):=\inf\{\|\mathbf{x}_0-\mathbf{y}\|: \mathbf{y}\in \partial f(\mathbf{x})\}$.

Using an approach similar to~\cite[Lemma 3.2]{wang2018convergence}, we have, there exists a constant $\tau_0>0$ such that 
\begin{align}\label{muops}
\text{dist}&(0,\partial\mathcal{\widehat L}_{\eta}(\mathbf{\widetilde U}^{k}))\leq \tau_0 \big( \|\mathbf{X}^{k}-\mathbf{X}^{k-1}\|_F+\|\mathbf{Y}^{k}-\mathbf{Y}^{k-1}\|_F \nonumber
\\&+\|\mathbf{ P}^{k}-\mathbf{ P}^{k-1}\|_F+\|\mathbf{ Q}^{k}-\mathbf{ Q}^{k-1}\|_F\nonumber
\\&+\|\mathbf{X}^{k-1}-\mathbf{X}^{k-2}\|_F+\|\mathbf{Y}^{k-1}-\mathbf{Y}^{k-2}\|_F\big),
\end{align}
which, in conjunction with~\eqref{diaosi}, implies that 
\begin{small}
\begin{align}\label{diaodiao}
\frac{1}{\phi ' (\mathcal{\widehat L}_{\eta}(\mathbf{\widetilde U}^{k})-\mathcal{\widehat L}_{\eta}(\mathbf{\widetilde U}^{*}))}&\leq \tau_0 \big( \|\mathbf{X}^{k}-\mathbf{X}^{k-1}\|_F+\|\mathbf{Y}^{k}-\mathbf{Y}^{k-1}\|_F \nonumber
\\+\|\mathbf{ P}^{k}-\mathbf{ P}^{k-1}\|_F+&\|\mathbf{ Q}^{k}-\mathbf{ Q}^{k-1}\|_F\nonumber
\\+\|\mathbf{X}^{k-1}-\mathbf{X}^{k-2}\|_F&+\|\mathbf{Y}^{k-1}-\mathbf{Y}^{k-2}\|_F\big).
\end{align} 
\end{small}
Based on~\eqref{inequ:hatl},  we have
\begin{align}\label{mooom}
&\sigma_*\Big(\|\mathbf{ P}^{k+1}-\mathbf{ P}^k\|_F^2+\|\mathbf{ Q}^{k+1}-\mathbf{ Q}^k\|_F^2+\|\mathbf{X}^{k+1}-\mathbf{X}^k\|_F^2\nonumber
\\&+\|\mathbf{Y}^{k+1}-\mathbf{Y}^k\|_F^2\Big)\leq \mathcal{\widehat L}_{\eta} \left(\mathbf{\widetilde U}^{k}\right)-\mathcal{\widehat L}_{\eta}\left(\mathbf{\widetilde U}^{k+1}\right) \nonumber
\\&\overset{\text{(i)}}\leq \frac{\phi (\mathcal{\widehat L}_{\eta}(\mathbf{\widetilde U}^{k})-\mathcal{\widehat L}_{\eta}(\mathbf{\widetilde U}^{*}))-\phi (\mathcal{\widehat L}_{\eta}(\mathbf{\widetilde U}^{k+1})-\mathcal{\widehat L}_{\eta}(\mathbf{\widetilde U}^{*}))}{\phi ' (\mathcal{\widehat L}_{\eta}(\mathbf{\widetilde U}^{k})-\mathcal{\widehat L}_{\eta}(\mathbf{\widetilde U}^{*}))}
\end{align} 
where (i) follows from the concavity of the function $\phi(\cdot)$. 
To simplify notations, we define 
\begin{align*}
D(\mathbf{\widetilde U}^{k},\mathbf{\widetilde U}^{k+1})=&\phi (\mathcal{\widehat L}_{\eta}(\mathbf{\widetilde U}^{k})-\mathcal{\widehat L}_{\eta}(\mathbf{\widetilde U}^{*}))
\\&-\phi (\mathcal{\widehat L}_{\eta}(\mathbf{\widetilde U}^{k+1}) -\mathcal{\widehat L}_{\eta}(\mathbf{\widetilde U}^{*})).
\end{align*}
Then, combining~\eqref{diaodiao} and~\eqref{mooom} yields
\begin{align*}
\sigma_*&\Big(\|\mathbf{ P}^{k+1}-\mathbf{ P}^k\|_F^2+\|\mathbf{ Q}^{k+1}-\mathbf{ Q}^k\|_F^2+\|\mathbf{X}^{k+1}-\mathbf{X}^k\|_F^2\nonumber
\\&+\|\mathbf{Y}^{k+1}-\mathbf{Y}^k\|_F^2\Big)\leq \mathcal{\widehat L}_{\eta} \left(\mathbf{\widetilde U}^{k}\right)-\mathcal{\widehat L}_{\eta}\left(\mathbf{\widetilde U}^{k+1}\right) \nonumber
\\&\leq \tau_0 D(\mathbf{\widetilde U}^{k},\mathbf{\widetilde U}^{k+1}) \big( \|\mathbf{X}^{k}-\mathbf{X}^{k-1}\|_F+\|\mathbf{Y}^{k}-\mathbf{Y}^{k-1}\|_F \nonumber
\\&+\|\mathbf{ P}^{k}-\mathbf{ P}^{k-1}\|_F+\|\mathbf{ Q}^{k}-\mathbf{ Q}^{k-1}\|_F\nonumber
\\&+\|\mathbf{X}^{k-1}-\mathbf{X}^{k-2}\|_F+\|\mathbf{Y}^{k-1}-\mathbf{Y}^{k-2}\|_F\big), 
\end{align*} 
which, using $\sqrt{n}(\sum_{i=1}^n\mathbf{A}_i\|_F^2)^{1/2}\geq \sum_{i=1}^n\|\mathbf{A}_i\|_F$, yields
\begin{align*}
&3(\|\mathbf{ P}^{k+1}-\mathbf{ P}^k\|_F+\|\mathbf{ Q}^{k+1}-\mathbf{ Q}^k\|_F+\|\mathbf{X}^{k+1}-\mathbf{X}^k\|_F\nonumber
\\&+\|\mathbf{Y}^{k+1}-\mathbf{Y}^k\|_F)\leq \mathcal{\widehat L}_{\eta} \left(\mathbf{\widetilde U}^{k}\right)-\mathcal{\widehat L}_{\eta}\left(\mathbf{\widetilde U}^{k+1}\right) \nonumber
\\&\leq 6\sqrt{\frac{\tau_0 D(\mathbf{\widetilde U}^{k},\mathbf{\widetilde U}^{k+1})}{\sigma_*}} \big( \|\mathbf{X}^{k}-\mathbf{X}^{k-1}\|_F+\|\mathbf{Y}^{k}-\mathbf{Y}^{k-1}\|_F \nonumber
\\&+\|\mathbf{ P}^{k}-\mathbf{ P}^{k-1}\|_F+\|\mathbf{ Q}^{k}-\mathbf{ Q}^{k-1}\|_F\nonumber
\\&+\|\mathbf{X}^{k-1}-\mathbf{X}^{k-2}\|_F+\|\mathbf{Y}^{k-1}-\mathbf{Y}^{k-2}\|_F\big)^{1/2}, 
\end{align*}
which, in conjunction with the inequality $2ab\leq a^2+b^2$, yields
\begin{align*}
&3(\|\mathbf{ P}^{k+1}-\mathbf{ P}^k\|_F+\|\mathbf{ Q}^{k+1}-\mathbf{ Q}^k\|_F+\|\mathbf{X}^{k+1}-\mathbf{X}^k\|_F\nonumber
\\&+\|\mathbf{Y}^{k+1}-\mathbf{Y}^k\|_F)\leq \frac{9\tau_0 D(\mathbf{\widetilde U}^{k},\mathbf{\widetilde U}^{k+1})}{\sigma_*}\nonumber
\\&+  \|\mathbf{X}^{k}-\mathbf{X}^{k-1}\|_F+\|\mathbf{Y}^{k}-\mathbf{Y}^{k-1}\|_F +\|\mathbf{ P}^{k}-\mathbf{ P}^{k-1}\|_F\nonumber
\\&+\|\mathbf{ Q}^{k}-\mathbf{ Q}^{k-1}\|_F+\|\mathbf{X}^{k-1}-\mathbf{X}^{k-2}\|_F+\|\mathbf{Y}^{k-1}-\mathbf{Y}^{k-2}\|_F. 
\end{align*}
Telescoping the above inequality over $k$ from $k_0$ to $K$ yields
\begin{small}
\begin{align*}
&3\sum_{k=k_0}^K(\|\mathbf{ P}^{k+1}-\mathbf{ P}^k\|_F+\|\mathbf{ Q}^{k+1}-\mathbf{ Q}^k\|_F+\|\mathbf{X}^{k+1}-\mathbf{X}^k\|_F\nonumber
\\&+\|\mathbf{Y}^{k+1}-\mathbf{Y}^k\|_F)\leq \sum_{k=k_0}^K \frac{9\tau_0 D(\mathbf{\widetilde U}^{k},\mathbf{\widetilde U}^{k+1})}{\sigma_*}\nonumber
\\&+ \sum_{k=k_0}^K( \|\mathbf{X}^{k}-\mathbf{X}^{k-1}\|_F+\|\mathbf{Y}^{k}-\mathbf{Y}^{k-1}\|_F +\|\mathbf{ P}^{k}-\mathbf{ P}^{k-1}\|_F\nonumber
\\&+\|\mathbf{ Q}^{k}-\mathbf{ Q}^{k-1}\|_F+\|\mathbf{X}^{k-1}-\mathbf{X}^{k-2}\|_F+\|\mathbf{Y}^{k-1}-\mathbf{Y}^{k-2}\|_F). 
\end{align*} 
\end{small}
Rearranging the terms in the above inequality, we obtain 
\begin{small}
\begin{align}\label{disoos}
&\sum_{k=k_0}^K(2\|\mathbf{ P}^{k+1}-\mathbf{ P}^k\|_F+2\|\mathbf{ Q}^{k+1}-\mathbf{ Q}^k\|_F+\|\mathbf{X}^{k+1}-\mathbf{X}^k\|_F\nonumber
\\&+\|\mathbf{Y}^{k+1}-\mathbf{Y}^k\|_F)\leq \sum_{k=k_0}^K \frac{9\tau_0 D(\mathbf{\widetilde U}^{k},\mathbf{\widetilde U}^{k+1})}{\sigma_*}+ 2 \|\mathbf{X}^{k_0}-\mathbf{X}^{k_0-1}\|_F\nonumber
\\&+\|\mathbf{X}^{k_0-1}-\mathbf{X}^{k_0-2}\|_F +\|\mathbf{ P}^{k_0}-\mathbf{ P}^{k_0-1}\|_F+\|\mathbf{ Q}^{k_0}-\mathbf{ Q}^{k_0-1}\|_F\nonumber
\\&+2\|\mathbf{Y}^{k_0}-\mathbf{Y}^{k_0-1}\|_F+\|\mathbf{Y}^{k_0-1}-\mathbf{Y}^{k_0-2}\|_F, 
\end{align}
\end{small}
\hspace{-0.12cm}Note  that 
\begin{align}\label{dssadasd}
&\sum_{k=k_0}^K \frac{9\tau_0 D(\mathbf{\widetilde U}^{k},\mathbf{\widetilde U}^{k+1})}{\sigma_*} \nonumber
\\&=\frac{9\tau_0}{\sigma_*}(\phi (\mathcal{\widehat L}_{\eta}(\mathbf{\widetilde U}^{k_0})-\mathcal{\widehat L}_{\eta}(\mathbf{\widetilde U}^{*}))-\phi (\mathcal{\widehat L}_{\eta}(\mathbf{\widetilde U}^{K+1})-\mathcal{\widehat L}_{\eta}(\mathbf{\widetilde U}^{*})))  \nonumber
\\&\overset{\text{(i)}}<\frac{9\tau_0}{\sigma_*}\phi (\mathcal{\widehat L}_{\eta}(\mathbf{\widetilde U}^{k_0})-\mathcal{\widehat L}_{\eta}(\mathbf{\widetilde U}^{*}))
\end{align}
where (i) follows from the fact that $r>\mathcal{\widehat L}_{\eta}(\mathbf{\widetilde U}^{K+1})-\mathcal{\widehat L}_{\eta}(\mathbf{\widetilde U}^{*})\geq 0$ and $\phi(\mathbf{x})\geq 0$ for $0<\mathbf{x}<r$ (Definition of function $\phi$ before). Combining~\eqref{disoos},~\eqref{dssadasd} and letting $K\rightarrow\infty$, we obtain
\begin{align*}
&\sum_{k=0}^\infty(\|\mathbf{ P}^{k+1}-\mathbf{ P}^k\|_F+\|\mathbf{ Q}^{k+1}-\mathbf{ Q}^k\|_F+\|\mathbf{X}^{k+1}-\mathbf{X}^k\|_F\nonumber
\\&+\|\mathbf{Y}^{k+1}-\mathbf{Y}^k\|_F)<\infty,
\end{align*} 
which, in conjunction with Lemma~\ref{le:lambda}, also implies that $\sum_{k=0}^\infty (\|\mathbf{\Lambda}^{k+1}-\mathbf{\Lambda}^k\|_F+\|\mathbf{V}^{k+1}-\mathbf{V}^k\|_F)<\infty$. Combining the above two facts, we have $\sum_{k=0}^\infty \|\mathbf{U}^{k+1}-\mathbf{U}^k\|_F<\infty$, which means that the sequence $(\mathbf{U}^k)$ is convergent. 
 

\subsection{Proof of Theorem~\ref{th:inexact}}\label{apen:D}
\begin{proof}
Note that $\mathbf{\widehat X}^{k+1}$ can be regarded as the minimizer of the following subproblem
\begin{align}\label{hatX}
\mathbf{\widehat X}^{k+1}=&\arg \min\limits_{\mathbf{X}}\left (f(\text{vec}(\mathbf{X}^T))-(\mathbf{t}_x^{k+1})^T\text{vec}(\mathbf{X}^T)\right),
\end{align}
where the function $f(\cdot)$ is given by~(\ref{cg}). 
Then, using a similar approach to~(\ref{eq:Lam}), we can obtain
\begin{align}\label{hat:x}
\text{vec}(\mathbf{\Lambda}^{k+1}\mathbf{E}_x^T)=&(\mathbf{G}_y^k+(\alpha+1)\mathbf{I}_{nd})\text{vec}((\mathbf{\widehat X}^{k+1})^T)-(\mathbf{b}_y^k)^T\nonumber
\\&-\mathbf{t}_x^{k+1}-\text{vec}((\mathbf{\widehat X}^{k})^T).
\end{align}
Similarly to~(\ref{le:lam}) and based on~(\ref{hat:x}), we can obtain
\begin{align}\label{inexact:Lamb}
& \|\mathbf{\Lambda}^{k+1}-\mathbf{\Lambda}^k\|_F^2 \nonumber
\\\leq&\frac{2}{\sigma_x}\|\mathbf{\widehat X}^k-\mathbf{\widehat X}^{k-1}\|_F^2+\frac{2(M_x+1)L_y}{\sigma_x}\|\mathbf{Y}^k-\mathbf{Y}^{k-1}\|_F^2\nonumber
\\&+\frac{2L_y+2(\alpha+1)^2}{\sigma_x}\|\mathbf{\widehat X}^{k+1}-\mathbf{\widehat X}^k\|_F^2\nonumber
\\&+\frac{2}{\sigma_x}\left(\|\mathbf{t}_x^{k}\|_2^2+\|\mathbf{t}_x^{k+1}\|_2^2\right).
\end{align}
A symmetric result holds for $\|\mathbf{V}^{k+1}-\mathbf{V}^k\|_F^2$. Using a similar approach to~(\ref{sum:2}) and recalling~(\ref{hatX}), we have
\begin{align}\label{inexact:L}
\mathcal{L}_{\eta}& (\mathbf{ P}^{k+1},\mathbf{ Q}^{k+1},\mathbf{\widehat X}^{k+1},\mathbf{\widehat Y}^{k},\mathbf{\Lambda}^k,\mathbf{V}^k) \nonumber
\\&-\mathcal{L}_{\eta}\left(\mathbf{ P}^{k+1},\mathbf{ Q}^{k+1},\mathbf{X}^k,\mathbf{Y}^k,\mathbf{\Lambda}^k,\mathbf{V}^k\right)\nonumber
\\\leq&-\frac{1}{2}\|\mathbf{\widehat X}^{k+1}-\mathbf{\widehat X}^k\|_F^2+\mathbf{t}_x^{k+1}(\text{vec}(\mathbf{\widehat X}^{k+1})-\text{vec}(\mathbf{\widehat X}^{k}))\nonumber
\\\overset{(i)}\leq&-\frac{1}{4}\|\mathbf{\widehat X}^{k+1}-\mathbf{\widehat X}^{k}\|_F^2+\frac{1}{2}\|\mathbf{t}_x^{k+1}\|_2^2,
\end{align}
where $(i)$ follows from the fact that $\|\mathbf{A+B}\|_F^2\leq 2(\|\mathbf{A}\|_F^2+\|\mathbf{B}\|_F^2)$. A symmetrical result holds for $\mathbf{\widehat Y}^{k+1}$. 
Next, we show that the sequence $\big(\mathbf{\widehat U}^k\big)$ is bounded. The boundedness of $\big(\mathbf{\widehat X}^k,\mathbf{\widehat Y}^k\big)$ follows from Property \ref{bound}.  Similar to (\ref{vbound}), using (\ref{hat:x}) and Property \ref{PD} yields the boundedness of $\left(\mathbf{\Lambda}^k,\mathbf{V}^k\right)$, which, in conjunction with the last two equalities in~(\ref{modified_ADMM}), yields the boundedness of $\left(\mathbf{ P}^k,\mathbf{ Q}^k\right)$.
These facts imply  the boundedness of the sequence of $\big(\mathbf{\widehat U}^k\big)$. 

Since the function $\mathcal{\widehat L}_{\eta}$ is continuous and coercive, by the boundedness of $\left(\mathbf{\widehat U}^k\right)$, we have, there exists $M_{l}>0$ such that $\left|\mathcal{\widehat L}_{\eta}(\mathbf{\widehat U}^k)\right|<M_l$ for $\forall$ $k$. 
Combining~(\ref{inexact:L}), Lemma~\ref{le:pq} and Lemma~\ref{le:xyz}, we have
\begin{align}\label{diaosssi}
\Big(\sigma_*&-\frac{1}{4}\Big)\sum_{k=1}^{n}\Big(\|\mathbf{ P}^{k+1}-\mathbf{ P}^k\|_F^2+\|\mathbf{ Q}^{k+1}-\mathbf{ Q}^k\|_F^2 \nonumber
\\&+\|\mathbf{\widehat X}^{k+1}-\mathbf{\widehat X}^k\|_F^2+\|\mathbf{\widehat Y}^{k+1}-\mathbf{\widehat Y}^k\|_F^2\Big)\nonumber
\\ \leq&\mathcal{\widehat L}_{\eta}\left(\mathbf{\widetilde U}^{1}\right)-\mathcal{\widehat L}_{\eta}\big(\mathbf{\widetilde U}^{n+1}\big)+\frac{1}{2}\sum_{k=1}^{n+1}\big(\|\mathbf{t}_x^k\|_2^2+\|\mathbf{t}_y^k\|_2^2\big)<\infty
\end{align}
which, combined with~(\ref{notation}),~(\ref{func}) and $\eta>2\eta_1$, implies $\sigma_*>1/4$ and hence  
\begin{align}\label{ineq:hat}
\sum_{k=1}^{\infty}&\big(\|\mathbf{ P}^{k+1}-\mathbf{ P}^{k}\|_F^2+\|\mathbf{ Q}^{k+1}-\mathbf{ Q}^k\|_F^2+\|\mathbf{\widehat X}^{k+1}-\mathbf{\widehat X}^k\|_F^2 \nonumber
\\&+\|\mathbf{\widehat Y}^{k+1}-\mathbf{\widehat Y}^k\|_F^2\big)<\infty,
\end{align}
Combining~(\ref{inexact:Lamb}),~\eqref{ineq:hat} and  the fact that $\sum_{k=1}^{\infty}\|\mathbf{t}_x^k\|_2^2<\infty$ implies
$\sum_{k=1}^{\infty}\|\mathbf{\Lambda}^{k+1}-\mathbf{\Lambda}^k\|_F^2<\infty$. 
Similarly, we can obtain $\sum_{k=1}^{\infty}\|\mathbf{V}^{k+1}-\mathbf{V}^k\|_F^2<\infty$. Based on the above results, we conclude that  $\lim_{k\rightarrow\infty}\|\mathbf{\widehat U}^{k+1}-\mathbf{\widehat U}^k\|_F^2=0$. 
Let $\mathbf{\widehat U}^*=(\mathbf{ P^*, Q^*,\widehat X^*,\widehat Y^*,\Lambda^*,V^*})$ be a limit point of $\{\mathbf{\widehat U}^k\}$. 
Using steps similar to those in the proof of Theorem~\ref{th:exact} and using $\|\mathbf{t}_x^{k+1}\|_2\rightarrow0$ we have $\mathbf{0}\in\partial\mathcal{L}_{\eta}(\mathbf{\widehat U}^*)$ and $\mathbf{\widehat U}^*$ is a stationary point of $\mathcal{L}_{\eta}$.

Using an approach similar to~\eqref{muops}, we obtain, there exists a positive constant such that 
\begin{align}
&\text{dist}(0,\partial\mathcal{\widehat L}_{\eta}(\mathbf{\widetilde U}^{k}))\leq \gamma_0 \big( \|\mathbf{X}^{k}-\mathbf{X}^{k-1}\|_F+\|\mathbf{Y}^{k}-\mathbf{Y}^{k-1}\|_F \nonumber
\\&+\|\mathbf{ P}^{k}-\mathbf{ P}^{k-1}\|_F+\|\mathbf{ Q}^{k}-\mathbf{ Q}^{k-1}\|_F+\|\mathbf{X}^{k-1}-\mathbf{X}^{k-2}\|_F\nonumber
\\&+\|\mathbf{Y}^{k-1}-\mathbf{Y}^{k-2}\|_F+\|\mathbf{t}_x^{k}\|_2+\|\mathbf{t}_y^{k}\|_2\big),
\end{align}
which, in conjunction with~\eqref{diaosi} and~\eqref{diaosssi} and using an approach similar to~\eqref{dssadasd}, yields 
\begin{small}
\begin{align*}
&\sum_{k=k_0}^K(2\|\mathbf{ P}^{k+1}-\mathbf{ P}^k\|_F+2\|\mathbf{ Q}^{k+1}-\mathbf{ Q}^k\|_F+\|\mathbf{X}^{k+1}-\mathbf{X}^k\|_F\nonumber
\\&+\|\mathbf{Y}^{k+1}-\mathbf{Y}^k\|_F)\leq \frac{9\gamma_0 \phi (\mathcal{\widehat L}_{\eta}(\mathbf{\widetilde U}^{k_0})-\mathcal{\widehat L}_{\eta}(\mathbf{\widetilde U}^{*}))
}{\sigma_*-1/4}\nonumber
\\&+ 2 \|\mathbf{X}^{k_0}-\mathbf{X}^{k_0-1}\|_F+\|\mathbf{X}^{k_0-1}-\mathbf{X}^{k_0-2}\|_F +\|\mathbf{ P}^{k_0}-\mathbf{ P}^{k_0-1}\|_F\nonumber
\\&+\|\mathbf{ Q}^{k_0}-\mathbf{ Q}^{k_0-1}\|_F+2\|\mathbf{Y}^{k_0}-\mathbf{Y}^{k_0-1}\|_F+\|\mathbf{Y}^{k_0-1}-\mathbf{Y}^{k_0-2}\|_F, \nonumber
\\&+\sum_{k=k_0}^K (\|\mathbf{t}_x^{k}\|_2+\|\mathbf{t}_y^{k}\|_2).
\end{align*}
\end{small}
\hspace{-0.12cm}Leting $K\rightarrow\infty$ in the above inequality and using the fact that $\sum_{k=0}^\infty (\|\mathbf{t}_x^{k}\|_2+\|\mathbf{t}_y^{k}\|_2)<\infty$ yields
\begin{align}
\sum_{k=1}^{\infty}&\big(\|\mathbf{ P}^{k+1}-\mathbf{ P}^{k}\|_F+\|\mathbf{ Q}^{k+1}-\mathbf{ Q}^k\|_F+\|\mathbf{\widehat X}^{k+1}-\mathbf{\widehat X}^k\|_F \nonumber
\\&+\|\mathbf{\widehat Y}^{k+1}-\mathbf{\widehat Y}^k\|_F\big)<\infty,
\end{align} 
which implies that $\sum_{k=1}^{\infty}\|\mathbf{\widehat U}^{k+1}-\mathbf{\widehat U}^{k}\|_F<\infty $.

In the meanwhile, if the sequence $(\|\mathbf{\widehat U}^{k+1}-\mathbf{\widehat U}^{k}\|_F)$ is non-increasing,
then we have 
\begin{align}
2k\|\mathbf{\widehat U}^{2k+1}-\mathbf{\widehat U}^{2k}\|_F\leq 2\sum_{i=k+1}^{2k}\|\mathbf{\widehat U}^{i+1}-\mathbf{\widehat U}^{i}\|_F\rightarrow 0\nonumber
\end{align} 
as $k\rightarrow\infty$. Thus, we have $\|\mathbf{\widehat U}^{k+1}-\mathbf{\widehat U}^{k}\|_F=o(1/k)$, which in conjunction with an argument similar to Proposition 2 in~\cite{wang2015global}, implies that  the best running convergence rate of Algorithm~\ref{alg:ours} is $o(1/k)$.
\end{proof}
 
\subsection{Proof of Proposition~\ref{cl1}}\label{apen:E}
For each $i$, there exists an integer $s_i\leq k_x$ such that $i\in\mathcal{G}_{s_i}^x$. Using the definition~(\ref{mathX}), we have $\|\mathbf{x}_j-\mathbf{x}_i\|_2<2C_x/K$ for any $j\in\mathcal{G}_{s_i}^x$ and $\|\mathbf{x}_j-\mathbf{x}_i\|_2\geq 2C_x/K$ for any $j\in(\mathcal{G}^x_{s_i})^c=\bigcup_{1\leq t\leq k_x,\, t\neq s_i} \mathcal{G}_{t}^x$.  Recalling the definition of $K_i$, we have $|(\mathcal{G}^x_{s_i})^c |=n-K_i$.
These facts imply 
\begin{align}
\sum_{j=1}^{n} \frac{\Upsilon(\mathbf{x}_i-\mathbf{x}_j)}{K_i(n-K_i)}&=\left(\sum_{j\in\mathcal{G}_{s_i}^x} +\sum_{j\in(\mathcal{G}_{s_i}^x)^c} \right)\frac{\Upsilon(\mathbf{x}_i-\mathbf{x}_j)}{K_i(n-K_i)}\nonumber
\\&=\sum_{j\in(\mathcal{G}_{s_i}^x)^c} {(K_i(n-K_i))^{-1}}= K_i^{-1}, \nonumber
\end{align}
which implies
\begin{align}\label{kkj}
\sum_{i,j} w_{ij}\Upsilon(\mathbf{x}_i-\mathbf{x}_j)=  \sum_{i=1}^n K_i^{-1}=k_x =|\mathcal{G}(\mathbf{X})|
\end{align}
where the last equality follows from the fact that $K_i=K_j$ for any $i,j\in\mathcal{G}_s^x,1\leq s\leq k_x$.  

\subsection{Proof of Theorem~\ref{th3}}\label{apen:F}
Let $p(x)$ and $q(x)$ denote probability density functions (pdf) of a random variable. Recall the Kullback-Leibler divergence $\textup{D}(p\|q)\overset{\Delta}{=}\mathbb{E}_p\big[\log (p/q)\big]$ and the Hellinger affinity $\textup{A}(p,q)\overset{\Delta}{=}\mathbb{E}_p\big[ \sqrt {q/p}\big ]$.
We first quote the following lemma from~\cite[Lemma A.1]{soni2016noisy}.
\begin{lemma}\label{le:a1}
Let $\mathbf{M}^*$ be an $n\times m$ target matrix and let $\mathcal{H}$ be a finite collection of candidate estimates $\mathbf{H}$ of $\,\mathbf{M}^*$, each of which has a penalty $\textup{pen}(\mathbf{H})\geq 1$ such that $\sum_{\mathbf{H}\in\mathcal{H}}2^{-\textup{pen}(\mathbf{H})}\leq 1$. 
 Assume each $(i,j)\in[n]\times[m]$ is included in the observed index set $\Omega$ with probability $\gamma=s/(nm)$ for certain integer $s<nm$ and  the joint pdf of the observations $\Phi_{\Omega}(\mathbf{M})$ follows $ p_{\mathbf{M}^*}(\Phi_{\Omega}(\mathbf{M}))=\prod_{(i,j)\in\Omega}\,p_{M^*_{ij}}(M_{ij})$, which are  independent conditioned on $\Omega$. 
Given a constant $C_D$ satisfying  
\begin{align}\label{cdcd1}
C_D\geq\max_{\mathbf{H}\in\mathcal{H}}\;\max_{(i,j)\in[n]\times[m]}\textup{D}(p_{M^*_{ij}}(M_{ij})\|p_{H_{ij}}(M_{ij})),
\end{align} 
we have that for any $\xi\geq \left(1+2C_D/3\right)2\log 2$, the complexity regularized maximum likelihood estimator 
\begin{align}
\mathbf{\widehat M}(\Omega,\Phi_{\Omega}(\mathbf{M}))=\arg\min_{\mathbf{H}\in\mathcal{H}}\left\{ -\log  p_{\mathbf{H}}(\Phi_{\Omega}(\mathbf{M})) +\xi\textup{pen}(\mathbf{H})\right\} \nonumber
\end{align}
satisfies the estimation error bound
\begin{align}
\mathbb{E}&\left[-2\log\textup{A}(p_{\mathbf{\widehat M}}(\mathbf{M}),p_{\mathbf{M^*}}(\mathbf{M}) )\right]/(nm)  \nonumber
\\\leq& 8C_D\log s/s+3\min_{\mathbf{H}\in\mathcal{H}}\big\{ \textup{D}(p_{\mathbf{M^*}}(\mathbf{M})\|p_{\mathbf{H}}(\mathbf{M}))/(nm)\nonumber
\\&+\left(\xi+4C_D\log 2/3\right)\textup{pen}(\mathbf{H})/s\big\},
\end{align} 
where the expectation is with respect to the joint distribution of $\,\left(\Omega,\Phi_{\Omega}(\mathbf{M})\right)$ and the shorthands 
\begin{small}
\begin{align}
\textup{A}(p_{\mathbf{\widehat M}}(\mathbf{M}),p_{\mathbf{M^*}}(\mathbf{M}))&=\prod_{i,j} \textup{A}(p_{\widehat M_{ij}}(M_{ij}),p_{M^*_{ij}}(M_{ij})),\nonumber
\\ \textup{D}(p_{\mathbf{M^*}}(\mathbf{M})\|p_{\mathbf{H}}(\mathbf{M}))&=\sum_{i,j}\textup{D}(p_{M_{ij}^*}(M_{ij})\|p_{H_{ij}}(M_{ij})).
\end{align}
\end{small}
\end{lemma}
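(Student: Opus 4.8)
The plan is to prove this as a complexity-penalized (resolvability) oracle inequality in the spirit of the Li--Barron and Kolaczyk--Nowak theory, adapted to the random-sampling model. Three ingredients will be combined: (i) the deterministic \emph{basic inequality} coming from the defining optimality of the penalized estimator $\mathbf{\widehat M}$; (ii) an exponential-moment identity in which the Hellinger affinity appears precisely as the normalizing constant that makes a likelihood-ratio statistic have unit mean, so that the Kraft-type condition $\sum_{\mathbf H}2^{-\textup{pen}(\mathbf H)}\le 1$ controls the union over the finite class $\mathcal H$ at only logarithmic cost; and (iii) a Bernstein/Bennett-type control of the random sampling, for which the per-entry bound $C_D\ge\max_{\mathbf H,i,j}\textup{D}(p_{M^*_{ij}}\|p_{H_{ij}})$ is exactly what is needed to bound the range of the summands.

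First I would record the basic inequality: by definition of $\mathbf{\widehat M}$, for every competitor $\mathbf H\in\mathcal H$,
\[
\log\frac{p_{\mathbf M^*}(\Phi_{\Omega}(\mathbf M))}{p_{\mathbf{\widehat M}}(\Phi_{\Omega}(\mathbf M))}+\xi\,\textup{pen}(\mathbf{\widehat M})\le \log\frac{p_{\mathbf M^*}(\Phi_{\Omega}(\mathbf M))}{p_{\mathbf H}(\Phi_{\Omega}(\mathbf M))}+\xi\,\textup{pen}(\mathbf H).
\]
To expose the mechanism in the fully-observed case, note the per-entry identity $\mathbb E_{p_{M^*_{ij}}}\big[(p_{H_{ij}}/p_{M^*_{ij}})^{1/2}\big]=\textup{A}(p_{H_{ij}},p_{M^*_{ij}})$, so that $\textup{A}(p_{\mathbf H},p_{\mathbf M^*})^{-1}(p_{\mathbf H}/p_{\mathbf M^*})^{1/2}$ has mean one. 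Weighting by the Kraft weights $2^{-\textup{pen}(\mathbf H)}$, summing over $\mathcal H$, and dominating the sum by its selected term yields
\[
\mathbb E\Big[2^{-\textup{pen}(\mathbf{\widehat M})}\,\textup{A}(p_{\mathbf{\widehat M}},p_{\mathbf M^*})^{-1}\big(p_{\mathbf{\widehat M}}/p_{\mathbf M^*}\big)^{1/2}\Big]\le\sum_{\mathbf H}2^{-\textup{pen}(\mathbf H)}\le 1.
\]
Taking $-\log$, applying Jensen, and then substituting the basic inequality, the term in $\textup{pen}(\mathbf{\widehat M})$ carries coefficient $\log 2-\xi/2$; the requirement $\xi\ge 2\log 2$ makes this coefficient nonpositive, so it may be dropped (using $\textup{pen}\ge 1>0$), leaving $\mathbb E[-2\log\textup{A}(p_{\mathbf{\widehat M}},p_{\mathbf M^*})]\le \textup{D}(p_{\mathbf M^*}\|p_{\mathbf H})+\xi\,\textup{pen}(\mathbf H)$, which already matches the \textbf{bias $+$ complexity} structure of the claimed right-hand side.

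Next I would insert the random sampling. Conditionally on $\Omega$ the same chain runs over the observed entries only; taking expectation over the Bernoulli$(\gamma)$ sampling turns the observed-entry affinity into $\prod_{i,j}\big(1-\gamma(1-\textup{A}_{ij})\big)$, which introduces the $1/\gamma=nm/s$ scaling responsible for the $\cdot/s$ factors and for the appearance of the full-matrix affinity on the left. The exact unit-mean property is lost once $\Omega$ is random, so I would restore a one-sided bound by a Bennett/Bernstein argument in which the per-entry log-likelihood ratios have range and variance governed by $C_D$; this is precisely where the hypothesis $C_D\ge\max\textup{D}$ and the strengthened condition $\xi\ge(1+2C_D/3)\,2\log 2$ are consumed, and it is what produces the multiplicative factor $3$ and the additive $4C_D\log 2/3$ inside the minimum.

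The step I expect to be the main obstacle is the sampling-deficiency correction that contributes the term $8C_D\log s/s$. On the low-probability event that $|\Omega|$ falls substantially below its mean $s$, the penalized estimator cannot be controlled through the likelihood, but each per-entry Kullback--Leibler discrepancy is still at most $C_D$; converting this into an expectation bound requires a Chernoff bound on $|\Omega|$ together with an estimate of the form $\mathbb E[1/\max(|\Omega|,1)]\lesssim \log s/s$, and then reconciling this contribution with the affinity-based bound obtained on the typical event. Carrying all of these pieces with mutually consistent constants--so that the final $8C_D\log s/s$, the factor $3$, and the $(\xi+4C_D\log 2/3)$ coefficient emerge exactly--is the delicate bookkeeping at the heart of the argument; the conceptual content, by contrast, is entirely carried by the affinity-as-normalizer identity and the Kraft inequality.
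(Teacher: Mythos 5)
You should know first that the paper itself does not prove this lemma: it is quoted verbatim from \cite[Lemma~A.1]{soni2016noisy}, so the benchmark is the proof in that reference, which follows exactly the lineage you identify --- a Li--Barron/Kolaczyk--Nowak resolvability argument (basic inequality from optimality, unit-mean affinity statistic weighted by the Kraft weights $2^{-\mathrm{pen}(\mathbf H)}$) combined with a Craig--Bernstein concentration step that consumes $C_D$ and the condition $\xi\ge(1+2C_D/3)\,2\log 2$. Your skeleton is therefore the right strategy, and your fully-observed warm-up is essentially the actual core of the cited proof.

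There are, however, two genuine gaps in how you handle the random sampling, which is where all the lemma-specific content lives. First, the identity $\mathbb E_\Omega\bigl[\prod_{(i,j)\in\Omega}\mathrm A_{ij}\bigr]=\prod_{i,j}\bigl(1-\gamma(1-\mathrm A_{ij})\bigr)$ holds only for a \emph{fixed} candidate $\mathbf H$ and cannot be applied to $\widehat{\mathbf M}$, which depends on $\Omega$; moreover the unit-mean property is not ``lost once $\Omega$ is random'' --- conditionally on $\Omega$ the observations remain an independent product, so the resolvability step runs exactly, conditionally, and bounds the $\Omega$-restricted affinity. The real work, which your sketch defers, is the uniform-over-$\mathcal H$ transfer from $\Omega$-restricted to full-matrix quantities: one needs, simultaneously for all $\mathbf H\in\mathcal H$ with Kraft-weighted confidence levels, that $\gamma\sum_{i,j}\bigl(-2\log\mathrm A_{ij}(\mathbf H)\bigr)$ is controlled by $\sum_{(i,j)\in\Omega}\bigl(-2\log\mathrm A_{ij}(\mathbf H)\bigr)$ up to a multiplicative inflation (the origin of the factor $3$) and an additive term proportional to $C_D\,\mathrm{pen}(\mathbf H)$ (the origin of $4C_D\log 2/3$); the summands lie in $[0,C_D]$ because $-2\log\mathrm A\le \mathrm D\le C_D$. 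Second, your proposed mechanism for the $8C_D\log s/s$ term --- a Chernoff bound on $|\Omega|$ plus $\mathbb E[1/\max(|\Omega|,1)]\lesssim \log s/s$ --- is not how that term arises and would not obviously yield it; in the actual argument the $\log s$ comes from tuning the confidence parameter $t\asymp\log s$ in the concentration step so that the failure event, on which one falls back on the crude per-entry bound $C_D$, contributes only $O(C_D/s)$ in expectation. Since these are precisely the steps you flag as the ``delicate bookkeeping at the heart of the argument,'' your proposal stands as a faithful outline of the strategy in \cite{soni2016noisy} rather than a proof; completing it would essentially reproduce that appendix.
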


In order to apply this lemma, we need to construct penalties $\textup{pen}(\mathbf{H})$ for each $\mathbf{H}\in\mathcal{H}$ defined in~(\ref{mathM}) such that $\sum_{\mathbf{H}\in\mathcal{H}}2^{-\textup{pen}(\mathbf{H})}\leq 1$.
Let $\mathcal{H}_1\overset{\Delta}{=}\big\{\mathbf{H=XY}^T:\mathbf{X}\in\mathcal{X},\mathbf{Y}\in\mathcal{Y}\big\}$ with $\mathcal{X}$ and $\mathcal{Y}$ defined in~(\ref{mathX}). Based on~\cite{cover2012elements}, the inequality 
$\sum_{\mathbf{H}\in\mathcal{H}_1}2^{-\textup{pen}(\mathbf{H})}$$\leq 1$ can be satisfied  by choosing $\textup{pen}(\mathbf{H})$ to be the code length of some uniquely decodable binary code. Thus, we use the following three steps to design $\textup{pen}(\mathbf{H})$. Recall $K=2^{\lceil \mu\log_2( n \vee m )\rceil}$ and let $n_0= n \vee m $. 
\begin{enumerate} 
\item First encode the number of subgroups of $\{\mathbf{x}_i,1\leq i\leq n\}$, i.e. $|\mathcal{G}(\mathbf{X})|$, with $\lceil\log_2(n)\rceil$ bits. Similarly, encode $|\mathcal\mathcal{G}(\mathbf{Y})|$ with $\lceil\log_2(m)\rceil$ bits. 
\item For each $\mathbf{x}_i,1\leq i\leq n$, encode the index of the subgroup that $\mathbf{x}_i$ belongs to with $\lceil\log_2(n)\rceil$ bits. Similarly, for each $\mathbf{y}_j,1\leq j\leq m$, encode the index of the subgroup that $\mathbf{y}_j$ belongs to with $\lceil\log_2(m)\rceil$ bits. Thus, the total number of bits used in this step is $n\lceil\log_2(n)\rceil+m\lceil\log_2(m)\rceil$.
\item Finally, using~(\ref{mathX}), we encode $X_{ij}$ with $\log_2(K)$ bits. Similarly, encode $Y_{ij}$ with $\log_2(K)$ bits. Since all row vectors in a subgroup are identical, the total number of bits used in this step is $d\log_2(K)(|\mathcal{G}(\mathbf{X})|+|\mathcal{G}(\mathbf{Y})|)$.
\item We assign each $\mathbf{H}\in\mathcal{H}_1$  with a code that is concatenation of the code for $\mathbf{X}$ followed by  the code for $\mathbf{Y}$. Thus, the penalty 
\begin{align}\label{penM}
\textup{pen}(\mathbf{H})=&(n+1)\lceil\log_2(n)\rceil+(m+1)\lceil\log_2(m)\rceil\nonumber
\\&+d\log_2(K)(|\mathcal{G}(\mathbf{X})|+|\mathcal{G}(\mathbf{Y})|). 
\end{align}
Since such codes are uniquely decodable, we can obtain that  $\sum_{\mathbf{H}\in\mathcal{H}_1}2^{-\textup{pen}(\mathbf{H})}\leq 1$.
\end{enumerate}
Since $\mathcal{H}\subseteq\mathcal{H}_1$,   $\sum_{\mathbf{H}\in\mathcal{H}}$$2^{-\textup{pen}(\mathbf{H})}\leq\sum_{\mathbf{H}\in\mathcal{H}_1}2^{-\textup{pen}(\mathbf{H})}\leq 1$.
Then, replacing $\textup{pen}(\mathbf{H})$ in Lemma \ref{le:a1} by~(\ref{penM}) implies 
\begin{align}\label{mhat}
\mathbf{\widehat M}=\arg\min_{\mathbf{H}\in\mathcal{H}}&\Big\{ -\log  p_{\mathbf{H}}(\Phi_{\Omega}(\mathbf{M}))\nonumber
  \\& +\xi d\log_2(K)(|\mathcal{G}(\mathbf{X})|+|\mathcal{G}(\mathbf{Y})|)\Big\},
\end{align}
which, based on the fact that  $\max\{\lceil\log_2(n)\rceil,\lceil\log_2(m)\rceil\}\leq\log_2(K)=\lceil\mu\log_2(n_0)\rceil$, satisfies
\small
\begin{align}\label{ehat}
\mathbb{E}&\left[-2\log\textup{A}(p_{\mathbf{\widehat M}}(\mathbf{M}),p_{\mathbf{M^*}}(\mathbf{M}) )\right]/(nm)\nonumber
\\ \leq &\frac{8C_D\log s}{s}+3\min_{\mathbf{H}\in\mathcal{H}}\bigg\{ \frac{\textup{D}(p_{\mathbf{M}^*}(\mathbf{M})\|p_{\mathbf{H}}(\mathbf{M}))}{nm}+\left(\xi+\frac{4C_D\log 2}{3}\right)\nonumber
\\&d\log_2(K)\times\frac{(m+n+2)d^{-1}+|\mathcal{G}(\mathbf{X})|+|\mathcal{G}(\mathbf{Y})|}{s}\bigg\}.
\end{align}
\normalsize
 \hspace{-0.13cm}Recall that $\xi\geq \left(1+2C_D/3\right)2\log 2$.  Using the fact that  $\log_2(K)=\lceil\mu \log_2 (n_0)\rceil\leq 2\mu\log(n_0)/\log 2$ and letting $\lambda=2\sigma^2\xi d\log_2(K)$, for any
\begin{align}\label{lamlam}
\lambda\geq 8\sigma^2\mu d  \left(1+2C_D/3\right) \log (n_0), 
\end{align}
the estimate of
\begin{align}\label{emiemi}
\mathbf{\widehat M}(\Omega,\Phi_{\Omega}(\mathbf{M}))=\arg&\min_{\mathbf{H}\in\mathcal{H}}\Big\{ -\log  p_{\mathbf{H}}(\Phi_{\Omega}(\mathbf{M})) \nonumber
\\&+\frac{\lambda}{2\sigma^2}(|\mathcal{G}(\mathbf{X})|+|\mathcal{G}(\mathbf{Y})|)\Big\}
\end{align}
satisfies 
\begin{small}
\begin{align}\label{eeel}
&\mathbb{E}\left[-2\log\textup{A}(p_{\widehat M}(\mathbf{M}),p_{\mathbf{M^*}}(\mathbf{M}) )\right]/(nm) \nonumber
\\&\leq 8C_D\frac{\log s}{s}+3\min_{\mathbf{H}\in\mathcal{H}}\Big\{ \textup{D}(p_{\mathbf{M}^*}(\mathbf{M})\|p_{\mathbf{H}}(\mathbf{M}))/(nm)+\bigg(\frac{\lambda}{2\sigma^2}+ \nonumber
\\&\frac{8\mu d C_D\log(n_0)}{3}\bigg)\times\frac{(m+n+2)/d+|\mathcal{G}(\mathbf{X})|+|\mathcal{G}(\mathbf{Y})|}{s}\Big\}.
\end{align}
\end{small}
Recall that the noise matrix $\mathbf{N}$ contains i.i.d. $\mathcal{\mathcal{N}}(0,\sigma^2)$ elements and the pdf of the observations $\Phi_{\Omega}(\mathbf{M})$ is
\begin{align*}
p_{\mathbf{H}}(\Phi_{\Omega}(\mathbf{M}))=\frac{1}{(2\pi\sigma^2)^{|\Omega|/2}}\exp\left(-\frac{\|\Phi_{\Omega}(\mathbf{M-H})\|^2_F}{2\sigma^2}\right),
\end{align*} 
we have \eqref{esjky} is of the same form as \eqref{emiemi}. Furthermore, we have
\begin{align}\label{dmp}
\textup{D}&(p_{M^*_{ij}}(M_{ij})\|p_{H_{ij}}(M_{ij})) \nonumber
\\=&\int_{-\infty}^{\infty}\frac{H^2_{ij}-(M^*_{ij})^2+2(M^*_{ij}-H_{ij})x}{2\sigma^2} p_{M^*_{ij}}(x) dx\nonumber
\\=&\frac{1}{2\sigma^2}\left(H^2_{ij}-(M^*_{ij})^2+2(M^*_{ij}-H_{ij})\int_{-\infty}^{\infty}xp_{M^*_{ij}}(x)dx\right)\nonumber
\\\overset{(i)}=&2\sigma^{-2}\left(H_{ij}-M^*_{ij}\right)^2,
\end{align}
where $(i)$ follows from the fact that $\int_{-\infty}^{\infty}xp_{M^*_{ij}}(x)dx=M_{ij}^*$. Recall~(\ref{cdcd1}) and note that $\|\mathbf{H}\|_{\infty}\leq C_m$.  Then picking $C_D=2C_m^2/\sigma^2$ in (\ref{lamlam}), we have   
\begin{align}\label{lamlam2}
\lambda\geq 8\mu d  \left(\sigma^2+4C_m^2/3\right) \log (n_0).
\end{align}
 Using a similar approach, we obtain
\begin{align}
-\log\textup{A}(p_{M^*_{ij}}(M_{ij}),p_{H_{ij}}(M_{ij}))=\left(H_{ij}-M^*_{ij}\right)^2/(8\sigma^2). \nonumber
\end{align}
which, in conjunction with~(\ref{dmp}) and (\ref{eeel}), implies
\begin{small}
\begin{align}
\mathbb{E}&\left[\|\mathbf{\widehat M-M^*}\|_F^2\right]/(nm)\nonumber
\\\leq& \frac{64C_m^2\log s}{s}+6\min_{\mathbf{H}\in\mathcal{H}}\bigg\{ \frac{\|\mathbf{H-M^*}\|_F^2}{nm}+\left(\lambda+\frac{32\mu d C_m^2\log(n_0)}{3}\right)\nonumber
\\&\times\frac{(m+n+2)d^{-1}+|\mathcal{G}(\mathbf{X})|+|\mathcal{G}(\mathbf{Y})|}{s}\bigg\}.
\end{align}
\end{small}
\hspace{-0.15cm}Next, we prove an upper bound for $\min_{\mathbf{H}\in\mathcal{H}}\| \mathbf{H-M^*}\|_F^2$. Let $\mathbf{H}_0=\mathbf{X}_0\mathbf{Y}^T_0\in\mathcal{M}$ be a candidate reconstruction such that the  entries of $\mathbf{X}_0$ and $\mathbf{Y}_0$ are the closest discretized surrogates  of the entries of $\mathbf{X}^*$ and $\mathbf{Y}^*$, respectively. Recalling~(\ref{mathM}) and~(\ref{mathX}), we have, $\|\mathbf{X}_0-\mathbf{X}^*\|_F^2\leq 4dnC^2_x/K^2$ and $\|\mathbf{Y}_0-\mathbf{Y}^*\|_F^2\leq 4dmC_y^2/K^2$, which implies
\begin{align}
\|\mathbf{H}_0-\mathbf{M}^*\|_F&=\|\mathbf{X}_0\mathbf{Y}^T_0-\mathbf{X}^*(\mathbf{Y}^*)^T\|_F\nonumber
\\&\leq \|\mathbf{X}_0-\mathbf{X}\|_F\|\mathbf{Y}_0\|_F+\|\mathbf{X}^*\|_F\|\mathbf{Y}_0-\mathbf{Y}^*\|_F\nonumber
\\&\leq \frac{4dC_xC_y\sqrt{nm}}{K}\leq \frac{4dC_xC_y\sqrt{nm}}{\mu\log(n_0)},  \nonumber
\end{align}
and hence
\begin{align}
\|\mathbf{H}_0-\mathbf{M}^*\|_F^2/(nm)\leq 16dC^2_xC^2_y/(\mu\log(n_0))^2,\nonumber
\end{align}
which, in conjunction with the fact that $|\mathcal{G}(\mathbf{X}_0)|+|\mathcal{G}(\mathbf{Y}_0)|=|\mathcal{G}(\mathbf{X}^*)|+|\mathcal{G}(\mathbf{Y}^*)|$, implies
\begin{align}\label{fbound}
\mathbb{E}&\left[\|\mathbf{\widehat M}-\mathbf{M}^*\|_F^2\right]/(nm) \nonumber
\\&\leq \frac{64C_m^2\log s}{s}+ \frac{96dC^2_xC^2_y}{(\mu\log(n_0))^2}+\left(6\lambda+64\mu C_m^2\log(n_0)\right)\times\nonumber
\\&\frac{(m+n+2)+(|\mathcal{G}(\mathbf{X^*})|+|\mathcal{G}(\mathbf{Y^*})|)d}{s} .
\end{align}
Letting $C_1=64C_m^2$, $C_2=96C^2_xC^2_y$, $C_3=64  C_m^2$ and using~(\ref{lamlam2}),~(\ref{fbound}) finishes the proof.

\end{document}